\numberwithin{equation}{section}
\newtheorem{theorem}{Theorem}[section]
\newtheorem{lemma}[theorem]{Lemma}
\newtheorem{corollary}[theorem]{Corollary}
\newtheorem{proposition}[theorem]{Proposition}
\theoremstyle{remark}
\theoremstyle{definition}
\theoremstyle{assumption}
\newtheorem{assumption}[theorem]{Assumption}
\newtheorem{definition}[theorem]{Definition}
\def\conditionA{Condition $A$}
\def\conditionAp{Condition $\tilde A$}
\def\conditionB{Condition $B$}
\let\hat\widehat
\let\tilde\widetilde
    \definecolor{notecolor}{RGB}{137,89,168}
    \definecolor{quotecolor}{RGB}{66,113,174}
    \definecolor{warningcolor}{RGB}{249,145,87}
    \definecolor{YaleBlue}{RGB}{0, 53, 107}
    \definecolor{YaleBlueLight}{RGB}{40,109,192}
    \definecolor{YaleBlueLighter}{RGB}{99,170,255}
    \definecolor{YaleGreen}{RGB}{95, 113, 45}
    \definecolor{YaleOrange}{RGB}{189, 83, 25}
\crefname{ineqn}{Ineq.}{Ineqs.}
\newcommand{\cf}{\emph{cf.}\!\xspace}
\newcommand{\ie}{\emph{i.e.}\xspace}
\newcommand{\iid}{i.i.d.\!\xspace}
\newcommand\transpose{^{\mathpalette\raiseT{\scriptstyle\intercal}}}
\newcommand\raiseT[2]{\raisebox{0.2ex}{$#1#2$}}
\DeclarePairedDelimiterX{\innerDelim}[2]{\langle}{\rangle}{#1, #2}
\newcommand{\reals}{{\mathbb{R}}}
\newcommand{\eps}{\varepsilon}
\newcommand{\RR}{\mathbb{R}}
\newcommand{\Rpd}{{\mathbb{R}^{p\times d}}}
\newcommand{\Rmn}{{\mathbb{R}^{m\times n}}}
\newcommand{\Rd}{{\mathbb{R}^{d}}}
\newcommand{\Rp}{{\mathbb{R}^{p}}}
\newcommand{\Rn}{{\mathbb{R}^n}}
\DeclareMathOperator{\prob}{\mathbb{P}}
  \def\command@factory#1{%
    \expandafter\def\csname cal#1\endcsname{{\mathcal{#1}}}
  }
  \def\command@factory#1{%
    \expandafter\def\csname #1#1\endcsname{{\mathbb{#1}}}
  }
\newcommand{\defeq}{\coloneqq}
\numberwithin{equation}{section}
\newcommand{\Var}{\mathbb{V}\mathrm{ar}}
\newcommand{\R}{\mathbb{R}}
\newcommand{\E}{\mathbb{E}}
\newcommand{\I}{\mathbb{I}}
\newcommand\Loss{\mathcal{L}}
\newcommand{\dfa}{\delta_\mathrm{FA}}
\newcommand{\dbp}{\delta_\mathrm{BP}}
\begin{document}

\setlength{\parskip}{0.5em}
\begin{frontmatter}
\runtitle{Convergence and Alignment}
\title{Convergence and Alignment of Gradient Descent\\ with Random Backpropagation Weights}
\begin{aug}
\vskip15pt
\address{
\vbox{
\begin{tabular}{c}
  {\normalsize\rm\bfseries Ganlin Song, Chris Xu, and John Lafferty}\\[10pt]
\end{tabular}
\vskip-2pt
\begin{tabular}{c}
Department of Statistics and Data Science\\
Yale University\\[10pt]
\end{tabular}
\\[10pt]
\today
\vskip10pt
}
}
\end{aug}
\begin{abstract}
Stochastic gradient descent with backpropagation is the workhorse of artificial neural networks. It has long been recognized that backpropagation fails to be a biologically plausible algorithm. Fundamentally, it is a non-local procedure---updating one neuron's synaptic weights requires knowledge of synaptic weights or receptive fields of downstream neurons. This limits the use of artificial neural networks as a tool for understanding the biological principles of information processing in the brain. Lillicrap et al.~(2016) propose a more biologically plausible ``feedback alignment'' algorithm that uses random and fixed backpropagation weights, and show promising simulations and analysis. In this paper we study the mathematical properties of the feedback alignment procedure by analyzing convergence and alignment for two-layer networks under squared error loss. In the overparameterized setting, we prove that the error converges to zero exponentially fast, and also that regularization is necessary in order for the  parameters to become aligned with the random backpropagation weights. Simulations are given that are consistent with this analysis and suggest further generalizations. These results contribute to our understanding of how biologically plausible algorithms might carry out weight learning in a manner different from Hebbian learning, with performance that is comparable with the full non-local backpropagation algorithm.
\end{abstract}

\end{frontmatter}

\section{Introduction}

The roots of artificial neural networks draw inspiration from networks of biological neurons \citep{pdp,elman,medler}. Grounded in simple abstractions of membrane potentials and firing, neural networks are increasingly being employed as a computational tool for better understanding the biological principles of information processing in the brain; examples include \cite{ilker1} and \cite{yamins2}. Even when full biological fidelity is not required, it can be useful to better align the computational abstraction with neuroscience principles.

Stochastic gradient descent has been a workhorse of artificial neural networks. Conveniently, calculation of gradients can be carried out using the backpropagation algorithm, where reverse mode automatic differentiation provides a powerful way of computing the derivatives for general architectures \citep{rumelhart:86}.
Yet it has long been recognized that backpropagation fails to be a biologically plausible algorithm. Fundamentally, it is a non-local procedure---updating the weight between a presynaptic and postsynaptic neuron requires knowledge of the weights between the postsynaptic neuron and other neurons. No known biological mechanism exists for propagating information in this manner. This limits the use of artificial neural networks as a tool for understanding learning in the brain.

A wide range of approaches have been explored as a potential basis for learning and synaptic plasticity. Hebbian learning is the most fundamental procedure for adjusting weights, where
repeated stimulation by a presynaptic neuron that results in the subsequent
firing of the postsynapic neuron will result in an increased strength in the connection
between the two cells \citep{hebb1,paulsen}. Several variants of Hebbian learning, some making connections to principal components analysis, have been proposed
\citep{oja,sejnowski1,sejnowski2}.
In this paper, our focus is on a formulation of \cite{lillicrap2016random} based on random backpropagation weights that are fixed during the learning process, called the ``feedback alignment" (FA) algorithm.
\cite{lillicrap2016random} show that the model can still learn from data, and observe the interesting phenomenon that the error signals propagated with the forward weights align with those propagated with fixed random backward weights during training. Direct feedback alignment (DFA) \citep{nokland2016direct} extends FA by adding skip connections to send the error signals directly to each hidden layer, allowing parallelization of weight updates. Empirical studies given by \citet{launay2020direct} show that DFA can be successfully applied to train a number of modern deep learning models, including transformers. Based on DFA, \citet{frenkel2021learning} proposes direct the random target projection (DRTP) algorithm that trains the network weights with a random projection of the target vector instead of the error, and shows alignment for linear networks.
Related proposals, including methods based on the use of differences of neuron activities, have been made in a series of recent papers \citep{akrout,bellec,lillicrap2020backpropagation}. A comparison of some of these methods is made by \cite{bartunov}.

The use of random feedback weights, which are not directly tied to the forward weights, removes issues of non-locality. However, it is not clear under what conditions optimization of error and learning can be successful.
While \mbox{\citet{lillicrap2016random}} give suggestive simulations and some analysis for the linear case, it has been an open problem to explain the behavior of this algorithm for training the weights of a neural network.
In this paper, we study the mathematical properties of the feedback alignment procedure by analyzing convergence and alignment for two-layer networks under squared error loss. In the overparameterized setting, we prove that the error converges to zero exponentially fast. We also show, unexpectedly, that the parameters become aligned with the random backpropagation weights only when regularization is used. Simulations are given that are consistent with this analysis and suggest further generalizations. The following section gives further background and an overview of our results.

\section{Problem Statement and Overview of Results}

In this section we provide a formulation of the backpropagation
algorithm to establish notation and the context for our analysis. We then
formulate the feedback aligment algorithm that uses random backpropation weights.
A high-level overview of our results is then presented, together with
some of the intuition and proof techniques behind these results; we also contrast with what was known previously.

We mainly consider two-layer neural networks in the regression setting, specified by a family of functions  $f:\Rd \to \RR$ with input dimension $d$, sample size $n$, and $p$ neurons in the hidden layer. For an input $x\in\Rd$, the network outputs
\begin{align}\label{eqn:nonlinear-network}
    f(x) = \frac{1}{\sqrt p}\sum_{r=1}^p\beta_r\psi(w_r\transpose x)= \frac{1}{\sqrt p}\beta\transpose\psi(Wx),
\end{align}
where $W = (w_1,...,w_p)\transpose\in\Rpd$ and $\beta = (\beta_1,...,\beta_p)\transpose\in\Rp$ represent the feed-forward weights in the first and second layers, and $\psi$ denotes an element-wise activation function. The scaling by $\sqrt{p}$ is simply for convenience in the analysis.

Given $n$ input-response pairs $\{(x_i,y_i)\}_{i=1}^n$, the training objective is to minimize the squared error
\begin{equation}\label{eqn:squared-loss}
    \Loss(W,\beta) = \frac{1}{2}\sum_{i=1}^n \big(y_i - f(x_i)\big)^2.
\end{equation}
Standard gradient descent attempts to minimize \eqref{eqn:squared-loss} by updating the feed-forward weights following gradient directions according to
\begin{align*}
    \beta_r(t+1) &= \beta_r(t)-\eta\frac{\partial\Loss}{\partial \beta_r}(W(t),\beta(t)) \quad\\ w_r(t+1) &= w_r(t)-\eta\frac{\partial\Loss}{\partial w_r}(W(t),\beta(t)),
\end{align*}
for each $r\in[p]$, where $\eta>0$ denotes the step size. We initialize $\beta(0)$ and $w_r(0)$ as standard Gaussian vectors. We introduce the notation $f(t), e(t)\in \R^n$, with $f_i(t) = f(x_i)$ denoting the network output on input $x_i$ when the weights are $W(t)$ and $\beta(t)$, and $e_i(t) = y_i-f_i(t)$ denoting the corresponding prediction error or residual. With this notation,
the gradients are expressed as
\begin{equation*}
    \frac{\partial\Loss}{\partial \beta_r} = \frac{1}{\sqrt p}\sum_{i=1}^n e_i\psi(w_r\transpose x_i), \quad
    \frac{\partial\Loss}{\partial w_r} = \frac{1}{\sqrt p} \sum_{i=1}^n e_i \beta_r\psi'(w_r\transpose x_i)x_i.
\end{equation*}
Here it is seen that the the gradient of the first-layer weights $\frac{\partial \Loss}{\partial w_r}$ involves not only the local input $x_i$ and the change in
the response of the $r$-th neuron, but also the backpropagated error signal $e_i\beta_r$.
The appearance of $\beta_r$ is, of course, due to the chain rule; but in effect it requires that the forward weights between layers are identical to the backward weights under error propagation. There is no evidence of biological mechanisms that would enable such ``synaptic symmetry.''

\begin{figure*}[t]
  \begin{tabular}{cc}
    \begin{minipage}{.4\textwidth}
    \tikzset{circ/.style={circle, draw, fill=YaleBlueLight!40, scale=0.7,font=\large}}
    \tikzset{rect/.style={rectangle, rounded corners, draw, fill=YaleBlueLight!40, scale=0.7,font=\Large}}
    \begin{tikzpicture}[xscale=0.65,yscale=0.9]
    
    \node (L21) at (4,-1.5) [circ]{$x_d$};
    \node (L20) at (4,-.65) {$\vdots$};
    \node (L20) at (4,.85) {$\vdots$};
    \node (L20) at (4,0) [circ]{$x_i$};
    \node (L24) at (4,1.5) [circ]{$x_1$};
    
    \node (L30) at (8,0) [circ]{$h_r$};
    \node (L31) at (8,-.9) {$\vdots$};
    \node (L34) at (8,1.1) {$\vdots$};
    \node (L37) at (8,-2) [circ]{$h_p$};
    \node (L38) at (8,2) [circ]{$h_1$};
    
    \draw[->, >=latex', shorten >=2pt,color=black,thick] (L20) to node [auto] {} (L30);
    \draw[->, >=latex', shorten >=2pt,color=lightgray,thick] (L20) to node [auto] {} (L37);
    \draw[->, >=latex', shorten >=2pt,color=lightgray,thick] (L20) to node [auto] {} (L38);
    
    \draw[->, >=latex', shorten >=2pt,color=black,thick] (L21) to node [auto] {} (L30);
    \draw[->, >=latex', shorten >=2pt,color=lightgray,thick] (L21) to node [auto] {} (L37);
    \draw[->, >=latex', shorten >=2pt,color=lightgray,thick] (L21) to node [auto] {} (L38);

    \draw[->, >=latex', shorten >=2pt,color=black,thick] (L24) to node [auto] {} (L30);
    \draw[->, >=latex', shorten >=2pt,color=lightgray,thick] (L24) to node [auto] {} (L37);
    \draw[->, >=latex', shorten >=2pt,color=lightgray,thick] (L24) to node [auto] {} (L38);
    
    \node at (5.5,-2.0) {$W$};
    
    \node (L40) at (11,0) [rect]{$f(x)$};
    
    \draw[->, >=latex', shorten >=2pt,color=black,thick] (L30) to node [auto] {$\beta_r$} (L40);
    \draw[->, >=latex', shorten >=2pt,color=lightgray,thick] (L37) to node [auto] {} (L40);
    \draw[->, >=latex', shorten >=2pt,color=lightgray,thick] (L38) to node [auto] {} (L40);
    
    \draw[<-, >=latex', shorten >=2pt, shorten <=2pt, bend right=50, dashed, color=YaleBlueLight, thick] 
        (L30) to node[auto, swap] {$b_r$}(L40);
    
    \end{tikzpicture}
    \end{minipage}
    \begin{minipage}{.47\textwidth}
    \begin{algorithm}[H]
    \centering
    \caption{Feedback Alignment}\label{algo:fa}
        \begin{algorithmic}[1]
            \Require Dataset $\{(x_i,y_i)\}_{i=1}^n$, step size $\eta$
            \State {\bf initialize} $W$, $\beta$ and $b$ as Gaussian
            \While{not converged}
                \State $\beta_r \gets \beta_r - \frac{\eta}{\sqrt p} \sum_{i=1}^n e_i \psi(w_r\transpose x_i)$
                \State $w_r \gets w_r - \frac{\eta}{\sqrt{p}} \sum_{i=1}^n e_i b_r\psi'(w_r\transpose x_i)x_i$
                \State for $r\in[p]$
            \EndWhile
        \end{algorithmic}
    \end{algorithm}%
    \end{minipage}
    \\[1.05in]

  \end{tabular}
\caption{Standard backpropagation updates the first layer weights for a hidden node $r$ with the second layer feedforward weight $\beta_r$. We study the procedure where the error is backpropagated instead using a fixed, random weight $b_r$.}
\label{fig:algo}
\end{figure*}

In the \textit{feedback alignment} procedure of \citep{lillicrap2016random},
when updating the weights $w_r$, the error signal is weighted, and propagated backward, not by the second layer feedforward weights $\beta$, but rather by a random set of weights $b\in\reals^p$ that are fixed during the course of training. Equivalently, the gradients for the first layer are
replaced by the terms
\begin{align}\label{eqn:alignment-update}
  \widetilde{\frac{\partial\Loss}{\partial w_r}}   = \frac{1}{\sqrt{p}} \sum_{i=1}^n e_i b_r\psi'(w_r\transpose x_i)x_i.
\end{align}
Note, however, that this update rule does not correspond to the gradient with
respect to a modified loss function. The use of a random weight $b_r$ when updating
the first layer weights $w_r$ does not violate locality, and could conceivably be implemented by biological mechanisms; we refer to \cite{lillicrap2016random,bartunov,lillicrap2020backpropagation} for further discussion. A schematic of the relationship between the two algorithms is shown in Figure~\ref{fig:algo}.

We can now summarize the main results and contributions of this paper. Our first result shows that the error converges to zero when using random backpropagation weights.

\begin{itemize}
  \item Under Gaussian initialization of the parameters, if the model is sufficiently over-parameterized with $p\gg n$, then the error converges to zero linearly. Moreover, the parameters satisfy $\|w_r(t) - w_r(0) \| = \widetilde O\bigl(\frac{n}{\sqrt{p}}\bigr)$
    and $|\beta_r(t) - \beta_r(0) | = \widetilde O\bigl(\frac{n}{\sqrt{p}}\bigr)$.
\end{itemize}
The precise assumptions and statement of this result are given in \cref{thm:nonliner_conv}. The proof
shows in the over-parameterized regime that the weights only change
by a small amount. While related to results for standard gradient descent,
new methods are required because the ``effective kernel'' is not positive semi-definite.

We next turn to the issue of alignment of the second layer parameters $\beta$ with the random backpropagation weights $b$. Such alignment was first observed in the original simulations of \cite{lillicrap2016random}. With $h\in \R^p$ denoting the hidden layer of the two-layer network,  the term $\dbp(h) \defeq \frac{\partial \Loss}{\partial h} = \frac{1}{\sqrt{p}} \beta \sum_{i=1}^n e_i$ represents
how the error signals $e_i$ are sent backward to update the feed-forward weights.
With the use of random backpropagation weights, the error is instead propagated backward as $\dfa(h) = \frac{1}{\sqrt{p}} b \sum_{i=1}^n e_i$.

\citet{lillicrap2016random} notice a decreasing angle between $\dbp(h)$ and $\dfa(h)$ during training, which is a sufficient condition to ensure that the algorithm converges.
In the case of $k$-way classification, the last layer has $k$ nodes,
$\beta$ and $b$ are $p\times k$ matrices, and each error term $e_i$ is a $k$-vector.
In the regression setting, $k=1$ so the angle between
$\dbp(h)$ and $\dfa(h)$ is the same as the angle between $\beta$ and $b$.
Intuitively, the possibility for alignment is seen in the fact that while the updates for $W$ use the error weighted by the random weights $b$, the updates for $\beta$ indirectly involve $W$, allowing for the possibility that dependence on $b$ will be introduced into $\beta$.

Our first result shows that, in fact, alignment will \textit{not} occur in the over-parameterized setting. (So, while the error may still converge, ``feedback alignment'' may be a bit of a misnomer for the algorithm.)
\begin{itemize}
\item The cosine of the angle between
  the $p$-dimensional vectors $\dfa$ and $\dbp$ satisfies
  $$\cos\angle(\dfa, \dbp(t)) = \cos\angle(b, \beta(t)) = O\big(\frac{n}{\sqrt p}\big).$$

\end{itemize}
 However, we show that regularizing the parameters will cause
 $\dbp$ to align with $\dfa$ and therefore the parameters $\beta$ to align with $b$. Since $\beta(0)$ and $b$ are high dimensional Gaussian vectors, they are nearly orthogonal with high probability. The effect of regularization can be seen as shrinking the component of $\beta(0)$ in the parameters over time.  Our next result establishes this precisely in the linear case.
\begin{itemize}
\item Supposing that $\psi(u)=u$, then introducing a ridge penalty $\lambda(t) \|\beta\|^2$ where $\lambda(t) = \lambda$ for $t\leq T$ and $\lambda(t) = 0$ for $t > T$
on $\beta$  causes the parameters to align, with $\cos\angle(b, \beta(t)) \geq c > 0$ for sufficiently large $t$.
\end{itemize}
The technical conditions are given in \cref{thm:lin_align}.
Our simulations are consistent with this result, and also show alignment with a constant regularization $\lambda(t)\equiv \lambda$, for both linear and nonlinear activation functions. Finally, we complement this result by showing that convergence is preserved with regularization, for general activation functions. This is presented in \cref{thm:nonlinear_conv_reg}.


\section{Convergence with Random Backpropagation Weights}

Due to the replacement of backward weights with the random backpropagation weights, there is no guarantee \emph{a priori} that the algorithm will reduce the squared error loss $\Loss$. \citet{lillicrap2020backpropagation} study the convergence on two-layer linear networks in a continuous time setting. Through the analysis of a system of differential equations on the network parameters, convergence to the true linear target function is shown, in the population setting of  arbitrarily large training data.
Among recent studies of over-parametrized networks under backpropagation, the neural tangent kernel (NTK) is heavily utilized to describe the evolution of the network during training \citep{jacot2018neural,chen2020deep}. For any neural network $f(x,\theta)$ with parameter $\theta$, the NTK is defined as
\begin{equation*}
	K_f(x,y) = \Big\langle \frac{\partial f (x,\theta)}{\partial \theta},\frac{\partial f (y,\theta)}{\partial \theta}\Big\rangle.
\end{equation*}
Given a dataset $\{(x_i,y_i)\}_{i=1}^n$, we can also consider its corresponding Gram matrix $K = (K_f(x_i,x_j))_{n\times n}$. \citet{jacot2018neural} show that in the infinite width limit, $K_f$ converges to a constant at initialization and does not drift away from initialization throughout training. In the over-parameterized setting, if the Gram matrix $K$ is positive definite, then $K$ will remain close to its initialization during training, resulting in linear convergence of the squared error loss \citep{du2018gradient,du2019gradient,gao2020model}.
For the two-layer network $f(x, \theta)$ defined in \eqref{eqn:nonlinear-network} with $\theta = (\beta,W)$, the kernel $K_f$ can be written in two parts, $G_f$ and $H_f$, which correspond to $\beta$ and $W$ respectively:
\begin{equation*}
K_f(x,y) = G_f(x, y) + H_f(x,y) \defeq \Big\langle \frac{\partial f (x,\theta)}{\partial \beta},\frac{\partial f (y,\theta)}{\partial \beta}\Big\rangle + \sum_{r=1}^p\Big\langle \frac{\partial f (x,\theta)}{\partial w_r},\frac{\partial f (y,\theta)}{\partial w_r}\Big\rangle.
\end{equation*}
Under the feedback alignment scheme with random backward weights $b$, $G_f$ remains the same as for standard backpropagation, while one of the gradient terms $\frac{\partial f}{\partial w_r}$ in $H_f$ changes to
$\widetilde{\frac{\partial f(x, \theta)}{\partial w_r}}   = \frac{1}{\sqrt{p}} b_r \psi'(w_r\transpose x)x,
$
with $H_f$ replaced by $H_f = \sum_{r=1}^p\Big\langle \widetilde{\frac{\partial f (x,\theta)}{\partial w_r}},\frac{\partial f (y,\theta)}{\partial w_r}\Big\rangle$.
As a result, $H_f$ is no longer positive semi-definite and close to $0$ at initialization if the network is over-parameterized. However, if $G = (G_f(x_i,x_j))_{n\times n}$ is positive definite and $H = (H_f(x_i,x_j))_{n\times n}$ remains small during training, we are still able to show that the loss $\Loss$ will converge to zero exponentially fast.

\begin{assumption}\label{assump:G}
Define the matrix $\overline{G} \in \R^{n\times n}$ with entries
$\overline{G}_{i,j} = \E_{w\sim \calN(0,I_p)}\psi(w\transpose x_i) \psi(w\transpose  x_j)$.
Then we assume that the minimum eigenvalue satisfies $\lambda_{\min}(\overline{G}) \geq \gamma$, where $\gamma$ is a positive constant.
\end{assumption}
\vskip5pt

\begin{theorem}\label{thm:nonliner_conv}
Let $W(0)$, $\beta(0)$ and $b$ have \iid standard Gaussian entries. Assume \textnormal{(1)} \cref{assump:G} holds, \textnormal{(2)} $\psi$ is smooth, $\psi$, $\psi'$ and $\psi''$ are bounded and \textnormal{(3)} $|y_i|$ and $\|x_i\|$ are bounded for all $i\in[n]$. Then there exists positive constants $c_1$, $c_2$, $C_1$ and $C_2$, such that for any $\delta\in(0,1)$, if $p \geq \max\left(C_1\frac{n^2}{\delta\gamma^2}, C_2\frac{n^4\log p}{\gamma^4}\right)$, then with probability at least $1-\delta$ we have that
\begin{equation}\label{eq:conv}
    \|e(t+1)\| \leq (1-\frac{\eta\gamma}{4})\|e(t)\|
\end{equation}
and
\begin{equation}
\label{eq:weights}
    \|w_r(t)-w_r(0)\| \leq c_1\frac{n\sqrt{\log p}}{\gamma\sqrt p}, \quad |\beta_r(t)-\beta_r(0)| \leq c_2\frac{n}{\gamma\sqrt p}
\end{equation}
for all $r\in[p]$ and $t>0$.
\end{theorem}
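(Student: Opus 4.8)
The plan is to run a simultaneous induction on $t$ that couples a linearized residual recursion with bounds keeping the weights near initialization, in the spirit of the neural tangent kernel analyses of \citet{du2018gradient,du2019gradient}, but adapted to the fact that the effective first-layer kernel is no longer positive semidefinite. Taylor-expanding the one-step change $f_i(t+1)-f_i(t)$ in the $\beta$- and $W$-updates of \cref{algo:fa}, I would first derive the recursion
\begin{equation*}
  e(t+1) = \bigl(I - \eta\,(G(t)+H(t))\bigr)\,e(t) + \xi(t),
\end{equation*}
where $G(t)_{ij} = \tfrac{1}{p}\sum_{r}\psi(w_r(t)\transpose x_i)\psi(w_r(t)\transpose x_j)$ and $H(t)_{ij} = \tfrac{1}{p}\sum_{r}\beta_r(t)\,b_r\,\psi'(w_r(t)\transpose x_i)\psi'(w_r(t)\transpose x_j)\,x_i\transpose x_j$ are the two Gram matrices of the kernel decomposition, and $\xi(t)$ collects the second-order remainder (the products $\Delta\beta_r\,\Delta w_r$ and the $\psi''$ Taylor error). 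The structural point driving everything is that $G(t)+H(t)$ is symmetric, with $G(t)$ positive semidefinite and close to $\overline G$, whereas $H(t)$ is symmetric but indefinite. Crucially, since $\beta(0)$ and $b$ are independent with mean zero, $\E[\beta_r(0)b_r\mid W(0)]=0$, so $H(0)$ is an average of $p$ mean-zero terms and is \emph{small} rather than a source of positivity. All positive definiteness must therefore come from $G$, which is precisely what \cref{assump:G} supplies.

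I would then carry the induction with the hypotheses, for all $s\le t$: (i) $\norm{e(s)} \le (1-\tfrac{\eta\gamma}{4})^{s}\norm{e(0)}$; and (ii) the movement bounds $\norm{w_r(s)-w_r(0)} \le R$ and $\abs{\beta_r(s)-\beta_r(0)} \le R'$ for every $r$, with $R = c_1 n\sqrt{\log p}/(\gamma\sqrt p)$ and $R' = c_2 n/(\gamma\sqrt p)$. Under (ii), the Lipschitz bounds on $\psi,\psi'$ (from boundedness of $\psi',\psi''$) give $\norm{G(t)-G(0)}_{\mathrm{op}} = O(nR)$, and, expanding $\beta_r(t)=\beta_r(0)+\Delta\beta_r$ and perturbing $w_r$, $\norm{H(t)}_{\mathrm{op}} \le \norm{H(0)}_{\mathrm{op}} + O\bigl(n(R+R')\bigr)$. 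Combined with the initialization estimates $\norm{G(0)-\overline G}_{\mathrm{op}} \le \gamma/4$ and $\norm{H(0)}_{\mathrm{op}} \le \gamma/8$ — each from concentrating an average of $p$ bounded (resp. sub-exponential, mean-zero) terms and then bounding $\norm{\cdot}_{\mathrm{op}}$ by $n$ times the entrywise error, which forces $p \gtrsim n^2/(\delta\gamma^2)$ in the base case — this yields $\lambda_{\min}(G(t)+H(t)) \ge \gamma/2$ and $\norm{G(t)+H(t)}_{\mathrm{op}} = O(n)$. For a suitably small step size the symmetric factor then obeys $\norm{I-\eta(G(t)+H(t))}_{\mathrm{op}} \le 1-\tfrac{\eta\gamma}{2}$, and since $\norm{e(t)}$ stays bounded while $p$ is large the remainder satisfies $\norm{\xi(t)} \le \tfrac{\eta\gamma}{4}\norm{e(t)}$, delivering \eqref{eq:conv} at step $t+1$.

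To close hypothesis (ii), and hence establish \eqref{eq:weights}, I would telescope the single-step displacements. From the updates, Cauchy--Schwarz together with the boundedness of $\psi,\psi',\norm{x_i}$ gives $\norm{w_r(t+1)-w_r(t)} \le \tfrac{\eta C\sqrt n}{\sqrt p}\abs{b_r}\norm{e(t)}$ and $\abs{\beta_r(t+1)-\beta_r(t)} \le \tfrac{\eta C\sqrt n}{\sqrt p}\norm{e(t)}$. Summing the geometric series $\sum_s\norm{e(s)} \le \tfrac{4}{\eta\gamma}\norm{e(0)}$ and using $\norm{e(0)} = O(\sqrt n)$ (from boundedness of $y_i$ and concentration of $f_i(0)$) together with $\max_r\abs{b_r} = O(\sqrt{\log p})$ reproduces the bounds $R$ and $R'$ with the stated dependence — the extra $\sqrt{\log p}$ for $w_r$ arising precisely from the $\abs{b_r}$ factor, which is absent in the $\beta$-update.

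The main obstacle is controlling $H(t)$ uniformly in $t$. Unlike the standard analysis, $H$ contributes no positivity and could a priori grow; the whole scheme hinges on keeping $n(R+R')$ below $\gamma$, i.e.\ $n\cdot n\sqrt{\log p}/(\gamma\sqrt p) \lesssim \gamma$, which is exactly the overparameterization $p \gtrsim n^4\log p/\gamma^4$ in the hypothesis. The delicate feature is the circularity: the smallness of $H(t)$ rests on $\beta$ barely moving, the bound on $\beta$'s movement rests on the contraction, and the contraction rests on $\lambda_{\min}(G+H)\ge\gamma/2$ — this is what the joint induction on (i) and (ii) is designed to break. One must also confirm that, absent regularization, $\beta_r(t)$ never develops an $\Omega(1)$ correlation with $b_r$ (which would destroy the mean-zero structure of $H$); the bound $\abs{\beta_r(t)-\beta_r(0)}=\tilde O(n/\sqrt p)$ guarantees exactly this, consistent with the later observation that such correlation, and hence alignment, emerges only once a ridge penalty is introduced.
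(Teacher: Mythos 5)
Your proposal is correct and follows essentially the same route as the paper's proof: the identical residual recursion $e(t+1)=(I-\eta(G(t)+H(t)))e(t)+v(t)$, the same concentration of $G(0)$ around $\overline G$ and of the mean-zero $H(0)$ around $0$ (forcing $p\gtrsim n^2/(\delta\gamma^2)$), the same joint induction coupling the contraction with the $\tilde O(n/\sqrt p)$ weight-movement bounds via a telescoped geometric series, and the same source of the $\sqrt{\log p}$ factor from $\max_r|b_r|$. The only cosmetic differences are bookkeeping (you place the cross term $\Delta\beta\,\Delta W$ in the remainder, whereas the paper absorbs it into $H(t)$ by evaluating it at $\beta(t+1)$) and your use of symmetry plus $\lambda_{\min}(G+H)\ge\gamma/2$ in place of the paper's triangle-inequality bound on $\|I-\eta(G(t)+H(t))\|$, which are equivalent given the same perturbation estimates.
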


We note that the matrix $\overline{G}$ in \cref{assump:G} is the expectation of $G$ with respect to the random initialization, and is thus close to $\overline{G}$ due to concentration. To justify the assumption, we provide the following proposition, which states that \cref{assump:G} holds when the inputs $x_i$ are drawn independently from a Gaussian distribution. The proofs of \cref{thm:nonliner_conv,prop:positive-definiteness} are deferred to \cref{sec:appendix-convergence}.

\begin{proposition}\label{prop:positive-definiteness}
Suppose $x_1,...,x_n \overset{\iid}{\sim} \calN(0,I_d/d)$ and the activation function $\psi$ is sigmoid or tanh. If $d=\Omega(n)$, then \cref{assump:G} holds with high probability.
\end{proposition}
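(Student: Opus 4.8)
The plan is to isolate a single Hermite level of the activation that is uniformly nondegenerate, and to control the normalized Gram matrix of the data by random matrix concentration. Write $u_i = x_i/\norm{x_i}$ for the normalized inputs, $\rho_{ij} = u_i\transpose u_j$, and $\phi_i(z) = \psi(\norm{x_i}\,z)$. Since $(w\transpose u_i, w\transpose u_j)$ is a standard bivariate Gaussian with correlation $\rho_{ij}$, we have $\overline{G}_{ij} = \E[\phi_i(Z_i)\phi_j(Z_j)]$ for correlated standard normals $Z_i,Z_j$. Expanding each $\phi_i$ in the (normalized) Hermite basis and applying Mehler's identity $\E[He_k(Z_i)He_l(Z_j)] = \delta_{kl}\,k!\,\rho_{ij}^k$ yields the exact representation
\begin{equation*}
\overline{G} = \sum_{k\geq 0} D_k\,\widetilde M^{\odot k}\,D_k, \qquad D_k = \mathrm{diag}\big(\hat\phi^{(1)}_k,\dots,\hat\phi^{(n)}_k\big), \quad \widetilde M_{ij} = \rho_{ij},
\end{equation*}
where $\hat\phi^{(i)}_k$ is the $k$-th normalized Hermite coefficient of $\phi_i$ and $A^{\odot k}$ is the $k$-fold Hadamard power. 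The expansion is valid termwise because sigmoid and tanh are bounded and hence lie in $L^2$ of the Gaussian measure.

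Since $\widetilde M = UU\transpose\succeq 0$ is the Gram matrix of the unit vectors $u_i$, the Schur product theorem gives $\widetilde M^{\odot k}\succeq 0$, so every summand is positive semidefinite and I may retain only the $k=1$ term:
\begin{equation*}
\overline{G} \succeq D_1\,\widetilde M\, D_1 \succeq \big(\min_i (\hat\phi^{(i)}_1)^2\big)\,\lambda_{\min}(\widetilde M)\,I.
\end{equation*}
The first-level coefficient is $\hat\phi^{(i)}_1 = \E[\psi(\norm{x_i}Z)Z] = \norm{x_i}\,\E[\psi'(\norm{x_i}Z)]$ by Gaussian integration by parts; for sigmoid and tanh one has $\psi'>0$, so on the event that every $\norm{x_i}$ lies in a fixed interval bounded away from $0$, each $(\hat\phi^{(i)}_1)^2$ is bounded below by an absolute positive constant $c_1$.

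It remains to show $\lambda_{\min}(\widetilde M)\geq \tfrac12$, which is the crux. I will establish this on a high-probability event via two ingredients. First, $\norm{x_i}^2 = \tfrac1d\chi^2_d$ concentrates, so a chi-square tail bound with a union over $i\in[n]$ gives $\norm{x_i}^2\in[1-\epsilon,1+\epsilon]$ for all $i$ once $d\gtrsim\log n$. Second, writing $X\in\R^{n\times d}$ for the matrix with rows $x_i$, a nonasymptotic bound on the extreme singular values of a Gaussian matrix (Davidson--Szarek) gives $\|XX\transpose - I\|_{op} = O(\sqrt{n/d})$ with high probability. Because $\widetilde M = \Lambda (XX\transpose)\Lambda$ with $\Lambda = \mathrm{diag}(1/\norm{x_i})$ and $\|\Lambda - I\|_{op} = O(\epsilon)$ on the first event, a perturbation estimate gives $\|\widetilde M - I\|_{op} = O(\sqrt{n/d})$. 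Taking $d\geq Cn$ with $C$ a sufficiently large absolute constant then forces $\|\widetilde M - I\|_{op}\leq \tfrac12$, hence $\lambda_{\min}(\widetilde M)\geq\tfrac12$ and $\lambda_{\min}(\overline{G})\geq c_1/2 \eqqcolon \gamma>0$, which is \cref{assump:G}.

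The main obstacle is precisely this operator-norm control of $\widetilde M - I$. The naive route of bounding each off-diagonal $\rho_{ij} = O(\sqrt{\log n/d})$ and summing in Frobenius norm only yields $\|\widetilde M - I\|_{op}\leq n\cdot O(\sqrt{\log n/d})$, which would require $d\gtrsim n^2$; it is the collective cancellation captured by the singular-value bound that reduces the requirement to the stated $d=\Omega(n)$.
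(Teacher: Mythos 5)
Your proof is correct, but it reaches positive-definiteness by a genuinely different mechanism than the paper. The paper invokes a concentration lemma of Gao et al.\ (their Lemma A.7) asserting that $\overline{G}$ is close in operator norm to $\widetilde{G}_{ij} = |\E\psi'(Z)|^2\,\frac{x_i\transpose x_j}{\|x_i\|\|x_j\|} + \big(\E|\psi(Z)|^2-|\E\psi'(Z)|^2\big)\mathds{1}\{i=j\}$ with error $O(\sqrt{\log n/d}+n/d)$; the eigenvalue lower bound then comes entirely from the diagonal excess $\E|\psi(Z)|^2-|\E\psi'(Z)|^2$, which in Hermite language is $\sum_{k\neq 1}\hat\psi_k^2$, i.e.\ the energy of $\psi$ at all levels \emph{other} than the linear one (this is why the paper needs $\psi$ to be sigmoid or tanh rather than linear). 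You instead discard every Hermite level except $k=1$ and extract positivity from $\hat\phi_1^2\,\lambda_{\min}(UU\transpose)$, paying for it with a Davidson--Szarek singular-value bound on the Gaussian data matrix. Both routes end up needing $d=\Omega(n)$ --- the paper through the $n/d$ term in the cited lemma, you through $\|XX\transpose-I\|=O(\sqrt{n/d})$ --- and both are sound for sigmoid and tanh. The trade-off is that your argument is self-contained and extends to any activation with nonvanishing first Hermite coefficient (including the identity), but fails for even activations where $\hat\psi_1=0$; the paper's argument covers activations with energy outside level one but leans on an imported lemma whose proof is not reproduced. Two minor points worth recording if you write this up: the paper's Assumption 1 writes $w\sim\calN(0,I_p)$ where $I_d$ is clearly intended, and you are implicitly using that version; and your uniform lower bound on $(\hat\phi^{(i)}_1)^2$ should note that $\inf_{s\in[1-\eps,1+\eps]}\E\psi'(sZ)>0$ by continuity and strict positivity of $\psi'$ on compacts, which costs only a $d\gtrsim\log n$ event already implied by $d=\Omega(n)$.
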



\section{Alignment with Random Backpropagation Weights}\label{sec:alignment}

The most prominent characteristic of the feedback alignment algorithm is the phenomenon that the error signals propagated with the forward weights align with those propagated with fixed random backward weights during training. Specifically, if we denote $h\in \R^p$ to be the hidden layer of the network, then we write $\dbp(h) \defeq \frac{\partial \Loss}{\partial h}$ to represent the error signals with respect to the hidden layer that are backpropagated with the feed-forward weights and $\dfa(h)$ as the error signals computed with fixed random backward weights.
In particular, the error signals $\dbp(h)$ and $\dfa(h)$ for the two-layer network \eqref{eqn:nonlinear-network} are given by
\begin{equation*}
    \dbp(h) = \frac{1}{\sqrt{p}}\beta\sum_{i=1}^ne_i \quad\text{and}\quad \dfa(h) = \frac{1}{\sqrt{p}}b\sum_{i=1}^ne_i.
\end{equation*}
\citet{lillicrap2016random} notice a decreasing angle between $\dbp(h)$ and $\dfa(h)$ during training. We formalize this concept of alignment by the following definition.
\begin{definition}\label{def:alignment}
    We say a two-layer network \textit{aligns} with the random weights $b$ during training if there exists a constant $c>0$ and time $T_c$ such that  $\cos\angle(\dfa, \dbp(t)) = \cos\angle(b, \beta(t)) = \frac{\langle b, \beta(t)\rangle}{\|b\|\|\beta(t)\|} \geq c$ for all $t > T_c$.
\end{definition}

\subsection{Regularized feedback alignment}
Unfortunately, alignment between $\beta(t)$ and $b$ is not guaranteed for over-parameterized networks and the loss \eqref{eqn:squared-loss}. In particular,  we control the cosine value of the angle by inequalities \eqref{eq:weights} from Theorem \ref{thm:nonliner_conv}, \ie,
\begin{equation*}
    \Big|\cos\angle(b, \beta(t))\Big| \leq \frac{|\langle \frac{b}{\|b\|}, \beta(0)\rangle|+ \|\beta(t)- \beta(0)\|}{\|\beta(0)\|-\|\beta(t)-\beta(0)\|} = O\left(\frac{n}{\sqrt p}\right),
\end{equation*}
which indicates that $\beta(t)$ and $b$ become orthogonal as the network becomes wider. Intuitively, this can be understood as resulting from the parameters staying near their initializations during training when $p$ is large, where $\beta(0)$ and $b$ are almost orthogonal to each other. This motivates us to regularize the network parameters. We consider in this work the squared error loss with an $\ell_2$ regularization term on $\beta$:
\begin{equation}
\label{eqn:loss-with-reg}
\Loss(t, W, \beta) = \frac{1}{2}\sum_{i=1}^n\big(f(x_i)-y_i\big)^2 + \frac{1}{2}\lambda(t)\|\beta\|^2,
\end{equation}
where $\{\lambda(t)\}_{t=0}^\infty$ is a sequence of regularization rates, which defines a series of loss functions for different training steps $t$.
Thus, the update for $w_r$ remains the same and the
update for $\beta$ changes to
$$\beta_r(t+1) = (1-\lambda(t))\beta_r(t) - \frac{\eta}{\sqrt p} \sum_{i=1}^n e_i(t)\psi(w_r(t)\transpose x_i), \;\;\mbox{for $r\in[p]$.}$$
Comparing to Algorithm \ref{algo:fa}, an extra contraction factor $1-\lambda(t)$ is added in the update of $\beta(t)$, which doesn't affect the locality of the algorithm but helps the alignment by shrinking the component of $\beta(0)$ in $\beta(t)$.


Following Theorem \ref{thm:nonliner_conv}, we provide an error bound for regularized feedback alignment in Theorem \ref{thm:nonlinear_conv_reg}. Since regularization terms $\lambda(t)$ make additional contributions to the error $e(t)$ as well as to the kernel matrix $G$, an upper bound on $\sum_{t\geq 0}\lambda(t)$ is needed to ensure positivity of the minimal eigenvalue of $G$ during training, in order for the error $e(t)$ to be controlled. In particular, if there is no regularization, \ie, $\lambda(t)=0$ for all $t\geq 0$, then we recover exponential convergence for the error $e(t)$ as in Theorem \ref{thm:nonliner_conv}. The proof of Theorem \ref{thm:nonlinear_conv_reg} is also deferred to \cref{sec:appendix-convergence}.

\begin{theorem}
\label{thm:nonlinear_conv_reg}
Assume all the conditions from Theorem \ref{thm:nonliner_conv}. Assume $\sum_{t=0}^\infty \lambda(t) \leq  \tilde{S}_\lambda = \tilde{c}_{S}\frac{\gamma^2\sqrt{p}}{\eta n^2\sqrt{\log p}}$ for some constant $\tilde{c}_{S}$. Then there exist positive constants $C_1$ and $C_2$, such that for any $\delta\in(0,1)$, if $p \geq \max\bigl(C_1\frac{n^2}{\delta\gamma^2}, C_2\frac{n^4\log p}{\gamma^4}\bigr)$, then with probability at least $1-\delta$, we have
\begin{equation}
\label{eq:conv_reg}
    \|e(t+1)\| \leq \Bigl(1-\frac{\eta\gamma}{4}-\eta\lambda(t)\Bigr)\|e(t)\|+\lambda(t)\|y\|
\end{equation}
for all $t\geq 0$.
\end{theorem}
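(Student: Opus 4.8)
The plan is to reproduce the inductive, NTK-style argument behind \cref{thm:nonliner_conv}, now carrying the two extra terms created by the contraction factor $1-\lambda(t)$ in the update of $\beta$. First I would expand the one-step change of the network outputs. Combining the regularized update of $\beta_r$ with the (unchanged) feedback-alignment update of $w_r$ and Taylor-expanding $\psi$ to first order gives
\begin{equation*}
  f(t+1)-f(t) = \eta\bigl(G(t)+H(t)\bigr)e(t) - \lambda(t)\,f(t) + \varepsilon(t),
\end{equation*}
where $G(t)$, $H(t)$ are the time-$t$ Gram matrices of the two kernel pieces and $\varepsilon(t)$ collects the second-order remainder. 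Substituting $f(t)=y-e(t)$ turns this into the affine recursion
\begin{equation*}
  e(t+1) = \bigl((1-\lambda(t))I - \eta(G(t)+H(t))\bigr)e(t) + \lambda(t)\,y - \varepsilon(t),
\end{equation*}
which already displays the inhomogeneous term $\lambda(t)\,y$ responsible for the $\lambda(t)\|y\|$ in \eqref{eq:conv_reg} and the additional contraction produced by $\lambda(t)$.

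Next I would bound the operator norm of the symmetric matrix $(1-\lambda(t))I-\eta(G(t)+H(t))$. As in the unregularized proof, the goal is to keep $\lambda_{\min}(G(t))\geq\gamma/2$ and $\|H(t)\|$ small, so that the eigenvalues of $G(t)+H(t)$ lie in an interval $[\tfrac{3\gamma}{8},M]$ with $M$ bounded via the boundedness of $\psi,\psi',x_i$ and of $\|\beta(t)\|$; for a step size $\eta\leq1$ small enough this forces $\|(1-\lambda(t))I-\eta(G(t)+H(t))\|\leq 1-\lambda(t)-\tfrac{3\eta\gamma}{8}$, and the remaining slack down to $\tfrac{\eta\gamma}{4}$ absorbs $\|\varepsilon(t)\|$. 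Taking norms in the recursion and using $\lambda(t)\geq\eta\lambda(t)$ then yields \eqref{eq:conv_reg} directly.

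The heart of the matter is verifying that the kernel stays in this regime for \emph{all} $t$, and this is where the hypothesis $\sum_{t\geq0}\lambda(t)\leq\tilde S_\lambda$ enters. I would organize everything as a single induction on $t$ whose hypotheses are the bound \eqref{eq:conv_reg} at all earlier steps together with the drift bounds \eqref{eq:weights}. Unrolling the affine recursion (dropping the favorable $-\eta\lambda$ term) gives the cumulative residual estimate $\sum_{s}\|e(s)\| \lesssim \tfrac{1}{\eta\gamma}\bigl(\|e(0)\|+\|y\|\sum_s\lambda(s)\bigr)$; since the feedback-alignment update makes $\|w_r(t)-w_r(0)\|$ and $|\beta_r(t)-\beta_r(0)|$ proportional to $\tfrac{\eta}{\sqrt p}\sum_s\|e(s)\|$ (with the $\eta$ cancelling), the drift is inflated from its \cref{thm:nonliner_conv} value by a factor $1+\|y\|\tilde S_\lambda/\|e(0)\|$. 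A Lipschitz estimate for $\psi$ then controls $\|G(t)-G(0)\|\lesssim n\max_r\|w_r(t)-w_r(0)\|$, and the scaling $\tilde S_\lambda\asymp\gamma^2\sqrt p/(n^2\sqrt{\log p})$ is calibrated precisely so that this drift keeps $\|G(t)-G(0)\|\leq\gamma/4$ (hence $\lambda_{\min}(G(t))\geq\gamma/2$) and $\|H(t)\|$ below $\gamma/8$, closing the induction.

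The main obstacle, relative to \cref{thm:nonliner_conv}, is the loss of pure geometric decay: because of the $\lambda(t)\,y$ term the error no longer self-bounds by a clean geometric series, so the ``drift $\Rightarrow$ eigenvalue $\Rightarrow$ contraction $\Rightarrow$ drift'' loop must be shown to be self-consistent rather than telescoping. The delicate point is that regularization actively moves $\beta$ (indeed it is what later forces alignment with $b$), so I must bound $\|\beta(t)\|$ uniformly and feed it back into the estimate of $H(t)$; it is this coupling, rather than the essentially unchanged analysis of $G$, that fixes the admissible budget and ties the constant $\tilde c_S$ to the constants inherited from \cref{thm:nonliner_conv}.
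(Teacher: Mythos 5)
Your proposal follows essentially the same route as the paper's own proof: the same one-step error recursion $e(t+1)=\bigl((1-\eta\lambda(t))I-\eta(G(t)+H(t))\bigr)e(t)+v(t)-\eta\lambda(t)y$, the same unrolled cumulative bound $\sum_{s\le t}\|e(s)\|\lesssim \frac{1}{\eta\gamma}\|e(0)\|+\frac{1}{\gamma}\tilde S_\lambda\|y\|$ feeding the weight-drift estimates (inflated by a factor $1+\eta\tilde S_\lambda$), and the same joint induction that keeps $\lambda_{\min}(G(t))$ bounded below and $\|H(t)\|$ small. The calibration of $\tilde S_\lambda$ that you describe is exactly how the paper closes the loop in Lemmas \ref{lma:weights_reg} and \ref{lma:induction_reg}, so the plan is correct and matches the paper's argument.
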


\subsection{Alignment analysis for linear networks}
In this section, we focus on the theoretical analysis of alignment for linear networks, which is equivalent to setting the activation function $\psi$ to the identity map. The loss function can be written as
\begin{equation*}
    \Loss(t,W,\beta) = \frac{1}{2}\big\|\frac{1}{\sqrt{p}}XW\transpose\beta-y\big\|^2+\frac{\lambda(t)}{2}\|\beta\|^2,
\end{equation*}
where $X = (x_1,\ldots,x_n)\transpose$; this is a form of over-parameterized ridge regression.
Before presenting our results on alignment, we first provide a linear version of Theorem \ref{thm:nonlinear_conv_reg} that adopts slightly different conditions.

\begin{theorem}
\label{thm:lin_conv}
Assume \textnormal{(1)}  $\|y\| = \Theta(\sqrt n)$, $\lambda_{\min}(XX\transpose)>\gamma$ and $\lambda_{\max}(XX\transpose)<M$ for some constants $M>\gamma>0$, and \textnormal{(2)} $\sum_{t=0}^\infty \lambda(t) \leq  S_\lambda = c_{S}\frac{\gamma\sqrt{\gamma p}}{\eta\sqrt{n}M}$ for some constant $c_{S}$.
Then for any $\delta\in(0,1)$, if $p = \Omega(\frac{Md\log(d/\delta)}{\gamma})$, the following inequality holds for all $t\geq 0$ with probability at least $1-\delta$:
\begin{equation}
\label{eq:reg_error_bd}
\|e(t+1)\|\leq \big(1-\frac{\eta\gamma}{2}-\eta\lambda(t)\big)\|e(t)\| + \lambda(t)\|y\|.
\end{equation}
\end{theorem}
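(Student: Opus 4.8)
The plan is to track the residual $e(t)=y-f(t)$ and show it obeys a contractive affine recursion driven by the two effective kernels. With $W(t)\in\R^{p\times d}$ (rows $w_r(t)\transpose$) and $b,\beta(t)\in\R^p$, the updates in matrix form are $W(t+1)=W(t)+\frac{\eta}{\sqrt p}\,b\,e(t)\transpose X$ and $\beta(t+1)=(1-\lambda(t))\beta(t)+\frac{\eta}{\sqrt p}W(t)X\transpose e(t)$. Substituting these into $f(t+1)=\frac1{\sqrt p}XW(t+1)\transpose\beta(t+1)$ and expanding, the cross terms organize exactly into the two Gram matrices $G(t)=\frac1p XW(t)\transpose W(t)X\transpose$ and $H(t)=\frac{b\transpose\beta(t)}{p}XX\transpose$, producing
\[ e(t+1)=\big[(1-\lambda(t))I-\eta G(t)-(1-\lambda(t))\eta H(t)\big]e(t)+\lambda(t)\,y+\rho(t), \]
where $\rho(t)$ is the single quadratic-in-$\eta$ remainder, bounded by $\|\rho(t)\|\le\frac{\eta^2}{p^{3/2}}\,|b\transpose W(t)X\transpose e(t)|\;\|XX\transpose e(t)\|$. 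This isolates the three objects to control: $\lambdamin(G(t))$ (which drives the contraction), $\|H(t)\|$ (a non-PSD perturbation), and $\|\rho(t)\|$.

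Second, I would fix a high-probability event from the Gaussian initialization. Since $W(0)$ is $p\times d$ Gaussian with $p\gg d$, sample-covariance concentration gives $\|\frac1p W(0)\transpose W(0)-I_d\|\le\frac18$ once $p=\Omega(d\log(d/\delta))$, so by hypothesis (1) the matrix $G(0)=X\big(\frac1p W(0)\transpose W(0)\big)X\transpose$ obeys $\lambdamin(G(0))\ge\frac78\gamma$ and $\lambdamax(G(0))\le 2M$. On the same event I record $\|W(0)\|_{\mathrm{op}}=O(\sqrt p)$, $\|b\|=O(\sqrt p)$, and, crucially, $|b\transpose\beta(0)|=\tilde O(\sqrt p)$ since $b$ and $\beta(0)$ are independent isotropic Gaussians; hence $\|H(0)\|=\frac{|b\transpose\beta(0)|}{p}\lambdamax(XX\transpose)=\tilde O(M/\sqrt p)$ is negligible, and $\|e(0)\|$ is bounded using $\|y\|=\Theta(\sqrt n)$.

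Third, the core is an induction on $t$ that simultaneously maintains \eqref{eq:reg_error_bd} and the drift bounds $\lambdamin(G(t))\ge\frac34\gamma$, $\lambdamax(G(t))\le 2M$, and $\|H(t)\|=\tilde O(M/\sqrt p)$. Granting \eqref{eq:reg_error_bd} through step $t$, the injected terms $\lambda(s)\|y\|$ are summable by hypothesis (2), so $\bar E:=\sup_{s\le t}\|e(s)\|\le\|e(0)\|+\|y\|\sum_s\lambda(s)$ is bounded. Feeding this into the $w_r$-update and summing the geometric-plus-injected series gives $\|w_r(t)-w_r(0)\|=O\big(\frac{|b_r|\sqrt M}{\sqrt p\,\gamma}\,\bar E\big)$, whence both $\frac1p\|W(t)\transpose W(t)-W(0)\transpose W(0)\|$ and the change in $b\transpose\beta(t)$ are of order $\bar E\,\mathrm{poly}(M,1/\gamma)/\sqrt p$. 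For $p$ large---this is where $p=\Omega(Md\log(d/\delta)/\gamma)$ and the budget $S_\lambda$, which caps $\bar E$, enter---these keep $G(t)$ and $H(t)$ in the stated windows and force $\|\rho(t)\|\le\frac{\eta^2 M^{3/2}}{p^{3/2}}\|b\|\,\|W(t)\|_{\mathrm{op}}\,\|e(t)\|^2=O\big(\frac{\eta^2 M^{3/2}}{\sqrt p}\big)\|e(t)\|^2$. The bracketed operator is symmetric, and its top eigenvalue is at most $1-\lambda(t)-\eta\lambdamin(G(t))$ plus the negligible $H$-contribution, hence at most $1-\lambda(t)-\frac34\eta\gamma+\text{negligible}\le 1-\eta\lambda(t)-\frac{\eta\gamma}{2}$ for $\eta\le1$, while the step-size condition $\eta=O(1/M)$ keeps the bottom eigenvalue above $-(1-\frac{\eta\gamma}2-\eta\lambda(t))$. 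Absorbing the negligible $H$ and $\rho$ terms into the remaining $\Omega(\eta\gamma)$ slack and applying the triangle inequality against $\lambda(t)y$ closes the induction and yields \eqref{eq:reg_error_bd} at step $t+1$.

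The main obstacle is the circular dependence between error decay and kernel positivity: bounding $G(t)$ requires the weights to stay near initialization, which requires $\|e(t)\|$ to be small, which in turn requires $G(t)$ to be positive definite. Over-parameterization breaks the loop by freezing each $w_r$ to within $O(1/\sqrt p)$, but two features of feedback alignment complicate the standard neural-tangent-kernel argument: $H(t)$ is \emph{not} positive semi-definite (unlike the backpropagation kernel), so it must be shown genuinely negligible rather than merely helpful, and the regularization injects an error $\lambda(t)\|y\|$ at every step, destroying monotone decay. The budget $\sum_t\lambda(t)\le S_\lambda$ is precisely what bounds $\bar E=\sup_t\|e(t)\|$, and thereby the weight drift and the kernel windows, uniformly in $t$ rather than only over a finite horizon.
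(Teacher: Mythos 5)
Your proposal is correct and follows essentially the same route as the paper: an exact one-step recurrence for $e(t)$ splitting into a positive-definite part $\tfrac1p XW\transpose WX\transpose$, a scalar-times-$XX\transpose$ cross term driven by $b\transpose\beta$, and an $O(\eta^2)$ remainder, closed by induction in which the budget $S_\lambda$ bounds $\sum_t\|e(t)\|$ and hence the weight drift and kernel perturbations (the paper's Lemmas \ref{lma:one_step_update} and \ref{lma:ineqs_2} with the terms $J_1,J_2,J_3$ playing the role of your $H(t)$ drift and $\rho(t)$). The only differences are bookkeeping---the paper expands exactly around $W(0),\beta(0)$ rather than tracking the time-$t$ kernels---plus your explicit step-size condition $\eta=O(1/M)$ for the lower spectral edge, which the paper leaves implicit.
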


We remark that in the linear case, the kernel matrix $G$ reduces to the form $X W\transpose W X\transpose$ and its expectation $\overline{G}$ at initialization also reduces to $X X\transpose$. Thus, Assumption \ref{assump:G} holds if $XX\transpose$ is positive definite, which is equivalent to the $x_i$'s being linearly independent. The result of Theorem \ref{thm:nonlinear_conv_reg} can not be directly applied to the linear case since we assume that $\psi$ is bounded, which is true for sigmoid or $\tanh$ but not for the identity map. This results in a slightly different order for $S_\lambda$ and an improved order for $p$.

Our results on alignment also rely on an isometric condition on $X$, which requires the minimum and the maximum eigenvalues of $XX\transpose$ to be sufficiently close (\cf Definition \ref{def:isom}). On the other hand, this condition is relatively mild and can be satisfied when $X$ has random Gaussian entries with a gentle dimensional constraint, as demonstrated by Proposition \ref{prop:isom}. Finally, we show in Theorem \ref{thm:lin_align} that under a simple regularization strategy where a constant regularization is adopted until a cutoff time $T$, regularized feedback alignment achieves alignment if $X$ satisfies the isometric condition.
{}
\begin{definition}[$(\gamma, \eps)$-Isometry]
\label{def:isom}
Given positive constants $\gamma$ and $\eps$, we say $X$ is $(\gamma, \eps)$-isometric if
$\lambda_{\min}(XX\transpose)\geq\gamma$ and $\lambda_{\max}(XX\transpose)\leq(1+\eps)\gamma$.
\end{definition}

\begin{proposition}
\label{prop:isom}
Assume $X\in\R^{n\times d}$ has independent entries drawn from $N(0,1/d)$. For any $\eps \in (0,1/2)$ and $\delta \in (0,1)$, if $d=\Omega(\frac{1}{\eps}\log\frac{n}{\delta}+\frac{n}{\eps}\log \frac{1}{\eps})$, then $X$ is $(1-\eps, 4\eps)$-isometric with probability $1-\delta$.
\end{proposition}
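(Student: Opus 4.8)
The plan is to reduce the isometry claim to a single operator-norm bound and then establish that bound by a net-plus-concentration argument. First I would note that the rows of $X$ are independent with covariance $I_d/d$, so $\E[XX\transpose] = I_n$, and the entire statement reduces to controlling the fluctuation $\norm{XX\transpose - I_n}_{\mathrm{op}}$. I claim it suffices to prove $\norm{XX\transpose - I_n}_{\mathrm{op}} \le \eps$: this forces every eigenvalue of $XX\transpose$ into $[1-\eps,1+\eps]$, so $\lambdamin(XX\transpose) \ge 1-\eps$ and $\lambdamax(XX\transpose) \le 1+\eps$. Because $\eps < 1/2$, one has $(1+4\eps)(1-\eps) = 1 + \eps + 2\eps(1-2\eps) \ge 1+\eps$, so the upper bound already yields $\lambdamax(XX\transpose) \le (1+4\eps)(1-\eps)$; taking $\gamma = 1-\eps$ this is exactly $(1-\eps,4\eps)$-isometry. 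The factor $4\eps$ in place of $\eps$, together with the hypothesis $\eps<1/2$, is precisely the slack that lets the two-sided error of size $\eps$ be absorbed into the one-sided isometry definition.

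For the operator-norm bound I would use the variational form $\norm{XX\transpose - I_n}_{\mathrm{op}} = \sup_{u\in S^{n-1}} \abs{\norm{X\transpose u}^2 - 1}$. The useful structure here is that for a fixed unit vector $u$ the vector $X\transpose u = \sum_{i} u_i x_i$ is Gaussian with covariance $(\sum_i u_i^2)I_d/d = I_d/d$, so $\norm{X\transpose u}^2$ has the law of $\tfrac1d\chi^2_d$. A standard $\chi^2$ tail bound (e.g.\ Laurent--Massart) then gives, for each fixed $u$ and each deviation level $s\in(0,1)$, an estimate of the form $\prob\bigl(\abs{\norm{X\transpose u}^2 - 1} \ge s\bigr) \le 2\exp(-c\,d\,s^2)$.

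The remaining and decisive step is to promote this pointwise control to a uniform bound over $S^{n-1}$. I would take an $\alpha$-net $\mathcal{N}$ of the sphere with $\abs{\mathcal{N}} \le (3/\alpha)^n$ and use the standard symmetric-matrix comparison $\norm{M}_{\mathrm{op}} \le (1-2\alpha)^{-1}\max_{u\in\mathcal{N}} \abs{u\transpose M u}$, choosing the resolution $\alpha$ and the deviation level $s$ to be small constant multiples of $\eps$ so that the extension factor $(1-2\alpha)^{-1}$ does not spoil the target level. A union bound over $\mathcal{N}$ bounds the failure probability by $\abs{\mathcal{N}}\cdot 2\exp(-c\,d\,s^2)$; requiring this to be at most $\delta$ forces $d\,s^2$ to exceed the sum of the net log-cardinality $n\log(1/\alpha)$ and the confidence term $\log(1/\delta)$. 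Substituting $\alpha,s \asymp \eps$ and rearranging then produces the stated lower bound on $d$, with the additive contributions $\tfrac{n}{\eps}\log\tfrac1\eps$ (from the net cardinality) and $\tfrac1\eps\log\tfrac n\delta$ (from the confidence level), the dependence on $\eps$ being governed by the Gaussian tail exponent.

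The main obstacle is exactly this discretization: the pointwise $\chi^2$ concentration is routine, but the net resolution must be tuned so that the extension constant does not erode the level $\eps$ while the net cardinality stays balanced against the tail. An equivalent and arguably cleaner route avoids nets altogether by writing $XX\transpose = \tfrac1d\sum_{k=1}^d z_k z_k\transpose$ as the sample covariance of $d$ standard Gaussian vectors $z_k\in\R^n$ (equivalently, as the squared singular values of $\sqrt d\,X$) and invoking a nonasymptotic bound on the extreme singular values of a Gaussian matrix; squaring the one-sided singular-value estimates gives the same two-sided eigenvalue control, after which the reduction of the first paragraph applies verbatim.
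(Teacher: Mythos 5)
Your overall strategy is the same as the paper's: the paper also reduces the claim to the operator-norm bound $\|XX\transpose - I_n\|\leq \eps$ and then closes with exactly the algebra you give, $\lambda_{\max}(XX\transpose)\leq 1+\eps\leq(1+4\eps)(1-\eps)$ for $\eps<1/2$ with $\gamma=1-\eps$. The only difference is that the paper obtains the operator-norm bound by citing a restricted-isometry lemma of Hand and Voroninski (Lemma~\ref{lem:RIP}/Corollary~\ref{cor:RIP}) as a black box, whereas you reprove it via the standard $\chi^2$-plus-net argument; that argument is the standard proof of such RIP statements, so this is a difference of packaging rather than of route.

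There is, however, one quantitative slip in your last step. Your pointwise tail is $\prob(|\|X\transpose u\|^2-1|\geq s)\leq 2\exp(-c\,d\,s^2)$, so after the union bound over a net of cardinality $(3/\alpha)^n$ with $\alpha,s\asymp\eps$, the requirement is $c\,d\,\eps^2\gtrsim n\log\frac{1}{\eps}+\log\frac{1}{\delta}$, i.e.\ $d=\Omega\bigl(\frac{n}{\eps^2}\log\frac{1}{\eps}+\frac{1}{\eps^2}\log\frac{1}{\delta}\bigr)$ --- the ``rearranging produces the stated lower bound'' sentence silently drops a factor of $1/\eps$ relative to the proposition's $\Omega\bigl(\frac{1}{\eps}\log\frac{n}{\delta}+\frac{n}{\eps}\log\frac{1}{\eps}\bigr)$. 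The paper gets the linear-in-$1/\eps$ dependence only because the cited lemma is stated with failure probability $(c_1/\eps)^k e^{-\gamma_1\eps m}$, i.e.\ a tail exponent linear rather than quadratic in $\eps$; your honest Laurent--Massart accounting cannot reproduce that in the small-$\eps$ regime (indeed a single fixed direction already forces $d\gtrsim\eps^{-2}\log\frac1\delta$). So your argument proves the conclusion under a slightly stronger hypothesis on $d$ than stated. Since $\eps$ is treated as a fixed small constant everywhere the proposition is invoked (Theorem~\ref{thm:lin_align}), nothing downstream is affected, but you should either state the $\eps^{-2}$ requirement or explicitly invoke the cited RIP lemma rather than claim your net computation yields the $\eps^{-1}$ rate.
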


\begin{theorem}
\label{thm:lin_align}
Assume all conditions from Theorem \ref{thm:lin_conv} hold and $X$ is $(\gamma, \eps)$-isometric with a small constant $\eps$. Let the regularization weights satisfy
\begin{align*}
\lambda(t) =
\begin{cases}
    \lambda, \quad t\leq T,\\
    0, \quad t > T,
\end{cases}
\end{align*}
with $\lambda=L\gamma$ and $T = \lfloor S_\lambda/\lambda\rfloor$ for some large constant $L$. Then for any $\delta\in(0,1)$, if $p = \Omega(d\log(d/\delta))$, with probability at least $1-\delta$, regularized feedback alignment  achieves alignment. Specifically, there exist a positive constant $c=c_\delta$ and time $T_c$, such that $\cos\angle(b, \beta(t))\geq c$ for all $t>T_c$.
\end{theorem}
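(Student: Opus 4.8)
The plan is to exploit a rank-one structure that feedback alignment forces on the first layer. Since $\psi=\mathrm{id}$, the $w_r$-update reads $w_r(t+1)=w_r(t)-\frac{\eta}{\sqrt p}b_r X\transpose e(t)$, so unrolling the trajectory gives
\[
 W(t)=W(0)-\frac{\eta}{\sqrt p}\, b\, u(t)\transpose X,\qquad u(t):=\textstyle\sum_{s=0}^{t-1}e(s).
\]
Thus every row of $W(t)-W(0)$ is the single vector $-\frac{\eta}{\sqrt p}X\transpose u(t)$ scaled by $b_r$: the entire first-layer displacement lives in the direction $b$. Substituting into the regularized second-layer update $\beta(t+1)=(1-\lambda(t))\beta(t)-\frac{\eta}{\sqrt p}W(t)X\transpose e(t)$ and pairing with $b$ yields the scalar recursion for the aligned coordinate
\[
 \langle b,\beta(t+1)\rangle=(1-\lambda(t))\langle b,\beta(t)\rangle-\tfrac{\eta}{\sqrt p}\,b\transpose W(0)X\transpose e(t)+\tfrac{\eta^2\|b\|^2}{p}\,u(t)\transpose XX\transpose e(t).
\]
The final term is the engine of alignment: it carries the factor $\|b\|^2=\Theta(p)$ and, as argued below, a definite positive sign. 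Accordingly I would write $\beta(t)=\beta^{(0)}(t)+\beta^{(W)}(t)+\mu(t)b$ by unrolling this (for a fixed error trajectory, linear) recursion, where $\beta^{(0)}(t)=\bigl(\prod_{s<t}(1-\lambda(s))\bigr)\beta(0)$ is the shrinking initialization, $\beta^{(W)}(t)$ collects the $W(0)$-terms, and $\mu(t)b$ is the purely aligned component.

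Next I would control the residual over the regularization window $[0,T]$. \cref{thm:lin_conv} and \eqref{eq:reg_error_bd} supply the upper bound needed for stability, but the crux is a matching lower bound: because the contraction $1-\lambda$ biases $\beta$ away from the interpolating solution, the residual cannot collapse while $\lambda(t)=\lambda>0$, and it persists near a nonzero level of order $\lambda\|y\|/(\eta(\gamma/2+\lambda))$. Using the $(\gamma,\eps)$-isometry of $X$ to replace $XX\transpose$ by $\gamma I$ up to an $O(\eps)$ error, I would show consecutive residuals stay positively correlated, so that $u(t)\transpose XX\transpose e(t)\gtrsim\gamma\langle u(t),e(t)\rangle>0$ and $u(t)$ grows essentially linearly in $t$. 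Summing the geometrically weighted driving term over $[0,T]$ then gives a macroscopic aligned component, $\mu(T)\|b\|^2=\langle b,\mu(T)b\rangle\gtrsim \mathrm{const}\cdot\sqrt n\,\sqrt p$.

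It then remains to show the non-aligned parts are subdominant. The initialization term is shrunk by the full regularization budget, $\|\beta^{(0)}(T)\|\approx(1-\lambda)^{T}\sqrt p\approx e^{-S_\lambda}\sqrt p=o(1)$ since $S_\lambda=\Theta(\sqrt p)$, so neither its norm nor its inner product with $b$ matters; this is precisely the role of regularization. The term $\beta^{(W)}(t)$ has norm $O(\sqrt n)$ from the operator-norm bounds $\|W(0)\|_{\mathrm{op}}=O(\sqrt p)$, $\|X\|_{\mathrm{op}}=O(\sqrt\gamma)$ and the geometric weighting, and crucially its inner product with $b$ is smaller by a factor $O(\sqrt{d/p})=o(1)$, since $b$ is independent of $W(0)$ and $p=\Omega(d\log(d/\delta))$. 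Hence $\langle b,\beta(T)\rangle\gtrsim\mathrm{const}\cdot\sqrt n\,\sqrt p$ while $\|\beta(T)\|=O(\sqrt n)$ and $\|b\|=\Theta(\sqrt p)$, giving $\cos\angle(b,\beta(T))\ge c$ for a positive constant $c=c_\delta$. Finally, for $t>T$ the regularization is off, so by \cref{thm:lin_conv} with $\lambda=0$ the error decays geometrically to zero and the parameters converge; the cosine, being continuous in the now-convergent parameters, stays bounded below, yielding $\cos\angle(b,\beta(t))\ge c$ for all $t>T_c$.

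The main obstacle is the two-sided control of the residual on $[0,T]$. The upper bound is inherited from \cref{thm:lin_conv}, but the lower bound — showing that regularization keeps the residual from vanishing and that the accumulated sum $u(t)$ remains sign-aligned with $e(t)$ through $XX\transpose$ — is what forces the driving term to accumulate into a macroscopic aligned component rather than a vanishing one. The $(\gamma,\eps)$-isometry assumption is exactly what collapses the coupled matrix dynamics of $(W,\beta,e)$ to a tractable near-scalar system, and propagating the $O(\eps)$ deviations from exact isometry through the accumulation estimate and the final cosine ratio is the technical heart of the argument.
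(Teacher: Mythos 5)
Your overall architecture matches the paper's: unroll $W(t)=W(0)-\frac{\eta}{\sqrt p}b\,u(t)\transpose X$, decompose $\beta(t)$ into a shrinking initialization term, a $W(0)$-term, and a purely $b$-aligned term driven by the accumulated inner products $e(i)\transpose XX\transpose\sum_{j<i}e(j)$, then show the aligned term dominates (this is Lemma~\ref{lma:one_step_update} and Lemma~\ref{lma:suf_cond}). However, there are two genuine gaps. First, the step you flag as ``the crux''---a lower bound on the residual during $[0,T]$ and positivity of $u(t)\transpose XX\transpose e(t)$---is asserted, not proved. The paper's mechanism (Lemma~\ref{lma:decomp_1}) is the orthogonal decomposition $e(t)=a(t)\bar y+\xi(t)$ with $\bar y=-y/\|y\|$: one shows $a(t)\geq\frac{\lambda-\gamma}{\lambda+\gamma}\|y\|$ while $\|\xi(t)\|\leq\frac{\gamma}{\lambda+\gamma}\|y\|$, which is where the requirement $\lambda=L\gamma$ with $L$ large actually enters; all residuals then point predominantly along $-y$, making the pairwise inner products uniformly positive. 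Without some such argument your ``consecutive residuals stay positively correlated'' claim has no support.

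Second, and more seriously, your treatment of $t>T$ by continuity fails quantitatively. After the regularization is switched off, $\|e(T)\|$ is still of order $\|y\|=\Theta(\sqrt n)$ (indeed Lemma~\ref{lma:decomp_1} forces $a(T)\gtrsim\|y\|$), so $\sum_{i>T}\|e(i)\|\lesssim\|e(T)\|/(\eta\gamma)$ and hence $\|\beta(\infty)-\beta(T)\|\lesssim\frac{\eta}{\sqrt p}\|W\|\|X\|\sum_{i>T}\|e(i)\|=O(\|y\|/\sqrt\gamma)$, whereas $\|\beta(T)\|=O(\|y\|/(L\sqrt\gamma))$ by your own estimates. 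The post-$T$ drift of $\beta$ is therefore \emph{larger} than $\|\beta(T)\|$ by a factor of order $L$, so continuity of the cosine at $\beta(T)$ gives nothing. The paper spends the entire second half of its proof (the $g=XX\transpose\sum_{j<T}e(j)$ and $\alpha(t)=\bar g\transpose e(t)$ analysis, culminating in \eqref{eq:exxe_e_bd}) showing that the post-$T$ increments $\sum_{i>T}e(i)\transpose XX\transpose\sum_{j<i}e(j)$ are themselves bounded below by $C\frac{\sqrt{p\gamma}}{\eta}\sum_{i>T}\|e(i)\|$, i.e., that the continued updates reinforce rather than destroy alignment. You need an argument of this kind; as written, your proof establishes alignment only at the single time $t=T$, not for all $t>T_c$.
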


We defer the proofs of Proposition \ref{prop:isom}, Theorem \ref{thm:lin_conv} and Theorem \ref{thm:lin_align} to \cref{sec:appendix-alignment}. In fact, we prove Theorem \ref{thm:lin_align} by directly computing $\beta(t)$ and the cosine of the angle. Although $b$ doesn't show up in the update of $\beta$, it can still propagate to $\beta$ through $W$. Since the size of the component of $b$ in $\beta(t)$ depends on the inner-product $\langle e(t), e(t')\rangle$ for all previous steps $t'\leq t$, the norm bound \eqref{eq:reg_error_bd} from Theorem \ref{thm:lin_conv} is insufficient; thus, a more careful analysis of $e(t)$ is required.

We should point out that the constant $c$ in the lower bound is independent of the sample size $n$, input dimension $d$, network width $p$ and learning rate $\eta$. We also remark that the cutoff schedule of $\lambda(t)$ is just chosen for simplicity. For other schedules such as inverse-squared decay or exponential decay, one could also obtain the same alignment result as long as the summation of $\lambda(t)$ is less than $S_\lambda$.

\paragraph{Large sample scenario.} In Theorems \ref{thm:lin_conv} and \ref{thm:lin_align}, we consider the case where the sample size $n$ is less than the input dimension $d$, so that positive definiteness of $XX\transpose$ can be established. However, both results still hold for $n>d$. In fact, the squared error loss $\calL$ can be written as
\begin{equation*}
\sum_{i=1}^n\big(f(x_i)-y\big)^2 = \big\|\frac{1}{\sqrt{p}}XW\transpose\beta-y\big\|^2 = \big\|\frac{1}{\sqrt{p}}XW\transpose\beta-\bar{y}\big\|^2+ \|\bar{y}-y\|^2,
\end{equation*}
where $\bar{y}$ denotes the projection of $y$ onto the column space of $X$. Without loss of generality, we assume $y=\bar{y}$. As a result, $y$ and the columns of $X$ are all in the same $d$-dimensional subspace of $\Rn$ and $XX\transpose$ is positive definite on this subspace, as long as $X$ has full column rank. Consequently, we can either work on this subspace of $\Rn$ or project all the vectors onto $\Rd$, and the isometric condition is revised to only consider the $d$ nonzero eigenvalues of $XX\transpose$.

\section{Simulations}

Our experiments apply the feedback alignment algorithm to two-layer networks, using a range of networks with different widths and activations. The numerical results suggest that regularization is essential in achieving alignment, in both regression and classification tasks, for linear and nonlinear models. We implement the feedback alignment procedure in PyTorch as an extension of the autograd module for backpropagation, and the training is done on V100 GPUs from internal clusters.

\paragraph{Feedback alignment on synthetic data.}

We first train two-layer networks on synthetic data, where each network $f$ shares the architecture shown in \eqref{eqn:nonlinear-network} and the data are generated by another network $f_0$ that has the same architecture but with random Gaussian weights. We present the experiments for both linear and nonlinear networks, where the activation functions are chosen to be Rectified Linear Unit (ReLU) and hyperbolic tangent (Tanh) for nonlinear case. We set training sample sample size to $n=50$ and the input dimension $d=150$, but vary the hidden layer width $p = 100\times 2^k$ with $k\in[7]$. During training, we take step size $\eta = 10^{-4}$ for linear networks and $\eta = 10^{-3},10^{-2}$ for ReLU and Tanh networks, respectively.

\begin{figure}[ht]
\centering
\begin{subfigure}[b]{.33\textwidth}
  \centering
  \includegraphics[width=\linewidth]{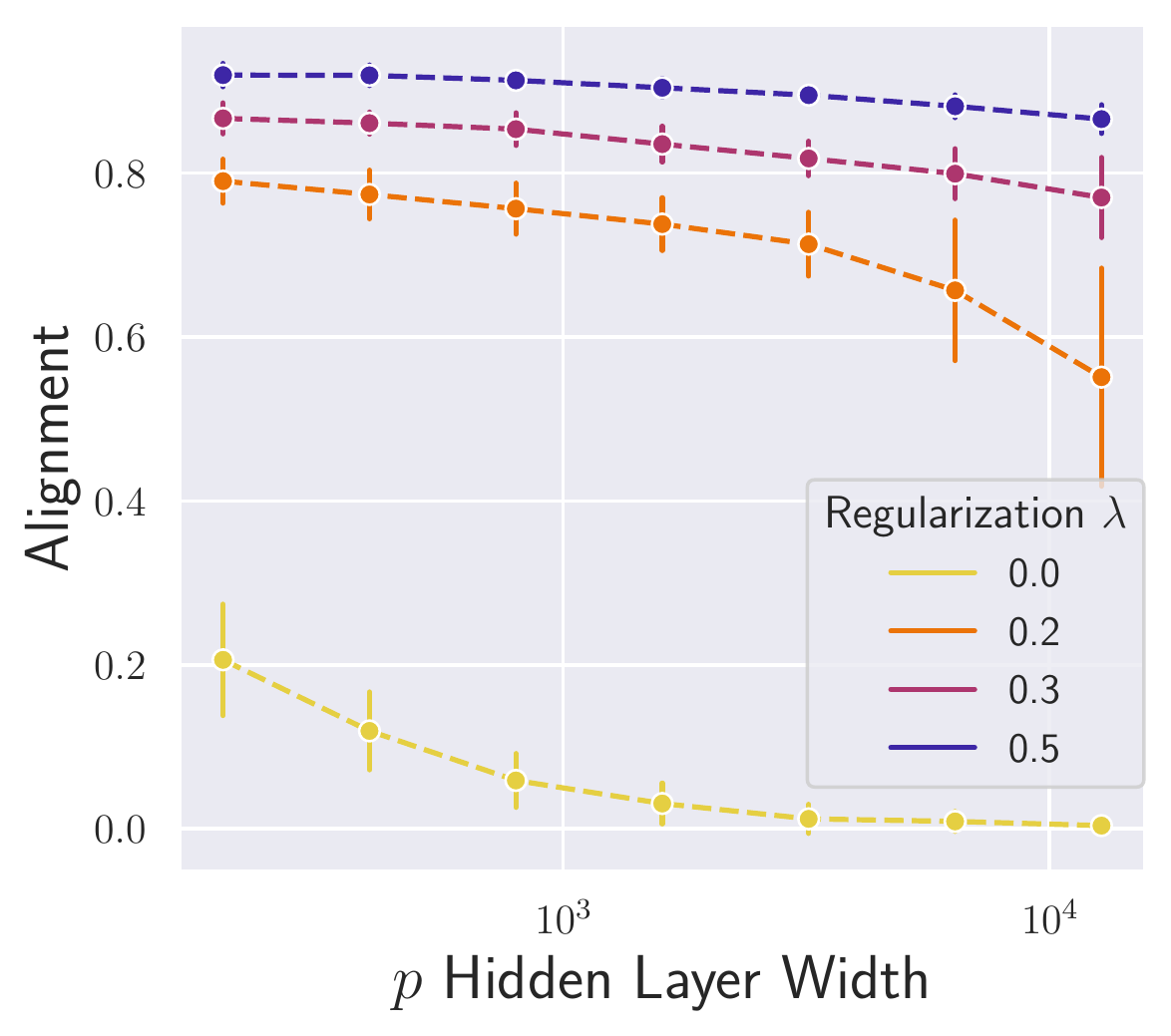}
  \caption{Alignment on linear network.}
  \label{fig:align_lr_non_autograd_l2}
\end{subfigure}\hfill
\begin{subfigure}[b]{.33\textwidth}
  \centering
  \includegraphics[width=\linewidth]{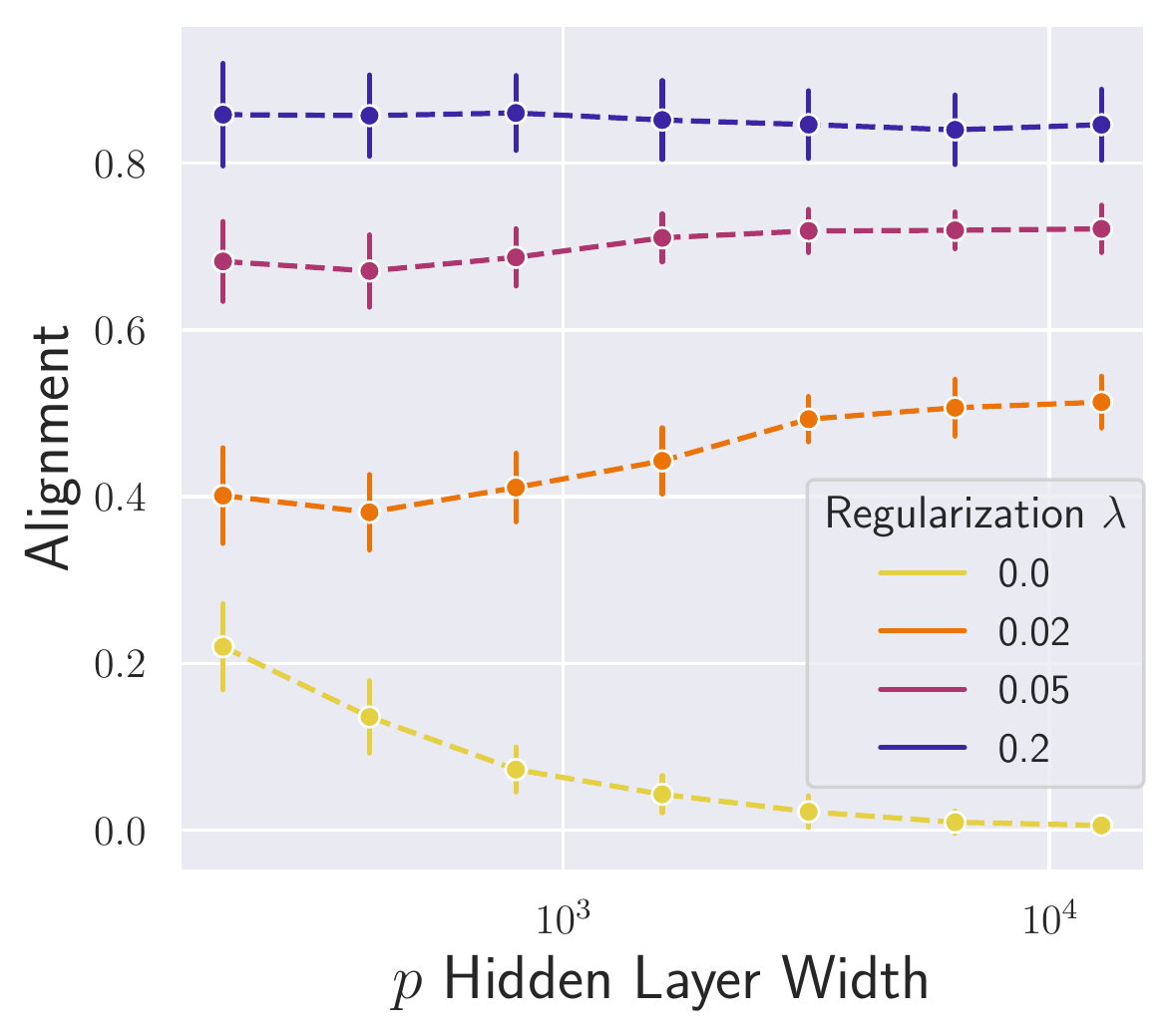}
  \caption{Alignment on ReLU network.}
  \label{fig:align_nn_relu_autograd_l2}
\end{subfigure}\hfill
\begin{subfigure}[b]{.33\textwidth}
  \centering
  \includegraphics[width=\linewidth]{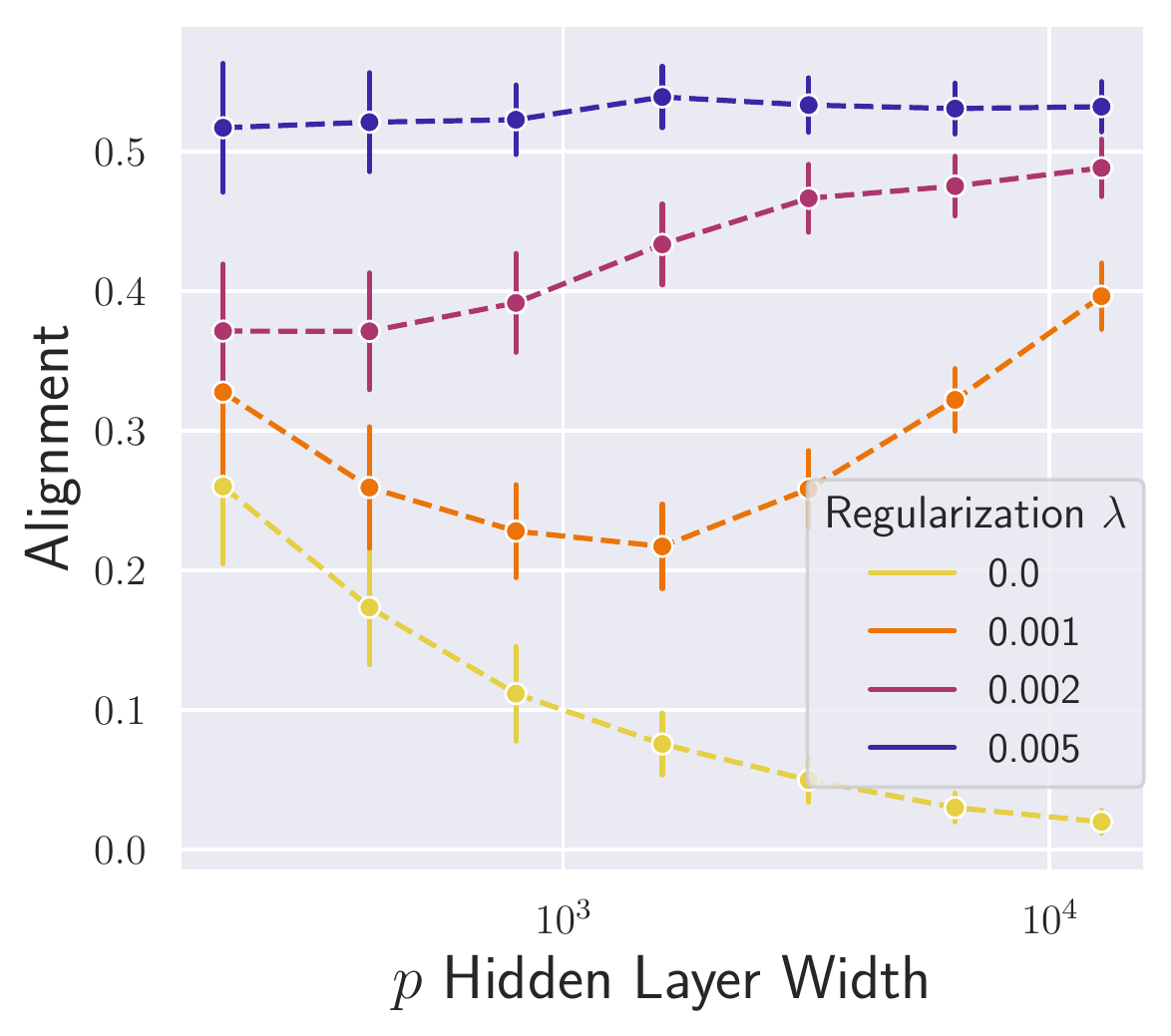}
  \caption{Alignment on Tanh network.}
  \label{fig:align_nn_tanh_autograd_l2}
\end{subfigure}
\medskip
\begin{subfigure}[b]{.33\textwidth}
  \centering
  \includegraphics[width=\linewidth]{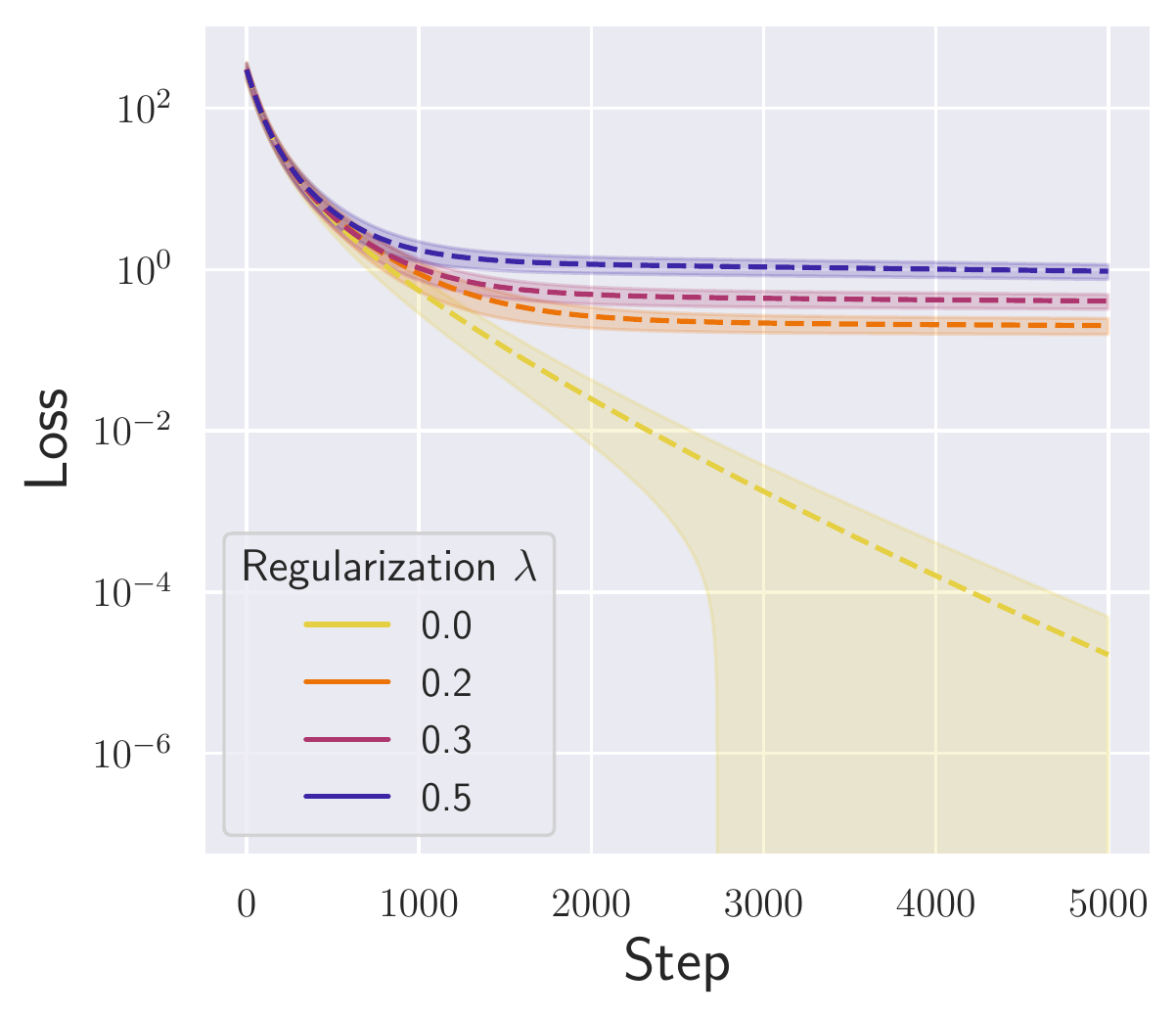}
  \caption{Loss on linear network.}
  \label{fig:loss_lr_non_autograd_l2}
\end{subfigure}\hfill
\begin{subfigure}[b]{.33\textwidth}
  \centering
  \includegraphics[width=\linewidth]{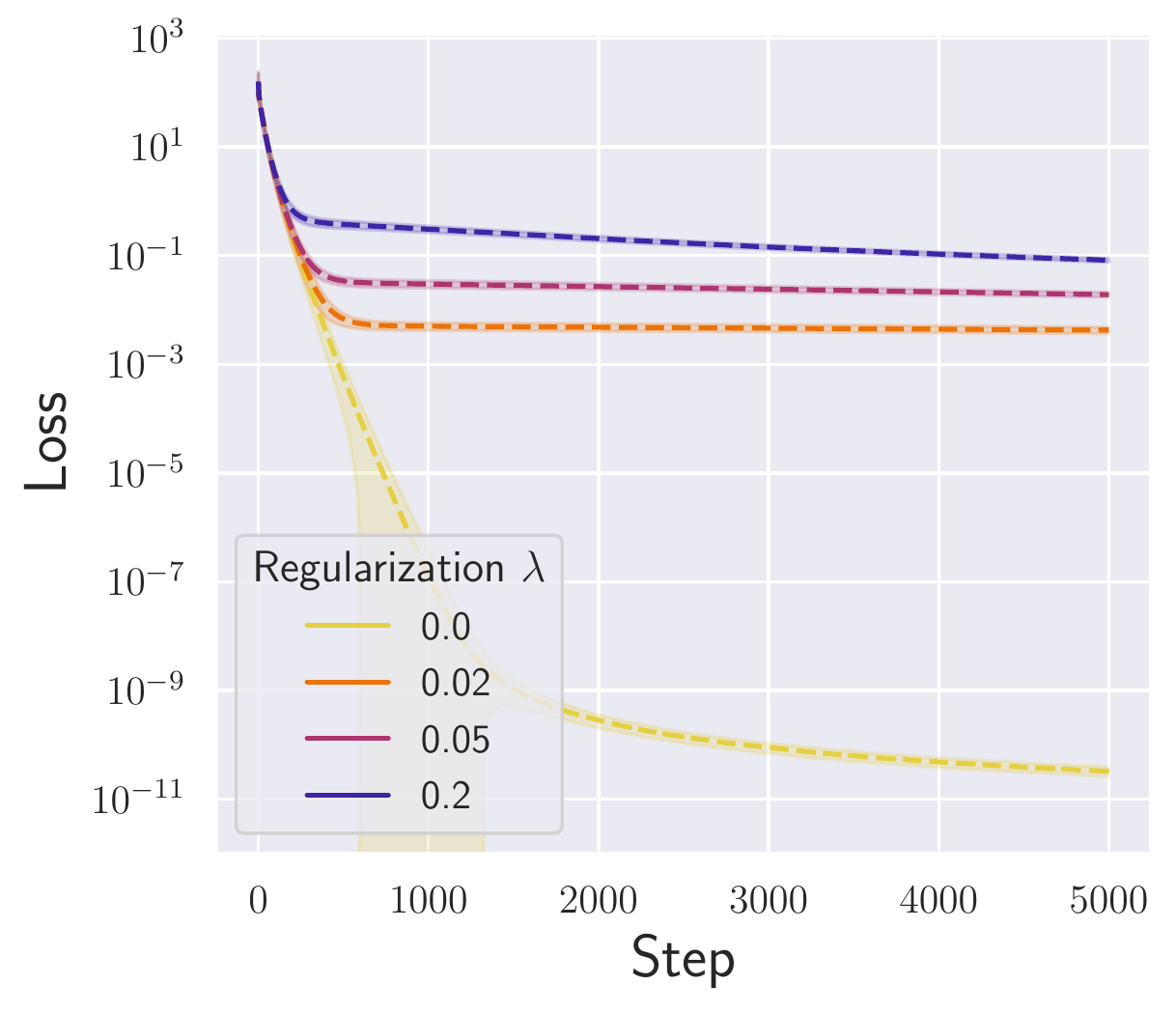}
  \caption{Loss on ReLU network.}
  \label{fig:loss_nn_relu_autograd_l2}
\end{subfigure}\hfill
\begin{subfigure}[b]{.33\textwidth}
  \centering
  \includegraphics[width=\linewidth]{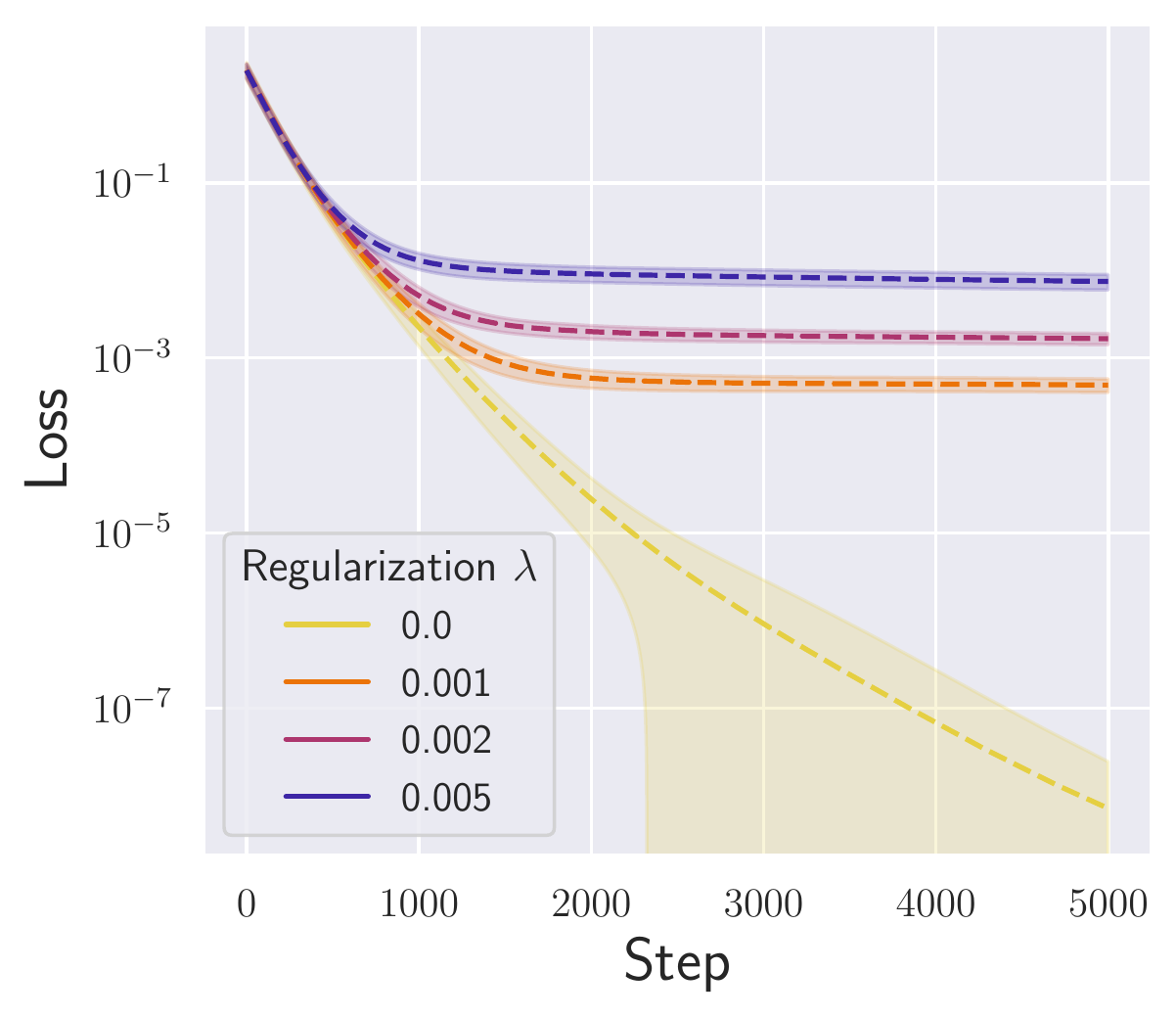}
  \caption{Loss on Tanh network.}
  \label{fig:loss_nn_tanh_autograd_l2}
\end{subfigure}
\caption{Comparisons of alignment and convergence for the feedback alignment algorithm with different levels of $\ell_2$ regularization. In \cref{fig:align_lr_non_autograd_l2,fig:align_nn_relu_autograd_l2,fig:align_nn_tanh_autograd_l2}, the data points represent the mean value computed across simulations, and the error bars mark the standard deviation out of $50$ independent runs. In \cref{fig:loss_lr_non_autograd_l2,fig:loss_nn_relu_autograd_l2,fig:loss_nn_tanh_autograd_l2}, we show the trajectories of the training loss for networks with $p = 3200$, with the shaded areas indicating the standard deviation over $50$ independent runs. The $x$-axes on the first row and the $y$-axes on the second row are presented using a logarithmic scale.}
\label{fig:synthetic-l2}
\end{figure}

In \cref{fig:align_lr_non_autograd_l2,fig:align_nn_relu_autograd_l2,fig:align_nn_tanh_autograd_l2}, we show how alignment depends on regularization and the degree of overparameterization as measured by the hidden layer width $p$. Alignment is measured by the cosine of the angle between the forward weights $\beta$ and backward weights $b$. We train the networks until the loss function converges; this procedure is repeated $50$ times for each $p$ and $\lambda$. For all three types of networks, as $p$ increases, alignment vanishes if there is no regularization, and grows with the level of regularization $\lambda$ for the same network. We complement the alignment plots with the corresponding loss curves, where the training loss converges slower with larger regularization. These numerical results are consistent with our theoretical statements. Due to the regularization, the loss converges to a positive number that is of the same order as $\lambda$.

We remark that using dropout as a form of regularization can also help the alignment between forward and backward weights \citep{wager2013dropout}. However, our numerical results suggest that dropout regularization fails to keep the alignment away from zero for networks with large hidden layer width. No theoretical result is available that explains the underlying mechanism.

\paragraph{Feedback alignment on the MNIST dataset.}

The \texttt{MNIST} dataset is available under the Creative Commons Attribution-Share Alike 3.0 license \citep{deng2012mnist}. It consists of 60,000 training images and 10,000 test images of dimension $28$ by $28$. We reshape them into vectors of length $d = 784$ and normalize them by their mean and standard deviation. The network structure is $784$-$1000$-$10$ with ReLU activation at the hidden layer and with softmax normalization at output layer. During training, we choose the batch size to be $600$ and the step size $\eta = 10^{-2}$. The training procedure uses $300$ epochs in total. We repeat the training 10 times for each choice of $\lambda$.

\cref{fig:mnist} shows the performance of feedback alignment with regularization $\lambda = 0, 0.1, 0.3$. Since the output of the network is not one-dimensional but 10-dimensional, the alignment is now measured by $\cos \angle(\dbp(h),\dfa(h))$, where $\dbp(h)$ is the error signal propagated to the hidden neurons $h$ through forward weights $\beta$, and $\dfa(h)$ the error weighted by the random backward weights $b$. We observe that both alignment and convergence are improved by adding regularization to the training, and increasing the regularization level $\lambda$ can further facilitate alignment, with a small gain in test accuracy.

\begin{figure}[t]
  \centering
  \includegraphics[width=\textwidth]{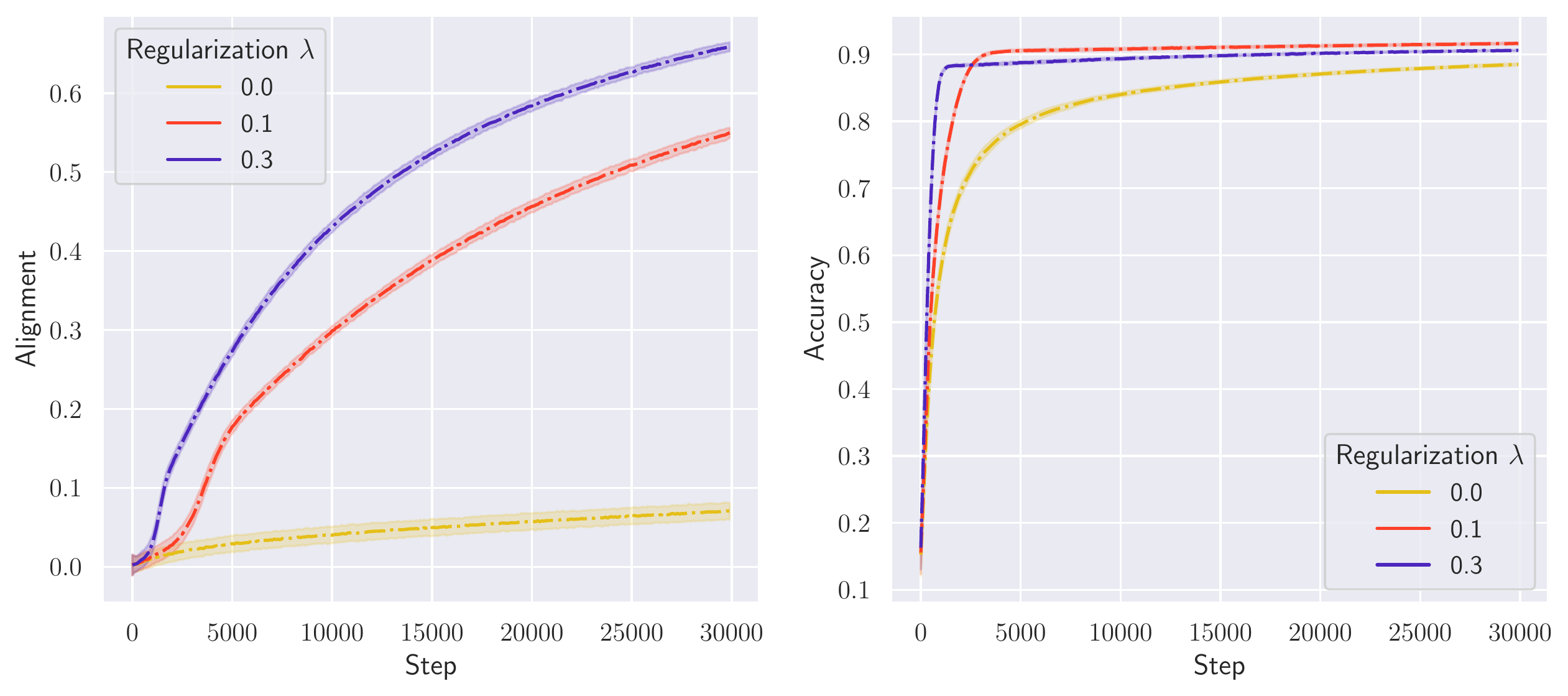}
  \caption{Comparisons on alignment and accuracy for feedback alignment algorithm with $\lambda=0,0.1,0.3$. The left figure shows alignment defined by $\cos \angle(\dbp(h),\dfa(h))$, and right figure shows the accuracy on the test set. The dashed lines and corresponding shaded areas represent the means and the standard deviations over $10$ runs with random initialization.}
  \label{fig:mnist}
\end{figure}

\section{Discussion}

In this paper we analyzed the feedback alignment algorithm of
\cite{lillicrap2016random}, showing convergence of the algorithm. The convergence is subtle, as the algorithm does not directly minimize the target loss function; rather, the error is transferred to the hidden neurons through random weights that do not change during the course of learning.
The supplement to \cite{lillicrap2016random} presents interesting insights on the dynamics of the algorithm, such as how the feedback weights act as a pseudoinverse of the forward weights. After giving an analysis of convergence in the linear case, the authors state that
``a general proof must be radically different from those used to demonstrate convergence for backprop'' (Supplementary note 16), observing that the algorithm does not minimize any loss function. Our proof of convergence in the general nonlinear case leverages techniques from
the use of neural tangent kernel analysis in the over-parameterized setting, but requires more care because the kernel is not positive semi-definite at initialization. In particular, as a sum of two terms $G$ and $H$, the matrix $G$ is concentrated around its postive-definite mean, while $H$ is not generally postive-semidefinite. However, we show that the entries of both matrices remain close to their initial values, due to over-parameterization, and analyze the error term in a Taylor expansion to establish convergence.

In analyzing alignment, we found that regularization increases the alignment; without regularization, the alignment may not persist as the network becomes wider, as our simulations clearly show.
Our analysis in the linear case proceeds by establishing a recurrence of the form
$$\beta(t) = (1-\eta \lambda)^{t-1} \beta(0) + \frac{\eta}{\sqrt{p}} W(0) X\transpose \alpha_1(t-1) +
\left(\frac{\eta}{\sqrt{p}} \right) b \alpha_2(t-1)$$
and controlling $\alpha_1$ while showing that $\alpha_2$ remains sufficiently large; the
regularization kills off the first term.
Although we see no obstacle, in principle, to carrying out this proof strategy in the nonlinear
case, the calculations are more complex. While convergence requires analysis of the norm of the error, alignment requires understanding the direction of the error. But our simulations suggest this result will go through.

In terms of future research, a technical direction is to extend our results to multilayer networks. It would be interesting to explore local methods to update the backward weights $b$, rather than fixing them, perhaps using a Hebbian update rule in combination with the forward weights $W$. More generally, it is important to study other biologically plausible learning rules that can be implemented in deep learning frameworks at scale and without loss of performance.  The results presented here offer support for this as a fruitful line of research.


\section*{Acknowledgments}

Research supported in part by NSF grant CCF-1839308.

\setlength{\bibsep}{8pt plus 0.3ex}
\bibliographystyle{apalike}
\bibliography{feedback_alignment}

\begin{thebibliography}{}

\bibitem[Akrout et~al., 2019]{akrout}
Akrout, M., Wilson, C., Humphreys, P., Lillicrap, T., and Tweed, D.~B. (2019).
\newblock Deep learning without weight transport.
\newblock In Wallach, H., Larochelle, H., Beygelzimer, A., d\textquotesingle
  Alch\'{e}-Buc, F., Fox, E., and Garnett, R., editors, {\em Advances in Neural
  Information Processing Systems}, volume~32. Curran Associates, Inc.

\bibitem[Bartunov et~al., 2018]{bartunov}
Bartunov, S., Santoro, A., Richards, B., Marris, L., Hinton, G.~E., and
  Lillicrap, T. (2018).
\newblock Assessing the scalability of biologically-motivated deep learning
  algorithms and architectures.
\newblock In {\em Advances in Neural Information Processing Systems}, pages
  9368--9378.

\bibitem[Bellec et~al., 2019]{bellec}
Bellec, G., Scherr, F., Hajek, E., Salaj, D., Legenstein, R., and Maass, W.
  (2019).
\newblock Biologically inspired alternatives to backpropagation through time
  for learning in recurrent neural nets.

\bibitem[Chen and Xu, 2020]{chen2020deep}
Chen, L. and Xu, S. (2020).
\newblock Deep neural tangent kernel and laplace kernel have the same rkhs.
\newblock {\em arXiv preprint arXiv:2009.10683}.

\bibitem[Deng, 2012]{deng2012mnist}
Deng, L. (2012).
\newblock The {MNIST} database of handwritten digit images for machine learning
  research.
\newblock {\em IEEE Signal Processing Magazine}, 29(6):141--142.

\bibitem[Du et~al., 2019]{du2019gradient}
Du, S., Lee, J., Li, H., Wang, L., and Zhai, X. (2019).
\newblock Gradient descent finds global minima of deep neural networks.
\newblock In {\em International Conference on Machine Learning}, pages
  1675--1685. PMLR.

\bibitem[Du et~al., 2018]{du2018gradient}
Du, S.~S., Zhai, X., Poczos, B., and Singh, A. (2018).
\newblock Gradient descent provably optimizes over-parameterized neural
  networks.
\newblock {\em arXiv preprint arXiv:1810.02054}.

\bibitem[Elman et~al., 1996]{elman}
Elman, J.~L., Bates, E.~A., Johnson, M.~H., Annette Karmiloff-Smith, D.~P., and
  Plunkett, K. (1996).
\newblock {\em Rethinking Innateness: A connectionist perspective on
  development}.
\newblock Cambridge MA: MIT Press.

\bibitem[Fei et~al., 2021a]{fei2021exponential}
Fei, Y., Yang, Z., Chen, Y., and Wang, Z. (2021a).
\newblock Exponential bellman equation and improved regret bounds for
  risk-sensitive reinforcement learning.
\newblock {\em Advances in Neural Information Processing Systems}, 34.

\bibitem[Fei et~al., 2020]{fei2020risk}
Fei, Y., Yang, Z., Chen, Y., Wang, Z., and Xie, Q. (2020).
\newblock Risk-sensitive reinforcement learning: Near-optimal risk-sample
  tradeoff in regret.
\newblock {\em arXiv preprint arXiv:2006.13827}.

\bibitem[Fei et~al., 2021b]{fei2021risk}
Fei, Y., Yang, Z., and Wang, Z. (2021b).
\newblock Risk-sensitive reinforcement learning with function approximation: A
  debiasing approach.
\newblock In {\em International Conference on Machine Learning}, pages
  3198--3207. PMLR.

\bibitem[Frenkel et~al., 2021]{frenkel2021learning}
Frenkel, C., Lefebvre, M., and Bol, D. (2021).
\newblock Learning without feedback: Fixed random learning signals allow for
  feedforward training of deep neural networks.
\newblock {\em Frontiers in neuroscience}, 15.

\bibitem[Gao and Lafferty, 2020]{gao2020model}
Gao, C. and Lafferty, J. (2020).
\newblock Model repair: Robust recovery of over-parameterized statistical
  models.
\newblock {\em arXiv preprint arXiv:2005.09912}.

\bibitem[Hand and Voroninski, 2018]{hand2018global}
Hand, P. and Voroninski, V. (2018).
\newblock Global guarantees for enforcing deep generative priors by empirical
  risk.
\newblock In {\em Conference On Learning Theory}, pages 970--978. PMLR.

\bibitem[Hebb, 1961]{hebb1}
Hebb, D.~O. (1961).
\newblock Distinctive features of learning in the higher animal.
\newblock In Delafresnaye, J.~F., editor, {\em Brain Mechanisms and Learning}.
  London: Oxford University Press.

\bibitem[Hebb, 2005]{hebb2005organization}
Hebb, D.~O. (2005).
\newblock {\em The organization of behavior: A neuropsychological theory}.
\newblock Psychology Press.

\bibitem[Jacot et~al., 2018]{jacot2018neural}
Jacot, A., Gabriel, F., and Hongler, C. (2018).
\newblock Neural tangent kernel: Convergence and generalization in neural
  networks.
\newblock {\em arXiv preprint arXiv:1806.07572}.

\bibitem[Launay et~al., 2020]{launay2020direct}
Launay, J., Poli, I., Boniface, F., and Krzakala, F. (2020).
\newblock Direct feedback alignment scales to modern deep learning tasks and
  architectures.
\newblock {\em arXiv preprint arXiv:2006.12878}.

\bibitem[Laurent and Massart, 2000]{laurent2000adaptive}
Laurent, B. and Massart, P. (2000).
\newblock Adaptive estimation of a quadratic functional by model selection.
\newblock {\em Annals of Statistics}, pages 1302--1338.

\bibitem[Lillicrap et~al., 2016]{lillicrap2016random}
Lillicrap, T.~P., Cownden, D., Tweed, D.~B., and Akerman, C.~J. (2016).
\newblock Random synaptic feedback weights support error backpropagation for
  deep learning.
\newblock {\em Nature communications}, 7(1):1--10.

\bibitem[Lillicrap et~al., 2020]{lillicrap2020backpropagation}
Lillicrap, T.~P., Santoro, A., Marris, L., Akerman, C.~J., and Hinton, G.
  (2020).
\newblock Backpropagation and the brain.
\newblock {\em Nature Reviews Neuroscience}, 21(6):335--346.

\bibitem[Mahmoudi et~al., 2013]{mahmoudi2013towards}
Mahmoudi, B., Pohlmeyer, E.~A., Prins, N.~W., Geng, S., and Sanchez, J.~C.
  (2013).
\newblock Towards autonomous neuroprosthetic control using hebbian
  reinforcement learning.
\newblock {\em Journal of neural engineering}, 10(6):066005.

\bibitem[Medler, 1998]{medler}
Medler, D.~A. (1998).
\newblock A brief history of connectionism.
\newblock {\em Neural Computing Surveys}, 1:61--101.

\bibitem[Mesnard et~al., 2016]{mesnard2016towards}
Mesnard, T., Gerstner, W., and Brea, J. (2016).
\newblock Towards deep learning with spiking neurons in energy based models
  with contrastive hebbian plasticity.
\newblock {\em arXiv preprint arXiv:1612.03214}.

\bibitem[Najarro and Risi, 2020]{najarro2020meta}
Najarro, E. and Risi, S. (2020).
\newblock Meta-learning through hebbian plasticity in random networks.
\newblock {\em arXiv preprint arXiv:2007.02686}.

\bibitem[Niv et~al., 2012]{niv2012neural}
Niv, Y., Edlund, J.~A., Dayan, P., and O'Doherty, J.~P. (2012).
\newblock Neural prediction errors reveal a risk-sensitive
  reinforcement-learning process in the human brain.
\newblock {\em Journal of Neuroscience}, 32(2):551--562.

\bibitem[N{\o}kland, 2016]{nokland2016direct}
N{\o}kland, A. (2016).
\newblock Direct feedback alignment provides learning in deep neural networks.
\newblock {\em arXiv preprint arXiv:1609.01596}.

\bibitem[Oja, 1982]{oja}
Oja, E. (1982).
\newblock A simplified neuron model as a principal component analyzer.
\newblock {\em J. Mathematical Biology}, 15:267--273.

\bibitem[Paulsen and Sejnowski, 2000]{paulsen}
Paulsen, O. and Sejnowski, T.~J. (2000).
\newblock Natural patterns of activity and long-term synaptic plasticity.
\newblock {\em Current Opinion in Neurobiology}, 10(2):172--179.

\bibitem[Rumelhart et~al., 1986a]{pdp}
Rumelhart, D., McClelland, J., and the PDP Research~Group (1986a).
\newblock {\em Parallel Distributed Processing: Explorations in the
  Microstructure of Cognition}, volume 2: Psychologcal and Biological Models.
\newblock Cambridge, Massachusetts: MIT Press.

\bibitem[Rumelhart et~al., 1986b]{rumelhart:86}
Rumelhart, D.~E., Hinton, G.~E., and Williams, R.~J. (1986b).
\newblock Learning representations by back-propagating errors.
\newblock {\em Nature}, 323(6088):533--536.

\bibitem[Sejnowski, 1999]{sejnowski2}
Sejnowski, T.~J. (1999).
\newblock The book of {Hebb}.
\newblock {\em Neuron}, 24:773--776.

\bibitem[Sejnowski and Tesauro, 1989]{sejnowski1}
Sejnowski, T.~J. and Tesauro, G. (1989).
\newblock The hebb rule for synaptic plasticity: {A}lgorithms and
  implementations.
\newblock In Byrne, J.~H. and Berry, W.~O., editors, {\em Neural Models of
  Plasticity}, pages 94--103.

\bibitem[Shen et~al., 2014]{shen2014risk}
Shen, Y., Tobia, M.~J., Sommer, T., and Obermayer, K. (2014).
\newblock Risk-sensitive reinforcement learning.
\newblock {\em Neural computation}, 26(7):1298--1328.

\bibitem[Wager et~al., 2013]{wager2013dropout}
Wager, S., Wang, S., and Liang, P. (2013).
\newblock Dropout training as adaptive regularization.
\newblock {\em arXiv preprint arXiv:1307.1493}.

\bibitem[Yamins and DiCarlo, 2016]{yamins2}
Yamins, D. and DiCarlo, J. (2016).
\newblock Using goal-driven deep learning models to understand sensory cortex.
\newblock {\em Nature Neuroscience}, 19(3):356--365.

\bibitem[Yildirim et~al., 2019]{ilker1}
Yildirim, I., Wu, J., Kanwisher, N., and Tenenbaum, J. (2019).
\newblock An integrative computational architecture for object-driven cortex.
\newblock {\em J.B. Current Opinion in Neurobiology}.

\end{thebibliography}

\clearpage
\appendix

\section{Convergence on Two-Layer Nonlinear Networks}\label{sec:appendix-convergence}
We consider the family of neural networks
\begin{equation}
f(x) = \frac{1}{\sqrt p}\sum_{r=1}^p \beta_r \psi(w_r\transpose x) = \frac{1}{\sqrt p}\beta\transpose \psi(Wx)
\end{equation}
where $\beta \in \R^p$, $W = (w_1,...,w_p)\transpose\in\R^{p\times d}$, and $\psi$ is an activation function. Given data, the loss function is
\begin{equation}
\Loss(W, \beta) = \frac{1}{2}\sum_{i=1}^n(f(x_i)-y_i)^2 = \frac{1}{2}\sum_{i=1}^n\Big(\frac{1}{\sqrt p}\beta\transpose \psi(Wx_i)-y\Big)^2.
\end{equation}
The feedback alignment algorithm has updates
\begin{equation}
\label{eq:updates}
\begin{aligned}
    W(t+1) &= W(t) - \eta\frac{1}{\sqrt p}\sum_{i=1}^nD_i(t)bx_i\transpose e_i(t) \\
    \beta(t+1) &= \beta(t)- \eta \frac{1}{\sqrt p}\sum_{i=1}^n \psi(W(t)x_i)e_i(t)
\end{aligned}
\end{equation}
where $D_i(t) = \text{diag}(\psi'(W(t)x_i))$ and $e_i(t) = \frac{1}{\sqrt p}\beta(t)\transpose \psi(W(t)x_i)-y_i$. To help make the proof more readable, we use $c$, $C$ to denote the global constants whose values may vary from line to line.

\subsection{Concentration Results}

\begin{lemma}[Lemma A.7 in \citealp{gao2020model}]\label{lma:G}
Assume $x_1,...,x_n \overset{\iid}{\sim} \calN(0,I_d/d)$. We define  matrix $\widetilde{G}\in \R^{n\times n}$ with entries
\begin{equation*}
    \widetilde{G}_{i,j} = |\E\psi'(Z)|^2 \frac{x_i\transpose x_j}{\|x_i\|\|x_j\|} + (\E|\psi(Z)|^2-|\E\psi'(Z)|^2)\I\{i=j\}
\end{equation*}
where $Z\sim \calN(0,1)$. If $d = \Omega(\log n)$, then with high probability, we have
\begin{equation*}
\|\overline{G}-\widetilde{G}\|^2 \lesssim \frac{\log n}{d} + \frac{n^2}{d^2}.
\end{equation*}
\end{lemma}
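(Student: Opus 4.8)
The plan is to bound the spectral norm $\|\overline{G} - \widetilde{G}\|$ by splitting it into its diagonal and off-diagonal parts and treating each entry as a smooth function of the data geometry. Writing $\sigma_i = \|x_i\|$ and $\rho_{ij} = x_i\transpose x_j/(\sigma_i\sigma_j)$, the key observation is that under $w\sim\calN(0,I_d)$ the pair $(w\transpose x_i, w\transpose x_j)$ is jointly Gaussian with marginal variances $\sigma_i^2,\sigma_j^2$ and correlation $\rho_{ij}$, so that $\overline{G}_{ij} = F(\sigma_i,\sigma_j,\rho_{ij})$ with $F(s_1,s_2,\rho) \defeq \E[\psi(s_1 U)\psi(s_2 V)]$ for a standard bivariate normal $(U,V)$ of correlation $\rho$. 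The whole argument then reduces to (i) showing that $\sigma_i^2$ concentrates near $1$ and $\rho_{ij}$ near $0$, and (ii) a Taylor expansion of $F$ whose leading terms reproduce exactly the entries of $\widetilde{G}$.

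First I would record the concentration facts. Since $x_i \sim \calN(0, I_d/d)$, the quantity $\sigma_i^2$ is a normalized $\chi^2_d$ variable, so sub-exponential tail bounds together with a union bound over $i\in[n]$ give $\max_i |\sigma_i^2 - 1| \lesssim \sqrt{(\log n)/d}$ with high probability once $d = \Omega(\log n)$; similarly each $x_i\transpose x_j$ with $i\ne j$ is mean-zero with variance $\approx 1/d$, and I would control both $\max_{i\ne j}|\rho_{ij}|$ and the aggregate $\sum_{i\ne j}\rho_{ij}^2$ (the latter concentrating near $n^2/d$) by Hanson--Wright-type estimates. This is the only place the hypothesis $d=\Omega(\log n)$ enters.

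Next I would expand $F$ using Gaussian integration by parts (Price's theorem): $\partial_\rho F = s_1 s_2\, \E[\psi'(s_1 U)\psi'(s_2 V)]$ and $\partial_\rho^2 F = s_1^2 s_2^2\, \E[\psi''(s_1 U)\psi''(s_2 V)]$, both bounded because $\psi',\psi''$ are bounded. Expanding to first order in $\rho_{ij}$ about $\rho=0$ gives $\overline{G}_{ij} = m(\sigma_i)m(\sigma_j) + \rho_{ij}\,\sigma_i\sigma_j\, m_1(\sigma_i)m_1(\sigma_j) + O(\rho_{ij}^2)$, where $m(s)=\E\psi(sU)$ and $m_1(s)=\E\psi'(sU)$; the zeroth-order term $m(\sigma_i)m(\sigma_j)$ vanishes for the centered activations considered, while $m_1(1)=\E\psi'(Z)$ matches the off-diagonal definition of $\widetilde{G}$. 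On the diagonal, $\overline{G}_{ii}=\E[\psi(\sigma_i U)^2]$, and since $s\mapsto \E[\psi(sU)^2]$ is Lipschitz (again by boundedness of $\psi,\psi'$), $|\overline{G}_{ii}-\widetilde{G}_{ii}| \lesssim |\sigma_i^2-1|$. Writing $\overline{G}-\widetilde{G}=D+E$ with $D$ diagonal, I would then bound $\|D\| = \max_i|\overline{G}_{ii}-\widetilde G_{ii}| \lesssim \max_i|\sigma_i^2-1|$, whose square is $\lesssim (\log n)/d$, and $\|E\|\le \|E\|_F$ with $E_{ij}^2 \lesssim \rho_{ij}^4 + \rho_{ij}^2(|\sigma_i^2-1|^2+|\sigma_j^2-1|^2)$ arising from the $O(\rho^2)$ remainder and the dependence of the first-order coefficient on $\sigma_i,\sigma_j$; summing over $i\ne j$ yields $\|E\|_F^2 \lesssim n^2/d^2$, which gives the two claimed terms.

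The main obstacle is the off-diagonal bookkeeping. The naive estimate replaces each $|\sigma_i^2-1|$ by its worst-case value $\sqrt{(\log n)/d}$, which would cost an extra factor of $\log n$ in the $n^2/d^2$ term. To obtain the clean rate I would instead control the sum $\sum_{i\ne j}\rho_{ij}^2(|\sigma_i^2-1|^2+|\sigma_j^2-1|^2)$ directly through its expectation via concentration, exploiting that $\rho_{ij}$ and the $\sigma_k$ fluctuate at the typical scale $1/\sqrt d$ rather than their maxima. Coupling these two sources of randomness is the delicate part of the estimate; everything else is a routine combination of the Lipschitz expansion of $F$ with the concentration inequalities above.
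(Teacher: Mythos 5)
A preliminary remark: the paper does not prove this lemma at all --- it is imported verbatim as Lemma~A.7 of the cited reference and only its conclusion is used (in the proof of Proposition~3.3) --- so there is no in-paper argument to compare yours against, and I can only assess your proposal on its own terms. Its architecture is the standard one for statements of this type and is essentially sound: writing $\overline{G}_{ij}=F(\sigma_i,\sigma_j,\rho_{ij})$ with $F(s_1,s_2,\rho)=\E[\psi(s_1U)\psi(s_2V)]$, differentiating in $\rho$ by Gaussian integration by parts (which is where boundedness of $\psi'$ and $\psi''$ enters), and combining $\chi^2$ and inner-product concentration with a diagonal-versus-off-diagonal split (sup norm for the former, Frobenius norm for the latter) reproduces both terms of the claimed rate, with the hypothesis $d=\Omega(\log n)$ entering exactly where you say it does. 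Your observation that the sums $\sum_{i\ne j}\rho_{ij}^4$ and $\sum_{i\ne j}\rho_{ij}^2(|\sigma_i^2-1|^2+|\sigma_j^2-1|^2)$ must be controlled through their expectations rather than by worst-case bounds on each factor is also exactly what is needed to avoid spurious logarithmic factors in the $n^2/d^2$ term.

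The one genuine gap is the zeroth-order term $m(\sigma_i)m(\sigma_j)$ with $m(s)=\E\psi(sU)$, which you dismiss as vanishing ``for the centered activations considered.'' The lemma as stated imposes no centering condition on $\psi$, and this paper invokes it, through Proposition~3.3, for the sigmoid, for which $\E\psi(sZ)=1/2$ for every $s$ by the symmetry $\psi(x)+\psi(-x)=1$. In that case $\overline{G}_{ij}\approx 1/4$ for every $i\ne j$, while $\widetilde{G}_{ij}=|\E\psi'(Z)|^2\rho_{ij}=O(\sqrt{\log n/d})$, so $\|\overline{G}-\widetilde{G}\|=\Omega(n)$ and the claimed bound is simply false; no constant or low-rank correction appears in the stated $\widetilde G$ to absorb this. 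Your argument therefore proves the lemma only under the additional hypothesis that $\E\psi(sZ)=0$ for all $s$ in a neighborhood of $1$ (e.g., $\psi$ odd, as for $\tanh$), and you must state that hypothesis explicitly rather than smuggle it in parenthetically --- your own expansion is precisely the computation showing the statement cannot hold without it. Everything else in the proposal (the Price-theorem derivatives, the Lipschitz bound $|\overline{G}_{ii}-\widetilde{G}_{ii}|\lesssim|\sigma_i^2-1|$ on the diagonal, and the Frobenius accounting for the off-diagonal remainder) checks out.
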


\begin{proof}[Proof of Proposition \ref{prop:positive-definiteness}]
If $\psi$ is sigmoid or tanh, for a standard Gaussian random variable $Z$, we have
\begin{equation*}
    \gamma\defeq \frac{1}{2}(\E|\psi(Z)|^2-|\E\psi'(Z)|^2) >0.
\end{equation*}
From Lemma \ref{lma:G}, we know that with high probability $\lambda_{\min}(\overline{G}) \geq \lambda_{\min}(\widetilde{G})-\|\overline{G}-\widetilde{G}\|\geq 2\gamma - C(\sqrt{\frac{\log n}{d}} + \frac{n}{d}) \geq \gamma$.
\end{proof}

\begin{lemma}
\label{lma:inqs}
Assume $W(0)$, $\beta(0)$ and $b$ have \iid standard Gaussian entries. Given $\delta\in(0,1)$, if $p=\Omega(n/\delta)$, then with probability $1-\delta$
\begin{equation}\label{eq:sumb_bd}
    \frac{1}{p}\sum_{r=1}^p|b_r| \leq c,
\end{equation}
\begin{equation}\label{eq:sumbbet_bd}
    \frac{1}{p}\sum_{r=1}^p|b_r\beta_r(0)| \leq c,
\end{equation}
\begin{equation}\label{eq:e0_bd}
    \|e(0)\| \leq c\sqrt n,
\end{equation}
\begin{equation}\label{eq:maxb_bd}
    \max_{r\in[p]}|b_r|\leq 2\sqrt{\log p}.
\end{equation}
\end{lemma}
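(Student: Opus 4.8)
The four inequalities are all standard Gaussian concentration facts, so the plan is to establish each with probability at least $1-\delta/4$ and then combine them by a union bound; since $n\ge 1$, the hypothesis $p=\Omega(n/\delta)$ comfortably dominates the width requirement arising in each individual part. Throughout I write $\|\psi\|_\infty$ and $\|y\|_\infty$ for the uniform bounds on the activation and the responses furnished by the hypotheses of \cref{thm:nonliner_conv}.

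First I would dispatch \eqref{eq:sumb_bd} and \eqref{eq:sumbbet_bd}. The summands $|b_r|$ are \iid with $\E|b_r|=\sqrt{2/\pi}$ and a sub-Gaussian tail, while the summands $|b_r\beta_r(0)|$ are \iid products of two independent standard Gaussians, hence sub-exponential with $\E|b_r\beta_r(0)|=\E|b_r|\,\E|\beta_r(0)|=2/\pi$. In both cases the empirical average over $r\in[p]$ concentrates around its constant mean: a Bernstein inequality, or even Chebyshev using that the per-term variance is a finite constant, bounds the probability of exceeding a fixed threshold by $\delta/4$ once $p=\Omega(1/\delta)$. Choosing the constant $c$ slightly larger than $\sqrt{2/\pi}$ and $2/\pi$ respectively then yields the two bounds. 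For \eqref{eq:maxb_bd} I would use a maximal inequality: by the Gaussian tail $\prob(|b_r|>u)\le 2e^{-u^2/2}$ and a union bound over the $p$ coordinates,
\begin{equation*}
\prob\Bigl(\max_{r\in[p]}|b_r|>2\sqrt{\log p}\Bigr)\le 2p\,e^{-2\log p}=\frac{2}{p},
\end{equation*}
which is at most $\delta/4$ as soon as $p=\Omega(1/\delta)$.

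The delicate step is \eqref{eq:e0_bd}, and I expect it to be the main obstacle. Using $\|e(0)\|\le\|y\|+\|f(0)\|$ and $\|y\|\le\|y\|_\infty\sqrt n$, it suffices to control $\|f(0)\|$. Conditioning on $W(0)$, the vector $f(0)=\tfrac{1}{\sqrt p}\Psi\beta(0)$, with $\Psi_{ir}=\psi(w_r(0)\transpose x_i)$ and $\beta(0)\sim\calN(0,I_p)$, is mean-zero Gaussian with covariance equal to the Gram matrix $G$; in particular
\begin{equation*}
\E\bigl[\|f(0)\|^2 \,\big|\, W(0)\bigr]=\operatorname{tr}(G)=\frac1p\sum_{i=1}^n\sum_{r=1}^p\psi(w_r(0)\transpose x_i)^2\le \|\psi\|_\infty^2\,n.
\end{equation*}
Hence $\E\|f(0)\|^2\le\|\psi\|_\infty^2\,n=O(n)$, and Markov's inequality applied to $\|f(0)\|^2$ bounds the probability that $\|f(0)\|^2$ exceeds a multiple of $n$ by $\delta/4$, giving $\|f(0)\|=O(\sqrt n)$ on the retained event.

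The reason \eqref{eq:e0_bd} is subtle, rather than a routine chi-squared estimate, is that $\|f(0)\|^2$ is a quadratic form in the Gaussian $\beta(0)$ whose fluctuations are governed by the spectrum of $G$, and this spectrum can exhibit a large top eigenvalue when the inputs $x_i$ are poorly separated; a Hanson–Wright bound then gives only constant-order control of the tail. Bounding the \emph{expectation} via boundedness of $\psi$ sidesteps any need to control that spectrum, at the cost of a constant that must be tracked through $\delta$ (this is exactly where the dependence on $1/\delta$ in the width is consumed). Collecting the four events and taking a union bound, each failing with probability at most $\delta/4$, completes the proof.
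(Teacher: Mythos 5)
Your treatment of \eqref{eq:sumb_bd}, \eqref{eq:sumbbet_bd} and \eqref{eq:maxb_bd} matches the paper's proof essentially verbatim: Chebyshev (or Bernstein) on an average of \iid terms with constant mean and variance for the first two, and the Gaussian tail plus a union bound over the $p$ coordinates, giving exactly the failure probability $2pe^{-2\log p}=2/p$, for the third. The one place you genuinely diverge is \eqref{eq:e0_bd}. You condition on $W(0)$, observe that $f(0)$ is Gaussian with covariance $G$, bound $\E\|f(0)\|^2=\operatorname{tr}(G)\leq \|\psi\|_\infty^2 n$, and apply Markov to the whole squared norm. The paper instead works coordinatewise: it bounds $\Var(f_i(0))$, applies Chebyshev to each $|f_i(0)|$ with failure probability $\delta/(4n)$, and union-bounds over $i\in[n]$ --- this is where the hypothesis $p=\Omega(n/\delta)$ is actually spent, since the per-coordinate variance is driven down by the width. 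Your route is shorter and avoids the union bound over $i$, but it has a real cost you half-acknowledge and then misattribute: Markov on $\|f(0)\|^2$ with target probability $\delta/4$ forces the constant in $\|f(0)\|\leq c\sqrt n$ to scale like $1/\sqrt\delta$, and the width $p$ plays no role at all in your argument for this inequality, so the $1/\delta$ is not ``consumed'' by the width requirement --- it lands in $c$. Since the constants $c_1,c_2$ of \cref{thm:nonliner_conv} are quantified before $\delta$ and inherit the constant from \eqref{eq:e0_bd} through \cref{lma:weights}, your version yields a slightly weaker (\,$\delta$-dependent\,) form of the downstream bounds unless you rework the bookkeeping. Your side remark about Hanson--Wright and the spectrum of $G$ is a red herring here: the obstruction is not the top eigenvalue of $G$ but simply that neither Markov nor Hanson--Wright removes the $\delta$ (or $\log(1/\delta)$) factor from the deviation, whereas the paper's per-coordinate variance argument does, at the price of using the overparameterization.
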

\begin{proof}
We will show each inequality holds with probability at least $1-\frac{\delta}{4}$, then by a union bound, all of them hold with probability at least $1-\delta$. Since $\Var(\frac{1}{p}\sum_{r=1}^p|b_r|)\leq \frac{\Var(|b_0|)}{p}$, by Chebyshev's inequality, we have
\begin{equation*}
    \prob(\frac{1}{p}\sum_{r=1}^p|b_r| > \E(b_1) + 1) \leq \frac{\Var(|b_1|)}{p} \leq \delta/4
\end{equation*}
if $p\geq 4\Var(|b_1|)/\delta$, which gives \eqref{eq:sumb_bd}. The proof for \eqref{eq:sumbbet_bd} is similar since $\Var(\frac{1}{p}\sum_{r=1}^p|b_r\beta_r(0)|)=O(1/p)$. To prove \eqref{eq:e0_bd}, since $|y_i|$ and $\|x_i\|$ are bounded, it suffices to show $|u_i(0)|\leq c$ for all $i\in [n]$. Actually, by independence, we have
\begin{equation*}
    \Var(u_i(0)) = \Var \Big(\frac{1}{p}\sum_{r=1}^p \beta_r(0)\psi(w_r(0)\transpose x_i)\Big) = \frac{1}{p}\Var\Big(\beta_1(0)\psi(w_1(0)\transpose x_i)\Big) = O(1/p).
\end{equation*}
By Chebyshev's inequality, we have for each $i\in [n]$
\begin{equation*}
    \prob(|u_i(0)|> c) \leq \frac{\Var(u_i(0))}{c^2} \leq \frac{\delta}{4n}
\end{equation*}
where we require $p=\Omega(n/\delta)$. With a union bound argument, we can show \eqref{eq:e0_bd}. Finally, \eqref{eq:maxb_bd} followed from standard Gaussian tail bounds and union bound argument, yielding
\begin{equation*}
    \prob(\max_{r\in[p]}|b_r| > 2\sqrt{\log p}) \leq \sum_{r\in [p]}\prob(|b_r| > 2\sqrt{\log p}) \leq 2pe^{-2\log p} = \frac{2}{p} \leq \frac{\delta}{4}.
\end{equation*}
\end{proof}

\begin{lemma}
\label{lma:GH}
Under the conditions of Theorem \ref{thm:nonliner_conv}, we define matrices $G(0),H(0)\in\R^{n\times n}$ with entries
\begin{equation}
\label{eq:def_G0}
G_{ij}(0) = \frac{1}{p}\psi(W(0)x_i)\transpose \psi(W(0)x_j) = \frac{1}{p}\sum_{r=1}^p\psi(w_r(0)\transpose x_i)\psi(w_r(0)\transpose x_j)
\end{equation}
and
\begin{equation}
\label{eq:def_H0}
H_{ij}(0) = \frac{x_i\transpose x_j}{p}\beta(0)\transpose D_i(0)D_j(0)b = \frac{1}{p}\sum_{r=1}^p\beta_r(0)b_r\psi'(w_r(0)\transpose x_i)\psi'(w_r(0)\transpose x_j).
\end{equation}
For any $\delta \in (0,1)$, if $p=\Omega(\frac{n^2}{\delta\gamma^2})$, then with probability at least $1-\delta$, we have $\lambda_{\min}(G(0))\geq \frac{3}{4}\gamma$ and $\|H(0)\|\leq \frac{\gamma}{4}$.
\end{lemma}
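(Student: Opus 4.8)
The plan is to prove the two claims separately by showing that each random matrix concentrates around its mean and then translating this into the stated spectral bounds. The key observation is the identification of the means: for $G(0)$ the mean is $\overline{G}$, since by \cref{assump:G} each summand satisfies $\E[\psi(w_r(0)\transpose x_i)\psi(w_r(0)\transpose x_j)] = \overline{G}_{ij}$; for $H(0)$ the mean is the \emph{zero} matrix, because $\beta_r(0)$ and $b_r$ are independent, mean-zero, and independent of $w_r(0)$, so that $\E[\beta_r(0)b_r\,\psi'(w_r(0)\transpose x_i)\psi'(w_r(0)\transpose x_j)] = \E[\beta_r(0)]\,\E[b_r]\,\E[\psi'\psi'] = 0$. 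In both cases the matrix is an average of $p$ \iid rank-type contributions, so the fluctuations should be of order $1/\sqrt p$ entrywise.

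First I would bound the expected spectral deviation through the (larger) Frobenius norm. Using $\|M\| \leq \|M\|_F$ and linearity of expectation,
\begin{equation*}
\E\|G(0)-\overline{G}\|^2 \leq \E\|G(0)-\overline{G}\|_F^2 = \sum_{i,j=1}^n \Var\big(G_{ij}(0)\big),
\end{equation*}
and similarly $\E\|H(0)\|^2 \leq \sum_{i,j}\Var(H_{ij}(0))$. Each variance is controlled by the boundedness hypotheses of \cref{thm:nonliner_conv}: since $G_{ij}(0)$ averages $p$ \iid terms each bounded by $\sup|\psi|^2$, we get $\Var(G_{ij}(0)) = O(1/p)$; likewise, using $\sup|\psi'|<\infty$ together with $\E[\beta_r(0)^2 b_r^2] = \E[\beta_r(0)^2]\,\E[b_r^2] = 1$, we get $\Var(H_{ij}(0)) = O(1/p)$. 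Summing over the $n^2$ entries yields $\E\|G(0)-\overline{G}\|^2 = O(n^2/p)$ and $\E\|H(0)\|^2 = O(n^2/p)$.

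Next I would apply Markov's inequality to each. For a suitable constant absorbed into $p = \Omega(n^2/(\delta\gamma^2))$,
\begin{equation*}
\prob\Big(\|G(0)-\overline{G}\| > \tfrac{\gamma}{4}\Big) \leq \frac{\E\|G(0)-\overline{G}\|^2}{(\gamma/4)^2} = O\Big(\frac{n^2}{p\gamma^2}\Big) \leq \frac{\delta}{2},
\end{equation*}
and the identical computation gives $\prob(\|H(0)\| > \gamma/4) \leq \delta/2$. On the first event, Weyl's inequality together with \cref{assump:G} gives $\lambda_{\min}(G(0)) \geq \lambda_{\min}(\overline{G}) - \|G(0)-\overline{G}\| \geq \gamma - \gamma/4 = \tfrac34\gamma$; on the second event we directly have $\|H(0)\| \leq \gamma/4$. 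A union bound then yields both conclusions simultaneously with probability at least $1-\delta$.

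The argument is largely routine; the two points requiring care are the exact vanishing of $\E[H(0)]$—which rests on the independence of $\beta(0)$ and $b$ and is what makes the small-norm claim for $H(0)$ plausible despite $H$ failing to be positive semidefinite—and the fact that the entries of each matrix are \emph{not} independent, since they share the common weights $w_r(0)$. The latter is precisely why I would route the spectral bound through the Frobenius norm: that bound requires only the per-entry variances, which follow from linearity of expectation regardless of the correlations across entries. I would note that this crude Frobenius-plus-Chebyshev route is what produces the polynomial $1/\delta$ dependence in the width requirement rather than a logarithmic $\log(1/\delta)$; a sharper matrix-concentration inequality could tighten the constants but is unnecessary for the stated bound.
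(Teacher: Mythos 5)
Your proposal is correct and follows essentially the same route as the paper's proof: identify $\E[G(0)]=\overline{G}$ and $\E[H(0)]=0$, bound the spectral deviation by the Frobenius norm using the $O(1/p)$ per-entry variances, apply Markov's inequality with $p=\Omega(n^2/(\delta\gamma^2))$, and finish with Weyl's inequality and a union bound. The additional remarks on why $\E[H(0)]$ vanishes and why the Frobenius route sidesteps the cross-entry correlations are accurate elaborations of the same argument.
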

\begin{proof}
By independence and boundedness of $\psi$ and $\psi'$, we have $\Var(G_{ij}(0)) = O(1/p)$ and $\Var(H_{ij}(0)) = O(1/p)$. Since $\E(G(0))=\overline{G}$, we have
\begin{equation*}
\E\|G(0)-\overline{G}\|^2 \leq \E\|G(0)-\overline{G}\|^2_F = O(\frac{n^2}{p}).
\end{equation*}
By Markov's inequality, when $p=\Omega(\frac{n^2}{\delta\gamma^2})$
\begin{equation*}
    \prob(\|G(0)-\overline{G}\|>\frac{\gamma}{4})\leq O(\frac{n^2}{p\gamma^2})\leq\frac{\delta}{2}.
\end{equation*}
Similarly we have $\prob(\|H(0)\|>\frac{\gamma}{4})\leq\frac{\delta}{2}$, since $\E(H(0))=0$. Then with probability at least $1-\delta$, $\lambda_{\min}(G(0)) \geq \lambda_{\min}(\overline{G}) -\gamma/4 \geq \frac{3}{4}\gamma$, and $\|H(0)\|\leq \gamma/4$.
\end{proof}

\subsection{Proof of Theorem \ref{thm:nonliner_conv}}

\begin{lemma}
\label{lma:weights}
Assume all the inequalities from Lemma \ref{lma:inqs} hold. Under the conditions of Theorem \ref{thm:nonliner_conv}, if the error bound \eqref{eq:conv} holds for all $t=1,2,...,t'-1$, then the bounds \eqref{eq:weights} hold for all $t\leq t'$.
\end{lemma}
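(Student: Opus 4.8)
The plan is to establish both inequalities in \eqref{eq:weights} by telescoping the per-step parameter increments of the feedback alignment updates \eqref{eq:updates} and summing the resulting series, which the contraction hypothesis \eqref{eq:conv} makes geometrically convergent. The key structural feature I expect is that each increment carries a factor $\eta$ while the summed geometric series in $\|e(s)\|$ contributes a compensating factor $1/(\eta\gamma)$, so that $\eta$ cancels and the final bounds are step-size free, exactly as in \eqref{eq:weights}.

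First I would read off the update for an individual row $w_r$ and coordinate $\beta_r$ from \eqref{eq:updates}. Since $D_i(t)=\mathrm{diag}(\psi'(W(t)x_i))$, the $r$-th row satisfies $w_r(t+1)-w_r(t) = -\frac{\eta}{\sqrt p}b_r\sum_{i=1}^n \psi'(w_r(t)\transpose x_i)e_i(t)x_i$ and $\beta_r(t+1)-\beta_r(t) = -\frac{\eta}{\sqrt p}\sum_{i=1}^n \psi(w_r(t)\transpose x_i)e_i(t)$. Taking norms, using that $\psi,\psi'$ and $\|x_i\|$ are bounded (conditions (2)--(3) of Theorem \ref{thm:nonliner_conv}), and applying Cauchy--Schwarz in the form $\sum_{i=1}^n|e_i(t)|\leq\sqrt n\,\|e(t)\|$, yields the one-step estimates $\|w_r(t+1)-w_r(t)\|\leq \frac{\eta C}{\sqrt p}|b_r|\sqrt n\,\|e(t)\|$ and $|\beta_r(t+1)-\beta_r(t)|\leq \frac{\eta C}{\sqrt p}\sqrt n\,\|e(t)\|$, where $C$ collects the boundedness constants.

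Next I would telescope, writing $w_r(t)-w_r(0)=\sum_{s=0}^{t-1}\big(w_r(s+1)-w_r(s)\big)$ and bounding by the triangle inequality, which reduces both claims (uniformly over $t\leq t'$) to controlling $\sum_{s=0}^{t'-1}\|e(s)\|$. Here the hypothesis that \eqref{eq:conv} holds below $t'$ makes $\{\|e(s)\|\}$ geometrically decaying, so $\sum_{s\geq 0}\|e(s)\|\leq \frac{4}{\eta\gamma}\|e(0)\|$; substituting $\|e(0)\|\leq c\sqrt n$ from \eqref{eq:e0_bd} gives $\sum_{s}\|e(s)\| = O\!\big(\sqrt n/(\eta\gamma)\big)$. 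Feeding this back, the $\eta$ cancels and I obtain $|\beta_r(t)-\beta_r(0)| = O(n/(\gamma\sqrt p))$, and after additionally invoking $\max_r|b_r|\leq 2\sqrt{\log p}$ from \eqref{eq:maxb_bd}, $\|w_r(t)-w_r(0)\| = O(n\sqrt{\log p}/(\gamma\sqrt p))$, matching \eqref{eq:weights} with suitable $c_1,c_2$; the case $t=0$ is trivial.

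I do not anticipate a deep obstacle, since this is the standard ``parameters stay near their initialization'' estimate. The points demanding care are bookkeeping rather than ideas: ensuring the bound holds uniformly for every $t\leq t'$ (not merely at $t=t'$), handling the index range of the contraction hypothesis cleanly in the geometric sum, and verifying that $\eta$ truly cancels so that $c_1,c_2$ depend only on $\gamma$ and the boundedness constants. This uniformity and step-size independence is precisely what is needed for the lemma to close the inductive loop with the companion error-contraction step in the proof of Theorem \ref{thm:nonliner_conv}.
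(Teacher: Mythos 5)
Your proposal is correct and follows essentially the same route as the paper's proof: telescoping the per-step increments from \eqref{eq:updates}, bounding each increment via boundedness of $\psi$, $\psi'$, $\|x_i\|$ and $\sum_i|e_i(s)|\leq\sqrt n\,\|e(s)\|$, summing the geometric series induced by \eqref{eq:conv} to get $\sum_s\|e(s)\|\lesssim \|e(0)\|/(\eta\gamma)$ with the $\eta$ cancelling, and invoking \eqref{eq:e0_bd} and \eqref{eq:maxb_bd} to produce the final $n/(\gamma\sqrt p)$ and $n\sqrt{\log p}/(\gamma\sqrt p)$ rates. No substantive difference from the paper's argument.
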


\begin{proof}
From the feedback alignment updates \eqref{eq:updates}, we have for all $t\leq T$
\begin{equation*}
\begin{aligned}
    |\beta_r(t)-\beta_r(0)| &\leq \frac{\eta}{\sqrt p}\sum_{s=0}^{t-1}\sum_{i=1}^n |\psi(w_r(t)x_i)e_i(t)| \\
    &\leq c\frac{\eta}{\sqrt p}\sum_{s=0}^{t-1}\sum_{i=1}^n |e_i(t)| \\
    &\leq c\frac{\eta\sqrt n}{\sqrt p}\sum_{s=0}^{t-1} \|e(t)\| \\
    &\leq c\frac{\eta\sqrt n}{\sqrt p}\sum_{s=0}^{t-1}  (1-\frac{\gamma\eta}{4})^t\|e(0)\|  \\
    &\leq c\frac{\sqrt n}{\gamma\sqrt p}\|e(0)\| \\
    &\leq c\frac{n}{\gamma\sqrt p}
\end{aligned}
\end{equation*}
where we use the fact that $\psi$ is bounded and \eqref{eq:e0_bd}. We also have
\begin{equation*}
\begin{aligned}
    \|w_r(t)-w_r(0)\| &\leq \frac{\eta}{\sqrt p}\sum_{s=0}^{t-1}\sum_{i=1}^n \|\psi'(w_r(t)\transpose x_i)b_rx_i e_i(t)\| \\
    & \leq c\frac{\eta}{\sqrt p}\sum_{s=0}^{t-1}\sum_{i=1}^n |b_r| |e_i(t)| \\
    & \leq c|b_r|\frac{\eta\sqrt n}{\sqrt p}\sum_{s=0}^{t-1} \|e(t)\| \\
    & \leq c|b_r|\frac{\sqrt n}{\gamma\sqrt p}\|e(0)\| \\
    & \leq c\frac{n\sqrt{\log p}}{\gamma\sqrt p}
\end{aligned}
\end{equation*}
where we use that $\psi'$ is bounded, \eqref{eq:e0_bd} and \eqref{eq:maxb_bd}.
\end{proof}

\begin{lemma}
\label{lma:induction}
Assume all the inequalities from Lemma \ref{lma:inqs} hold. Under the conditions of Theorem \ref{thm:nonliner_conv}, if the bound for the weights difference \eqref{eq:weights} holds for all $t\leq t'$ and error bound \eqref{eq:conv} holds for all $t\leq t'-1$, then \eqref{eq:conv} holds for $t=t'$.
\end{lemma}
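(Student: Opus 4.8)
The plan is to derive a one-step recursion for the residual at step $t'$ by Taylor-expanding the network output, isolating the effective kernel $G(t')+H(t')$ and a higher-order remainder, and then to show that the linear part dominates because the parameters barely move when $p$ is large.

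First I would write $e(t'+1)-e(t') = f(t'+1)-f(t')$ and expand $f_i(t'+1)$ around $(W(t'),\beta(t'))$. Setting $\Delta\beta = \beta(t'+1)-\beta(t')$ and $\Delta W = W(t'+1)-W(t')$ and substituting the feedback-alignment updates \eqref{eq:updates}, the two first-order terms $\frac{1}{\sqrt p}\Delta\beta\transpose\psi(W(t')x_i)$ and $\frac{1}{\sqrt p}\beta(t')\transpose D_i(t')\Delta W x_i$ collapse exactly into $-\eta\sum_j G_{ij}(t')e_j(t')$ and $-\eta\sum_j H_{ij}(t')e_j(t')$, where $G(t'),H(t')$ are the time-$t'$ analogues of the matrices in \eqref{eq:def_G0} and \eqref{eq:def_H0}. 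This gives $e(t'+1) = \bigl(I-\eta G(t')-\eta H(t')\bigr)e(t') + R(t')$, where $R(t')$ collects the second-order Taylor remainder of $\psi$ together with the cross term $\Delta\beta\transpose D_i(t')\Delta W x_i$. Note that $G(t')+H(t')$ is symmetric, since the diagonal matrices $D_i,D_j$ commute and $x_i\transpose x_j$ is symmetric, so its spectral behavior can be controlled through eigenvalues.

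Next I would control the effective kernel. Using that $\psi,\psi'$ are Lipschitz (bounded $\psi',\psi''$), that $\norm{x_i}$ is bounded, and the weight bounds \eqref{eq:weights}, which hold through $t'$ by hypothesis, each entry of $G(t')-G(0)$ and of $H(t')-H(0)$ is $O\!\bigl(\tfrac{n\sqrt{\log p}}{\gamma\sqrt p}\bigr)$; the $H$ estimate additionally uses $\sum_r|b_r|\le cp$ and $\sum_r|b_r\beta_r(0)|\le cp$ from \cref{lma:inqs} to absorb the random weights. Summing over the $n^2$ entries yields $\norm{G(t')-G(0)},\norm{H(t')-H(0)} = O\!\bigl(\tfrac{n^2\sqrt{\log p}}{\gamma\sqrt p}\bigr)$, which falls below $\tfrac{\gamma}{16}$ once $p\gtrsim\tfrac{n^4\log p}{\gamma^4}$. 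Combined with \cref{lma:GH} ($\lambda_{\min}(G(0))\ge\tfrac34\gamma$ and $\norm{H(0)}\le\tfrac\gamma4$), this gives $\lambda_{\min}\bigl(G(t')+H(t')\bigr)\ge \lambda_{\min}(G(t'))-\norm{H(t')}\ge\tfrac{3\gamma}{8}$, and I would also bound $\lambda_{\max}(G(t'))=O(n)$ so that, for the step size in the admissible range (so that $\eta\lambda_{\max}(G(t')+H(t'))\le1$), the symmetric matrix satisfies $\norm{I-\eta G(t')-\eta H(t')}\le 1-\tfrac{3\eta\gamma}{8}$.

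The main obstacle is the remainder $R(t')$, which is quadratic in the residual and so cannot be handled by the spectral argument alone; here the inductive hypothesis is crucial. Since \eqref{eq:conv} holds for $t\le t'-1$, the residual is monotonically contracting, so $\norm{e(t')}\le\norm{e(0)}\le c\sqrt n$ by \eqref{eq:e0_bd}. Bounding $\norm{\Delta\beta}\lesssim\eta\sqrt n\,\norm{e(t')}$ and $\norm{\Delta w_r}\lesssim\tfrac{\eta|b_r|\sqrt n}{\sqrt p}\norm{e(t')}$, and using boundedness of $\psi''$ together with $\max_r|b_r|\le 2\sqrt{\log p}$ from \eqref{eq:maxb_bd} and $\sum_r|\beta_r(t')|b_r^2\lesssim p\log p$, the second-order terms give $\norm{R(t')}=O\!\bigl(\tfrac{\eta^2 n^{3/2}\log p}{\sqrt p}\norm{e(t')}^2\bigr)$. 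Substituting $\norm{e(t')}\le c\sqrt n$ converts one factor of the residual into a shrinking coefficient, so $\norm{R(t')}=O\!\bigl(\tfrac{\eta^2 n^2\log p}{\sqrt p}\bigr)\norm{e(t')}$, which is at most $\tfrac{\eta\gamma}{8}\norm{e(t')}$ once $p$ meets the stated lower bound (absorbing the $O(1)$ step size and logarithmic factors into the constants). Finally I would combine the two estimates to obtain $\norm{e(t'+1)}\le\bigl(1-\tfrac{3\eta\gamma}{8}\bigr)\norm{e(t')}+\tfrac{\eta\gamma}{8}\norm{e(t')}=\bigl(1-\tfrac{\eta\gamma}{4}\bigr)\norm{e(t')}$, which is exactly \eqref{eq:conv} at $t=t'$, closing the induction.
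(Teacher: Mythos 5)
Your proposal is correct and follows essentially the same route as the paper: the same Taylor-expansion decomposition $e(t'+1)=(I-\eta(G(t')+H(t')))e(t')+R(t')$, the same entrywise drift bounds on $G$ and $H$ from the weight bounds \eqref{eq:weights}, the same use of \cref{lma:GH}, and the same trick of converting the quadratic remainder into a contraction via $\|e(t')\|\leq\|e(0)\|\leq c\sqrt{n}$. The only cosmetic differences are that you control $\|I-\eta(G+H)\|$ through symmetry and Weyl's inequality where the paper uses a triangle inequality on operator norms (both hinge on the same implicit step-size restriction), and that your choice of $\beta(t')$ rather than $\beta(t'+1)$ in the kernel shifts a harmless cross term into the remainder.
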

\begin{proof}
We start with analyzing the error $e(t)$ according to
\begin{equation*}
\begin{aligned}
    e_i(t+1)
    &= \frac{1}{\sqrt p}\beta(t+1)\transpose\psi(W(t+1)x_i) - y_i \\
    &= \frac{1}{\sqrt p}\beta(t+1)\transpose(\psi(W(t+1)x_i)-\psi(W(t)x_i))+ \frac{1}{\sqrt p}(\beta(t+1)-\beta(t))\transpose \psi(W(t)x_i) \\
    & \quad + \frac{1}{\sqrt p}\beta(t)\transpose \psi(W(t)x_i) - y_i \\
    &=e_i(t) - \frac{\eta}{p}\beta(t+1)\transpose D_i(t)\sum_{j=1}^nD_j(t)b x_j\transpose x_i e_j(t)  - \frac{\eta}{p}\sum_{j=1}^n\psi(W(t)x_j)\transpose\psi(W(t)x_i)e_j(t) \\
    & \quad + v_i(t) \\
    & = e_i(t)-\eta\sum_{j=1}^n\big(H_{ij}(t)+G_{ij}(t)\big)e_j(t) + v_i(t)
\end{aligned}
\end{equation*}
where
\begin{equation*}
\begin{aligned}
G_{ij}(t) &= \frac{1}{p}\psi(W(t)x_j)\transpose\psi(W(t)x_i) \\
H_{ij}(t) &= \frac{x_i\transpose x_j}{p}\beta(t+1)\transpose D_i(t)D_j(t)b
\end{aligned}
\end{equation*}
and $v_i(t)$ is the residual term from the Taylor expansion
\begin{equation*}
    v_i(t) = \frac{1}{2\sqrt p}\sum_{r=1}^p\beta_r(t+1)|(w_r(t+1)-w_r(t))\transpose x_i|^2\psi''(\xi_{ri}(t))
\end{equation*}
with $\xi_{ri}(t)$ between $w_r(t)\transpose x_i$ and $w_r(t+1)\transpose x_i$. We can also rewrite the above iteration in vector form as
\begin{equation}\label{eq:et_iter}
     e(t+1) = e(t) - \eta(G(t)+H(t))e(t) + v(t).
\end{equation}
Now for $t=t'-1$, we wish to show that both $G(t)$ and $H(t)$ are close to their initialization. Notice that
\begin{equation*}
\begin{aligned}
    |G_{ij}(t) - G_{ij}(0)|
    & = \frac{1}{p}\Big|\psi(W(t)x_j)\transpose\psi(W(t)x_i) -\psi(W(t)x_j)\transpose\psi(W(t)x_i)\Big| \\
    & \leq \frac{1}{p} \sum_{r=1}^p |\psi(w_r(t)\transpose x_j)||\psi(w_r(t)\transpose x_i)-\psi(w_r(0)\transpose x_i)| \\
    &\quad + \frac{1}{p} \sum_{r=1}^p |\psi(w_r(0)\transpose x_i)||\psi(w_r(t)\transpose x_j)-\psi(w_r(0)\transpose x_j)| \\
    &\leq c \frac{1}{p} \sum_{r=1}^p|w_r(t)\transpose x_i-w_r(0)\transpose x_i| + \frac{1}{p} \sum_{r=1}^p|w_r(t)\transpose x_j-w_r(0)\transpose x_j| \\
    &\leq c_0 \frac{n\sqrt{\log p}}{\gamma\sqrt p} (\|x_i\|+\|x_j\|)
\end{aligned}
\end{equation*}
where the second inequality is due to the boundedness of $\psi$ and $\psi'$, and the last inequality is by \eqref{eq:weights}. Then we have
\begin{equation}\label{eq:bound_Gt}
    \|G(t)-G(0)\| \leq \max_{j \in [n]}\sum_{i=1}^n|G_{ij}(t) - G_{ij}(0)| \leq c_0 \frac{n^2\sqrt{\log p}}{\gamma\sqrt p}.
\end{equation}
For matrix $H(t)$, we similarly have
\begin{equation*}
\begin{aligned}
    |H_{ij}(t)-H_{ij}(0)|
    &\leq \frac{|x_i\transpose x_j|}{p}\Big|\beta(t+1)\transpose D_i(t)D_j(t)b - \beta(0)\transpose D_i(0)D_j(0)b\Big| \\
    &\leq \frac{\|x_i\|\|x_j\|}{p}\sum_{r=1}^p \Big|b_r\beta_r(t+1)\psi'(w_r(t)\transpose x_i)\psi'(w_r(t)\transpose x_j)\\
    &\quad -b_r\beta_r(0)\psi'(w_r(0)\transpose x_i)\psi'(w_r(0)\transpose x_j)\Big| \\
    &\leq \frac{|\|x_i\|\|x_j\||}{p}\sum_{r=1}^p\Big(|b_r||\beta_r(t+1)-\beta_r(0)| |\psi'(w_r(t)\transpose x_i)\psi'(w_r(t)\transpose x_j)|\\
    & \quad +|b_r||\beta_r(0)| |\psi'(w_r(t)\transpose x_i)-\psi'(w_r(0)\transpose x_i)| |\psi'(w_r(t)\transpose x_j)|\\
    & \quad +|b_r||\beta_r(0)| |\psi'(w_r(0)\transpose x_i)| |\psi'(w_r(t)\transpose x_j)-\psi'(w_r(0)\transpose x_j)| \Big)\\
    &\leq c\frac{\|x_i\|\|x_j\|}{p}\sum_{r=1}^p\Big(|b_r|\frac{n}{\gamma\sqrt p} + |b_r||\beta_r(0)|\frac{n\sqrt{\log p}}{\gamma\sqrt p} (\|x_i\|+\|x_j\|)\Big) \\
    &\leq c_1 \frac{n}{\gamma\sqrt p} + c_2\frac{n\sqrt{\log p}}{\gamma\sqrt p}.
\end{aligned}
\end{equation*}
It follows that
\begin{equation}\label{eq:bound_Ht}
    \|H(t)-H(0)\| \leq \max_{j \in [n]}\sum_{i=1}^n|H_{ij}(t) - H_{ij}(0)| \leq c_1 \frac{n^2}{\gamma\sqrt p} + c_2\frac{n^2\sqrt{\log p}}{\gamma\sqrt p}.
\end{equation}
Next, we bound the residual term $v_i(t)$. Since $\psi''$ is bounded, we have
\begin{equation*}
\begin{aligned}
    |v_i(t)|
    &\leq c\frac{1}{\sqrt p}\sum_{r=1}^p|\beta_r(t+1)|\|w_r(t+1)-w_r(t)\|^2 \\
    &\leq c\frac{1}{\sqrt p}\frac{\eta^2}{p}\sum_{r=1}^p|\beta_r(t+1)|\Big(\sum_{i=1}^n\|\psi'(w_r(t)\transpose x_i)b_rx_ie_i(t)\|\Big)^2 \\
    &\leq c\frac{1}{\sqrt p}\frac{\eta^2}{p}\sum_{r=1}^p|\beta_r(t+1)||b_r|^2\Big(\sum_{i=1}^n|e_i(t)|\Big)^2 \\
    &\leq c\frac{\eta^2n}{\sqrt p}\|e(t)\|^2 \\
    &\leq c_3\frac{\eta^2n\sqrt n}{\sqrt p}\|e(t)\|.
\end{aligned}
\end{equation*}
This leads to the bound
\begin{equation}\label{eq:bound_vt}
    \|v(t)\| =\Big(\sum_{i=1}^n|v_i(t)|^2\Big)^{1/2} \leq c_3\frac{\eta^2n^2}{\sqrt p}\|e(t)\|.
\end{equation}
Combining \cref{eq:et_iter,eq:bound_Gt,eq:bound_Ht,eq:bound_vt}, we have
\begin{equation*}
\begin{aligned}
\|e(t+1)\|
&\leq \|I_n-\eta (G(t)+H(t))\|\|e(t)\|+\|v(t)\| \\
&\leq \Big(\|I_n-\eta G(0)\|+\eta\|G(t)-G(0)\|+\eta\|H(0)\| \\
&\quad +\eta\|H(t)-H(0)\|\Big)\|e(t)\| + \|v(t)\| \\
&\leq \Big( 1-\frac{3\eta\gamma}{4}+c_0\frac{\eta n^2\sqrt{\log p}}{\gamma\sqrt p}+\frac{\eta\gamma}{4}+c_1\frac{\eta n^2}{\gamma\sqrt p} + c_2\frac{\eta n^2\sqrt{\log p}}{\gamma\sqrt p}+c_3\frac{\eta^2n\sqrt n}{\sqrt p}\Big)\|e(t)\|  \\
&\leq(1-\frac{\eta\gamma}{4})\|e(t)\|
\end{aligned}
\end{equation*}
where we use Lemma \ref{lma:GH} and $p=\Omega(\frac{n^4\log p}{\gamma^4})$.
\end{proof}

\begin{proof}[Proof of Theorem \ref{thm:nonliner_conv}]
We prove the inequality \eqref{eq:conv} by induction. Suppose \eqref{eq:conv} and \eqref{eq:weights} hold for all $t=1,2,...,t'-1$, by Lemma \ref{lma:weights} and Lemma \ref{lma:induction} we know \eqref{eq:conv} and \eqref{eq:weights} hold for $t=t'$, which completes the proof.
\end{proof}

\subsection{Proof of Theorem \ref{thm:nonlinear_conv_reg}}

\begin{lemma}
\label{lma:weights_reg}
Assume all the inequalities from Lemma \ref{lma:inqs} hold. Under the conditions of Theorem \ref{thm:nonlinear_conv_reg}, if the error bound \eqref{eq:conv_reg} holds for all $t=1,2,...,t'-1$, then
\begin{equation}
\label{eq:weights_reg}
\begin{aligned}
    \|w_r(t)-w_r(0)\| &\leq c_1\frac{n\sqrt{\log p}}{\gamma\sqrt p}(1+\eta \tilde{S}_\lambda),\\
    |\beta_r(t)-\beta_r(0)| &\leq c_2\frac{n}{\gamma\sqrt p}(1+\eta \tilde{S}_\lambda)
\end{aligned}
\end{equation}
hold for all $t\leq t'$, where $c_1$, $c_2$ are constants.
\end{lemma}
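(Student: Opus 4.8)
The plan is to adapt the proof of Lemma \ref{lma:weights} to the regularized dynamics, reducing both estimates in \eqref{eq:weights_reg} to a single bound on the cumulative error $\sum_{s=0}^{t-1}\|e(s)\|$, now obtained from the inhomogeneous recursion \eqref{eq:conv_reg} rather than from a clean geometric decay. Since regularization leaves the $w_r$-update in \eqref{eq:updates} untouched, I would first telescope it, $w_r(t)-w_r(0)=-\frac{\eta}{\sqrt p}\sum_{s=0}^{t-1}\sum_{i=1}^n e_i(s)\,b_r\,\psi'(w_r(s)\transpose x_i)\,x_i$, and use boundedness of $\psi'$ and $\|x_i\|$, the inequality $\sum_i|e_i(s)|\le\sqrt n\,\|e(s)\|$, and \eqref{eq:maxb_bd} to reach $\|w_r(t)-w_r(0)\|\le c\,\frac{\sqrt{\log p}\;\eta\sqrt n}{\sqrt p}\sum_{s=0}^{t-1}\|e(s)\|$.

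The crux is therefore the bound on $\sum_{s=0}^{t-1}\|e(s)\|$. I would read \eqref{eq:conv_reg} as a scalar recursion $\|e(s+1)\|\le\rho\,\|e(s)\|+\lambda(s)\|y\|$ with $\rho=1-\tfrac{\eta\gamma}{4}$, discarding the additional $-\eta\lambda(s)$ contraction since it only helps. Unrolling and summing over $s$, then exchanging the order of summation in the forcing term so that each weight $\lambda(k)$ is multiplied by $\sum_{s>k}\rho^{\,s-1-k}\le(1-\rho)^{-1}$, the cumulative error splits into a homogeneous part of size $O(\|e(0)\|/(\eta\gamma))$ and a regularization part proportional to $\|y\|\sum_k\lambda(k)$. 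With $\|e(0)\|=O(\sqrt n)$ from \eqref{eq:e0_bd}, $\|y\|=O(\sqrt n)$ by boundedness of $|y_i|$, and the budget $\sum_k\lambda(k)\le\tilde S_\lambda$, this gives $\sum_{s=0}^{t-1}\|e(s)\|=O\!\big(\tfrac{\sqrt n}{\eta\gamma}\,(1+\eta\tilde S_\lambda)\big)$, and substituting into the previous display yields the first inequality in \eqref{eq:weights_reg}.

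For $\beta_r$ the only new feature relative to Lemma \ref{lma:weights} is the contraction factor from the regularized update. Telescoping it gives $\beta_r(t)-\beta_r(0)=\big(\prod_{s=0}^{t-1}(1-\lambda(s))-1\big)\beta_r(0)-\sum_{s=0}^{t-1}\big[\prod_{u=s+1}^{t-1}(1-\lambda(u))\big]g_r(s)$ with $g_r(s)=\frac{\eta}{\sqrt p}\sum_i e_i(s)\psi(w_r(s)\transpose x_i)$. Bounding every partial product by $1$ and using boundedness of $\psi$, the accumulation sum obeys the same estimate as $w_r$ but without the $|b_r|$ factor, giving $O\!\big(\tfrac{n}{\gamma\sqrt p}(1+\eta\tilde S_\lambda)\big)$; the residual shrinkage of $\beta_r(0)$ is controlled through $0\le\prod_s(1-\lambda(s))\le1$ together with $1-\prod_s(1-\lambda(s))\le\sum_s\lambda(s)\le\tilde S_\lambda$, so it enters only at the regularization scale already carried by the factor $1+\eta\tilde S_\lambda$. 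This yields the second inequality in \eqref{eq:weights_reg}.

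I expect the main obstacle to be precisely the cumulative-error estimate: unlike in the unregularized case, $\|e(s)\|$ need not decay monotonically, and the additive forcing $\lambda(s)\|y\|$ accumulates along the trajectory. The whole bound rests on the summation-order exchange above and on the hypothesis $\sum_s\lambda(s)\le\tilde S_\lambda$, which is exactly what confines the extra regularization contribution to a constant multiple of the unregularized drift. As in the unregularized argument, this lemma is one half of an induction: assuming \eqref{eq:conv_reg} for $s<t'$ it yields the weight bounds \eqref{eq:weights_reg} for $t\le t'$, which then feed into the analogue of Lemma \ref{lma:induction} (control of $\|G(t)-G(0)\|$, $\|H(t)-H(0)\|$ and the Taylor residual) to re-establish \eqref{eq:conv_reg} at $t'$.
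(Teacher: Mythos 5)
Your proposal follows the paper's proof of Lemma~\ref{lma:weights_reg} essentially step for step: the heart of both arguments is the cumulative-error estimate $\sum_{k\le t}\|e(k)\|\lesssim \frac{\sqrt n}{\gamma}\bigl(\frac1\eta+\tilde S_\lambda\bigr)$, obtained by unrolling the inhomogeneous recursion \eqref{eq:conv_reg}, bounding each partial product by $(1-\eta\gamma/4)^{\,\cdot}$, and exchanging the order of summation so that each $\lambda(i)$ picks up a geometric factor $\le 4/(\eta\gamma)$; this is then fed into the telescoped updates exactly as in Lemma~\ref{lma:weights}, with $|b_r|\le 2\sqrt{\log p}$ from \eqref{eq:maxb_bd} supplying the extra $\sqrt{\log p}$ in the $w_r$ bound. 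On this main line your argument and the paper's coincide, including the final arithmetic giving the factor $(1+\eta\tilde S_\lambda)$.

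The one place you diverge is the multiplicative shrinkage of $\beta_r(0)$, and there your dismissal does not close. You correctly telescope $\beta_r(t)-\beta_r(0)=\bigl(\prod_{s<t}(1-\eta\lambda(s))-1\bigr)\beta_r(0)-\sum_s\bigl[\prod_{u>s}(1-\eta\lambda(u))\bigr]g_r(s)$, but the bound $1-\prod_s(1-\eta\lambda(s))\le\eta\tilde S_\lambda$ only shows the shrinkage term is at most $\eta\tilde S_\lambda\,|\beta_r(0)|$, whereas the regularization part of the claimed bound is $c_2\frac{n}{\gamma\sqrt p}\,\eta\tilde S_\lambda$; since $\beta_r(0)$ is a standard Gaussian, $|\beta_r(0)|$ is of constant order, not $O\bigl(\frac{n}{\gamma\sqrt p}\bigr)$, so the shrinkage contribution is \emph{not} absorbed by the factor $1+\eta\tilde S_\lambda$ as you assert. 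To be fair, the paper's own proof is no better here: it writes $|\beta_r(t)-\beta_r(0)|\le\frac{\eta}{\sqrt p}\sum_s\sum_i|\psi(w_r(s)\transpose x_i)e_i(s)|$ as its first line, silently dropping the contraction term altogether, so this is a latent issue in the published argument rather than something you introduced. But since you chose to make the term explicit, you should either track it as an additional additive error of size $\min(1,\eta\tilde S_\lambda)\,|\beta_r(0)|$ (and check how it propagates into the $\|H(t)-H(0)\|$ bound of Lemma~\ref{lma:induction_reg}), or explain why the intended update convention makes it vanish; the inequality you cite does not do the job.
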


\begin{proof}
For any $k\leq t'-1$, we apply \eqref{eq:conv_reg} repeatedly on the right hand side of itself to get
\begin{equation*}
    \|e(k)\|\leq \prod_{i=0}^{k-1}\Big(1-\frac{\eta\gamma}{4}-\eta\lambda(i)\Big)\|e(0)\| + \sum_{i=0}^{k-1}\eta\lambda(i)\prod_{i<j<k}\Big(1-\frac{\eta\gamma}{4}-\eta\lambda(j)\Big)\|y\|.
\end{equation*}
For $t\leq t'-1$, we take the sum over $k=0,..,t$ on both sides of above inequality
to obtain
\begin{equation*}
\begin{aligned}
    \sum_{k=0}^{t}\|e(k)\|
    &\leq \sum_{k=0}^{t}\prod_{i=0}^{k-1}\Big(1-\frac{\eta\gamma}{4}-\eta\lambda(i)\Big)\|e(0)\|  + \sum_{k=0}^{t}\sum_{i=0}^{k-1}\eta\lambda(i)\prod_{i<j<k}\Big(1-\frac{\eta\gamma}{4}-\eta\lambda(j)\Big)\|y\| \\
    &\leq \sum_{k=0}^{t}\Big(1-\frac{\eta\gamma}{4}\Big)^{k-1}\|e(0)\|+ \sum_{k=0}^{t}\sum_{i=0}^{k-1}\eta\lambda(i)\Big(1-\frac{\eta\gamma}{4}\Big)^{k-i-1}\|y\| \\
    &\leq \sum_{k=0}^{t}\Big(1-\frac{\eta\gamma}{4}\Big)^{k-1}\|e(0)\| + \eta\|y\|\sum_{k=0}^{t-1}\lambda(i)\sum_{k=i+1}^T\Big(1-\frac{\eta\gamma}{4}\Big)^{k-i-1} \\
    &\leq \frac{4}{\eta\gamma}\|e(0)\|+ \frac{4}{\gamma}\tilde{S}_\lambda\|y\| \\
    &\leq \frac{c\sqrt n}{\gamma}(\frac{1}{\eta}+\tilde{S}_\lambda)
\end{aligned}
\end{equation*}
where we use $\|e(0)\|=O(\sqrt n)$ and $\|y\|=O(\sqrt n)$. Then for all $t\leq t'$, we have
\begin{equation*}
\begin{aligned}
    |\beta_r(t)-\beta_r(0)| &\leq \frac{\eta}{\sqrt p}\sum_{s=0}^{t-1}\sum_{i=1}^n |\psi(w_r(t)x_i)e_i(t)| \\
    &\leq c\frac{\eta}{\sqrt p}\sum_{s=0}^{t-1}\sum_{i=1}^n |e_i(t)| \\
    &\leq c\frac{\eta\sqrt n}{\sqrt p}\sum_{s=0}^{t-1} \|e(t)\| \\
    &\leq c\frac{\eta\sqrt n}{\sqrt p}\frac{\sqrt n}{\gamma}(\frac{1}{\eta}+\tilde{S}_\lambda)  \\
    &\leq c\frac{n}{\gamma\sqrt p} (1+\eta \tilde{S}_\lambda)
\end{aligned}
\end{equation*}
where we use $\psi$ is bounded and \eqref{eq:e0_bd}. We also have
\begin{equation*}
\begin{aligned}
    \|w_r(t)-w_r(0)\| &\leq \frac{\eta}{\sqrt p}\sum_{s=0}^{t-1}\sum_{i=1}^n \|\psi'(w_r(t)\transpose x_i)b_rx_i e_i(t)\| \\
    & \leq c\frac{\eta}{\sqrt p}\sum_{s=0}^{t-1}\sum_{i=1}^n |b_r| |e_i(t)| \\
    & \leq c|b_r|\frac{\eta\sqrt n}{\sqrt p}\sum_{s=0}^{t-1} \|e(t)\| \\
    & \leq c|b_r|\frac{\eta\sqrt n}{\sqrt p}\frac{\sqrt n}{\gamma}(\frac{1}{\eta}+\tilde{S}_\lambda) \\
    & \leq c\frac{n\sqrt{\log p}}{\gamma\sqrt p} (1+\eta \tilde{S}_\lambda)
\end{aligned}
\end{equation*}
where we use the fact that $\psi'$ is bounded, \eqref{eq:e0_bd} and \eqref{eq:maxb_bd}.
\end{proof}

\begin{lemma}
\label{lma:induction_reg}
Assume all the inequalities from Lemma \ref{lma:inqs} hold. Under the conditions of Theorem \ref{thm:nonlinear_conv_reg}, if the bound for weights difference \eqref{eq:weights_reg} holds for all $t\leq t'$ and error bound \eqref{eq:conv_reg} holds for all $t\leq t'-1$, then \eqref{eq:conv_reg} holds for $t=t'$.
\end{lemma}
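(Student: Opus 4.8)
\section*{Proof proposal for \texorpdfstring{\cref{lma:induction_reg}}{Lemma}}

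The plan is to mirror the structure of \cref{lma:induction}, tracking the extra contributions produced by the regularized update of $\beta$. I would first expand the residual one step forward in the same way, writing
\begin{equation*}
e_i(t+1) = \frac{1}{\sqrt p}\beta(t+1)\transpose\big(\psi(W(t+1)x_i)-\psi(W(t)x_i)\big) + \frac{1}{\sqrt p}\big(\beta(t+1)-\beta(t)\big)\transpose\psi(W(t)x_i) + e_i(t).
\end{equation*}
The only new feature is that now $\beta(t+1)-\beta(t) = -\lambda(t)\beta(t) - \frac{\eta}{\sqrt p}\sum_j e_j(t)\psi(W(t)x_j)$, so the middle term acquires an additional piece $-\frac{\lambda(t)}{\sqrt p}\beta(t)\transpose\psi(W(t)x_i) = -\lambda(t)\big(e_i(t)+y_i\big)$, on top of the usual $-\eta\sum_j G_{ij}(t)e_j(t)$. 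Writing the $W$-update contribution through its first-order Taylor expansion (with second-order remainder $v(t)$) as $-\eta\sum_j H_{ij}(t)e_j(t)$ exactly as before, I would arrive at the vector recurrence
\begin{equation*}
e(t+1) = \big[(1-\lambda(t))I_n - \eta(G(t)+H(t))\big]e(t) - \lambda(t)\,y + v(t).
\end{equation*}

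Next I would bound $\big\|(1-\lambda(t))I_n-\eta(G(t)+H(t))\big\|$ by $1-\lambda(t)-\tfrac{\eta\gamma}{4}$ via the same triangle-inequality decomposition: split off $(1-\lambda(t))I_n-\eta G(0)$, whose norm is at most $1-\lambda(t)-\tfrac{3\eta\gamma}{4}$ by \cref{lma:GH} (and $\eta$ small), then control $\eta\|G(t)-G(0)\|$, $\eta\|H(0)\|\le\tfrac{\eta\gamma}{4}$, and $\eta\|H(t)-H(0)\|$. The essential difference from the unregularized case is that the weight bounds \eqref{eq:weights_reg} now carry the factor $(1+\eta\tilde S_\lambda)$, which the kernel perturbations inherit, so that $\|G(t)-G(0)\|\le c_0\tfrac{n^2\sqrt{\log p}}{\gamma\sqrt p}(1+\eta\tilde S_\lambda)$. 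The role of the hypothesis $\sum_t\lambda(t)\le\tilde S_\lambda$ is precisely that $\tfrac{n^2\sqrt{\log p}}{\gamma\sqrt p}\cdot\eta\tilde S_\lambda = \tilde c_S\,\gamma = O(\gamma)$, so both perturbations remain a small multiple of $\gamma$ once the width condition $p=\Omega(n^4\log p/\gamma^4)$ is used to absorb the remaining $\tfrac{n^2\sqrt{\log p}}{\gamma\sqrt p}$ terms, leaving them inside the $\tfrac{\eta\gamma}{4}$ budget.

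I would then bound the Taylor remainder exactly as in the proof of \cref{lma:induction}, obtaining $\|v(t)\|\le c\,\tfrac{\eta^2 n^2}{\sqrt p}\|e(t)\|$ as in \eqref{eq:bound_vt} (using boundedness of $\psi''$, the one-step displacement of $w_r$, and $|\beta_r(t+1)|=O(1)$ from \eqref{eq:weights_reg}), which is again $\le\tfrac{\eta\gamma}{C}\|e(t)\|$ by the width condition. Combining the three estimates yields
\begin{equation*}
\|e(t+1)\| \le \big(1-\lambda(t)-\tfrac{\eta\gamma}{4}\big)\|e(t)\| + \lambda(t)\|y\| \le \big(1-\eta\lambda(t)-\tfrac{\eta\gamma}{4}\big)\|e(t)\| + \lambda(t)\|y\|
\end{equation*}
for $\eta\le 1$, which is \eqref{eq:conv_reg} at $t=t'$. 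I expect the main obstacle to be the control of $\|G(t)-G(0)\|$ and $\|H(t)-H(0)\|$ under the enlarged weight movement: unlike the unregularized setting the weights are no longer confined to an $\tilde O(n/\sqrt p)$ ball, and one must check that the single budget $\tilde S_\lambda$ simultaneously keeps $G(t)$ bounded below (for contraction) and $H(t)$ negligible throughout training. A secondary subtlety is that $\|e(t)\|$ is no longer monotone, so the factor $\|e(t)\|$ entering the residual estimate must be controlled along the trajectory through the induction hypothesis \eqref{eq:conv_reg} on $[0,t'-1]$, rather than by the monotone bound $\|e(t)\|\le\|e(0)\|$ available in \cref{lma:induction}.
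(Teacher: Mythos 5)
Your proposal is correct and follows essentially the same route as the paper: the same one-step expansion of $e_i(t+1)$ yielding the recurrence $e(t+1) = \big[(1-\eta\lambda(t))I_n - \eta(G(t)+H(t))\big]e(t) + v(t) - \eta\lambda(t)y$, the same triangle-inequality decomposition of the operator norm using \cref{lma:GH} together with the $(1+\eta\tilde S_\lambda)$-inflated kernel perturbation bounds inherited from \eqref{eq:weights_reg}, and the same Taylor-remainder estimate for $v(t)$. Your explicit observation that the budget $\sum_t\lambda(t)\le\tilde S_\lambda$ is calibrated exactly so that $\eta\tilde S_\lambda\cdot\frac{n^2\sqrt{\log p}}{\gamma\sqrt p}=O(\gamma)$ is precisely the role it plays in the paper's final display.
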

\begin{proof}
We start by analyzing the error $e(t)$ according to 
\begin{equation*}
\begin{aligned}
    e_i(t+1)
    &= \frac{1}{\sqrt p}\beta(t+1)\transpose\psi(W(t+1)x_i) - y_i \\
    &= \frac{1}{\sqrt p}\beta(t+1)\transpose(\psi(W(t+1)x_i)-\psi(W(t)x_i))+ \frac{1}{\sqrt p}(\beta(t+1)-(1-\eta\lambda(t))\beta(t))\transpose \psi(W(t)x_i) \\
    & \quad + (1-\eta\lambda(t))\Big(\frac{1}{\sqrt p}\beta(t)\transpose \psi(W(t)x_i) - y_i\Big) - \eta\lambda(t)y \\
    &=(1-\eta\lambda(t))e_i(t) - \frac{\eta}{p}\beta(t+1)\transpose D_i(t)\sum_{j=1}^nD_j(t)b x_j\transpose x_i e_j(t) \\
    & \quad - \frac{\eta}{p}\sum_{j=1}^n\psi(W(t)x_j)\transpose\psi(W(t)x_i)e_j(t) - \eta\lambda(t)y + v_i(t) \\
    & = (1-\eta\lambda(t))e_i(t)-\eta\sum_{j=1}^n\big(H_{ij}(t)+G_{ij}(t)\big)e_j(t) + v_i(t) - \eta\lambda(t)y
\end{aligned}
\end{equation*}
where
\begin{equation*}
\begin{aligned}
G_{ij}(t) &= \frac{1}{p}\psi(W(t)x_j)\transpose\psi(W(t)x_i) \\
H_{ij}(t) &= \frac{x_i\transpose x_j}{p}\beta(t+1)\transpose D_i(t)D_j(t)b
\end{aligned}
\end{equation*}
and $v_i(t)$ is the residual term from a Taylor expansion
\begin{equation*}
    v_i(t) = \frac{1}{2\sqrt p}\sum_{r=1}^p\beta_r(t+1)|(w_r(t+1)-w_r(t))\transpose x_i|^2\psi''(\xi_{ri}(t))
\end{equation*}
with $\xi_{ri}(t)$ between $w_r(t)\transpose x_i$ and $w_r(t+1)\transpose x_i$. We can also rewrite the above iteration in vector form as
\begin{equation}\label{eq:et_iter_reg}
     e(t+1) = (1-\lambda(t))e(t) - \eta(G(t)+H(t))e(t) + v(t) -\eta\lambda(t)y.
\end{equation}
Now for $t=t'-1$, we show that both $G(t)$ and $H(t)$ are close to their initialization. Using the argument in Lemma \ref{lma:induction}, we can obtain following bounds
\begin{equation}\label{eq:bound_Gt_reg}
    \|G(t)-G(0)\| \leq c_1 \frac{n^2\sqrt{\log p}}{\gamma\sqrt p}(1+\eta \tilde{S}_\lambda)
\end{equation}

\begin{equation}\label{eq:bound_Ht_reg}
    \|H(t)-H(0)\| \leq c_2\frac{n^2\sqrt{\log p}}{\gamma\sqrt p}(1+\eta \tilde{S}_\lambda)
\end{equation}
\begin{equation}\label{eq:bound_vt_reg}
    \|v(t)\| \leq c_3\frac{\eta^2n^2}{\sqrt p}\|e(t)\|.
\end{equation}
Combining \cref{eq:et_iter_reg,eq:bound_Gt_reg,eq:bound_Ht_reg,eq:bound_vt_reg}, we have
\begin{equation*}
\begin{aligned}
\|e(t+1)\|
&\leq \|(1-\eta\lambda(t))I_n-\eta (G(t)+H(t))\|\|e(t)\|+\|v(t)\| \\
&\leq \Big(\|(1-\eta\lambda(t))I_n-\eta G(0)\|+\eta\|G(t)-G(0)\|+\eta\|H(0)\| \\
&\quad +\eta\|H(t)-H(0)\|\Big)\|e(t)\| + \|v(t)\| \\
&\leq \Big( 1-\eta\lambda(t)-\frac{3\eta\gamma}{4}+(c_1+ c_2)\frac{\eta n^2\sqrt{\log p}}{\gamma\sqrt p}(1+\eta \tilde{S}_\lambda)+c_3\frac{\eta^2n\sqrt n}{\sqrt p}\Big)\|e(t)\|  \\
&\leq(1-\eta\lambda(t)-\frac{\eta\gamma}{4})\|e(t)\|
\end{aligned}
\end{equation*}
where we use Lemma \ref{lma:GH}, $p=\Omega(\frac{n^4\log p}{\gamma^4})$ and $\tilde{S}_\lambda = O(\frac{\gamma^2\sqrt{p}}{\eta n^2\sqrt{\log p}})$.
\end{proof}

\begin{proof}[Proof of Theorem \ref{thm:nonlinear_conv_reg}]
We prove the inequality \eqref{eq:conv_reg} by induction. Suppose \eqref{eq:conv_reg} holds for all $t=1,2,...,t'-1$. Then by Lemma \ref{lma:weights_reg} and Lemma \ref{lma:induction_reg} we know \eqref{eq:conv_reg} holds for $t=t'$, which completes the proof.
\end{proof}

\section{Alignment on Two-Layer Linear Networks}\label{sec:appendix-alignment}

Now we assume $\psi(u) = u$, so that $f$ is a linear network. The loss function with regularization at time $t$ is
\begin{equation}
\label{eq:reg_loss}
\Loss(t, W, \beta) = \frac{1}{2}\big\|\frac{1}{\sqrt p}XW\transpose\beta-y\big\|^2 + \frac{1}{2}\lambda(t)\|\beta\|^2.
\end{equation}
The regularized feedback alignment algorithm gives
\begin{equation}
\label{eq:reg_alg}
\begin{aligned}
    W(t+1) &= W(t) - \eta\frac{1}{\sqrt p}b e(t)\transpose X\\
    \beta(t+1) &= (1-\eta\lambda(t))\beta(t)- \frac{\eta}{\sqrt p} W(t)X\transpose e(t)
\end{aligned}
\end{equation}
where $e(t)=\frac{1}{\sqrt p}XW(t)\transpose \beta(t)-y$ is the error vector at time t.

\begin{lemma}
\label{lma:one_step_update}
Suppose the network is trained with the regularized feedback alignment algorithm \eqref{eq:reg_alg}. Then the prediction error $e(t)$ satisfies the recurrence
\begin{equation}
\label{eq:err_update}
\begin{aligned}
    e(t+1) = \bigg[(1-\eta\lambda(t))I_d - \frac{\eta}{p}XW(0)\transpose W(0)X\transpose -\eta\Big(J_1(t)+J_2(t)+J_3(t)\Big)\bigg]e(t) -\eta\lambda(t)y
\end{aligned}
\end{equation}
where
\begin{equation*}
\begin{aligned}
&J_1(t) = \frac{1}{p}b\transpose\beta(0)\prod_{i=0}^{t}(1-\eta\lambda(i))XX\transpose \\
&J_2(t) = -\frac{\eta}{p}\Big(\bar{v}\transpose X\transpose \hat{s}(t) XX\transpose+ XX\transpose s(t-1)\bar{v}\transpose X\transpose + X \bar{v}s(t-1)\transpose XX\transpose\Big) \\
&J_3(t) = \frac{\eta^2}{p^2}\|b\|^2\Big(\hat{S}(t)XX\transpose + XX\transpose s(t-1)s(t-1)\transpose XX\transpose)
\end{aligned}
\end{equation*}
and
\begin{equation*}
\begin{aligned}
&\bar{v} = \frac{1}{\sqrt p}W(0)\transpose b \\
&s(t) = \sum_{i=0}^{t}e(i) \\
&\hat{s}(t) = \sum_{i=0}^{t}\prod_{i<k\leq t}(1-\eta\lambda(k)) e(i) \\
&\hat{S}(t) = \sum_{i=0}^{t}\prod_{i<k\leq t}(1-\eta\lambda(k))e(i)\transpose XX\transpose \sum_{j=0}^{i-1}e(j).
\end{aligned}
\end{equation*}
\end{lemma}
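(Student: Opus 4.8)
The plan is to prove the recurrence by a direct algebraic computation, starting from the definition $e(t+1) = \frac{1}{\sqrt p}XW(t+1)\transpose\beta(t+1) - y$ and substituting the one-step updates \eqref{eq:reg_alg}. First I would expand the product $W(t+1)\transpose\beta(t+1)$ using $W(t+1)\transpose = W(t)\transpose - \frac{\eta}{\sqrt p}X\transpose e(t)b\transpose$ and $\beta(t+1) = (1-\eta\lambda(t))\beta(t) - \frac{\eta}{\sqrt p}W(t)X\transpose e(t)$, left-multiply by $\frac{1}{\sqrt p}X$, and use the identity $\frac{1}{\sqrt p}XW(t)\transpose\beta(t) = e(t)+y$. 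This yields the preliminary recurrence
\begin{equation*}
\begin{aligned}
e(t+1) &= (1-\eta\lambda(t))e(t) - \eta\lambda(t)y - \frac{\eta}{p}XW(t)\transpose W(t)X\transpose e(t) \\
&\quad - \frac{\eta}{p}(1-\eta\lambda(t))\big(b\transpose\beta(t)\big)XX\transpose e(t) + \frac{\eta^2}{p\sqrt p}\big(b\transpose W(t)X\transpose e(t)\big)XX\transpose e(t),
\end{aligned}
\end{equation*}
in which the matrix $W(t)\transpose W(t)$ and the scalars $b\transpose\beta(t)$, $b\transpose W(t)X\transpose e(t)$ still carry the time dependence that must be resolved.

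The second step is to unroll the parameter iterations so that every occurrence of $W(t)$ and $\beta(t)$ is expressed through $W(0)$, $\beta(0)$, $b$ and the accumulated errors. Since the $W$-update is an outer product, $W(t) = W(0) - \frac{\eta}{\sqrt p}b\,s(t-1)\transpose X$ with $s(t-1)=\sum_{i=0}^{t-1}e(i)$. Substituting into $W(t)\transpose W(t)$ and writing $\bar v = \frac{1}{\sqrt p}W(0)\transpose b$ produces the initialization kernel $W(0)\transpose W(0)$, cross terms linear in $\bar v$, and a term proportional to $\|b\|^2$; after flanking by $X$ and $X\transpose$ these are exactly the matrices appearing in $J_2$ and $J_3$. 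For $\beta(t)$ I would unroll the contraction recurrence, $\beta(t) = \prod_{j<t}(1-\eta\lambda(j))\beta(0) - \frac{\eta}{\sqrt p}\sum_i\prod_{i<j<t}(1-\eta\lambda(j))W(i)X\transpose e(i)$, and then substitute $b\transpose W(i) = \sqrt p\,\bar v\transpose - \frac{\eta}{\sqrt p}\|b\|^2 s(i-1)\transpose X$. The weighted sums that emerge, $\sum_i\prod_{i<k}(1-\eta\lambda(k))e(i)$ and $\sum_i\prod_{i<k}(1-\eta\lambda(k))e(i)\transpose XX\transpose s(i-1)$, are precisely $\hat s$ and $\hat S$ at index $t-1$.

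The third step is to collect terms into the three groups. The $b\transpose\beta(0)$ contribution carries the full product $\prod_{i\le t}(1-\eta\lambda(i))$ and becomes $J_1(t)$; the $\bar v$-linear terms assemble into $J_2(t)$; and the $\|b\|^2$ terms assemble into $J_3(t)$. The one subtlety is an index shift: expanding $(1-\eta\lambda(t))\,b\transpose\beta(t)$ produces $\hat s(t-1)$ and $\hat S(t-1)$ weighted by $(1-\eta\lambda(t))$, so I would invoke the one-step identities $(1-\eta\lambda(t))\hat s(t-1) = \hat s(t)-e(t)$ and $(1-\eta\lambda(t))\hat S(t-1) = \hat S(t) - e(t)\transpose XX\transpose s(t-1)$ to pass to index $t$. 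The leftover pieces $-e(t)$ and $-e(t)\transpose XX\transpose s(t-1)$ generated this way cancel exactly against the two pieces produced by the final term $\frac{\eta^2}{p\sqrt p}(b\transpose W(t)X\transpose e(t))XX\transpose e(t)$ after the same substitution of $b\transpose W(t)$; this cancellation is the key internal consistency check of the computation.

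I expect the main obstacle to be organizational rather than conceptual: keeping the products $\prod(1-\eta\lambda(\cdot))$ and their index ranges synchronized with the definitions of $\hat s(t)$ and $\hat S(t)$, and carefully tracking which factors are scalars, hence freely commutable, versus matrices and vectors. Once the index-shift identities are applied and the stray $e(t)$ terms are seen to cancel, the grouping into $J_1,J_2,J_3$ is forced and the stated recurrence follows.
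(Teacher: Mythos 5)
Your proposal is correct and follows essentially the same route as the paper: expand $e(t+1)$ via the one-step updates, unroll $W(t)=W(0)-\frac{\eta}{\sqrt p}b\,s(t-1)\transpose X$ and the $\beta$ recursion, and collect terms into $J_1,J_2,J_3$. The only cosmetic difference is that the paper groups the one-step expansion so that $b\transpose\beta(t+1)$ appears directly (whose unrolled form already contains $\hat s(t)$ and $\hat S(t)$), which sidesteps the index-shift identities and cancellation you carry out by hand; the two computations are algebraically identical.
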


\begin{proof}
We first write $W(t)$ in terms of $W(0)$ and $e(i)$, $i\in[t]$, so that
\begin{equation}
\label{eq:Wt}
    W(t) = W(0) -\frac{\eta}{\sqrt p} b\sum_{i=0}^{t-1}e(i)\transpose X = W(0) -\frac{\eta}{\sqrt p} bs(t-1)\transpose X.
\end{equation}
Similarly, for $\beta(t)$ we have
\begin{equation}
\label{eq:betat}
\begin{aligned}
    \beta(t) &=\prod_{i=0}^{t-1}(1-\eta\lambda(i))\beta(0)-\frac{\eta}{\sqrt p}\sum_{i=0}^{t-1}\prod_{i<k<t}(1-\eta\lambda(k))W(i)X\transpose e(i) \\
    &=\prod_{i=0}^{t-1}(1-\eta\lambda(i))\beta(0)-\frac{\eta}{\sqrt p}\sum_{i=0}^{t-1}\prod_{i<k<t}(1-\eta\lambda(k))\Big(W(0) -\frac{\eta}{\sqrt p} b\sum_{j=0}^{i-1}e(j)\transpose X\Big)X\transpose e(i) \\
    &=\prod_{i=0}^{t-1}(1-\eta\lambda(i))\beta(0)-\frac{\eta}{\sqrt p}\sum_{i=0}^{t-1}\prod_{i<k<t}(1-\eta\lambda(k))W(0)X\transpose e(i)\\
    &\quad + \frac{\eta^2}{p}b\sum_{i=0}^{t-1}\prod_{i<k<t}(1-\eta\lambda(k))e(i)\transpose XX\transpose \sum_{j=0}^{i-1}e(j) \\
    &=\prod_{i=0}^{t-1}(1-\eta\lambda(i))\beta(0) -\frac{\eta}{\sqrt p}W(0)X\transpose \hat{s}(t-1) +\frac{\eta^2}{p}b \hat{S}(t-1).
\end{aligned}
\end{equation}
We now study how the error $e(t)$ changes after a single update step, writing
\begin{equation*}
\begin{aligned}
    e(t+1) &= \frac{1}{\sqrt p}XW(t+1)\transpose \beta(t+1)-y \\
    &= \frac{1}{\sqrt p}X(W(t+1)-W(t)\transpose \beta(t+1)+ \frac{1}{\sqrt p}XW(t)\transpose (\beta(t+1)-(1-\eta\lambda(t))\beta(t)) \\
    &\quad +(1-\eta\lambda(t))\Big(\frac{1}{\sqrt p}XW(t)\transpose \beta(t)-y\Big) -\eta\lambda(t)y\\
    &=(1-\eta\lambda(t))e(t) - \frac{\eta}{p}b\transpose\beta(t+1)XX\transpose e(t)  - \frac{\eta}{p}XW(t)\transpose W(t)X\transpose e(t) - \eta\lambda(t)y
\end{aligned}
\end{equation*}
By plugging \eqref{eq:Wt} and \eqref{eq:betat} into above equation, we have
\begin{equation*}
\begin{aligned}
    e(t+1) &= (1-\eta\lambda(t))e(t) \\
    &\quad-\frac{\eta}{p}b\transpose\bigg[ \prod_{i=0}^{t}(1-\eta\lambda(i))\beta(0) -\frac{\eta}{\sqrt p}W(0)X\transpose \hat{s}(t) +\frac{\eta^2}{p}b \hat{S}(t)\bigg]XX\transpose e(t) \\
    &\quad - \frac{\eta}{p}X\bigg[W(0) -\frac{\eta}{\sqrt p}bs(t-1)\transpose X\bigg]\transpose \bigg[W(0) -\frac{\eta}{\sqrt p}bs(t-1)\transpose X\bigg]X\transpose e(t)\\
    &\quad - \eta\lambda(t)y
\end{aligned}
\end{equation*}
After expanding the brackets and rearranging the items, we can obtain \eqref{eq:err_update}.
\end{proof}

\begin{lemma}
\label{lma:ineqs_2}
Given $\delta\in(0,1)$ and $\epsilon>0$ , if $p=\Omega(\frac{1}{\epsilon}\log\frac{d}{\delta}+\frac{d}{\epsilon}\log\frac{1}{\epsilon})$, the following inequalities hold with probability at least $1-\delta$
\begin{equation}
\label{eq:bbeta_sqrtp}
\frac{|b\transpose\beta(0)|}{\sqrt p}\leq c\sqrt{\log\frac{1}{\delta}}
\end{equation}
\begin{equation}
\label{eq:bW_sqrtp}
\frac{\|b\transpose W(0)\|}{\sqrt p}\leq c\sqrt{d\log\frac{d}{\delta}}
\end{equation}
\begin{equation}
\label{eq:b_norm}
\Big|\frac{\|b\|^2}{p}-1\Big| \leq  \frac{c}{\sqrt p}\sqrt {\log \frac{1}{\delta}}
\end{equation}
\begin{equation}
\label{eq:ww_p}
\Big\|\frac{1}{p}W(0)\transpose W(0) - I_d \Big\|\leq \epsilon
\end{equation}
where $c$ is a constant.
\end{lemma}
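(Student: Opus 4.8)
The plan is to prove the four bounds one at a time, each on an event of probability at least $1-\delta/4$, and then take a union bound. All four are concentration statements for the independent Gaussian objects $b$, $\beta(0)$, and $W(0)$, and the first three become elementary once \eqref{eq:b_norm} is in hand, since the same $\chi^2_p$ estimate that proves \eqref{eq:b_norm} also delivers $\|b\|\le\sqrt{2p}$ and $\|\beta(0)\|\le\sqrt{2p}$, which the first two inequalities need.

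First I would prove \eqref{eq:b_norm}. Since $\|b\|^2=\sum_{r=1}^p b_r^2\sim\chi^2_p$ has mean $p$, a standard $\chi^2$ tail bound (Laurent--Massart) gives $\prob\bigl(|\,\|b\|^2-p\,|>2\sqrt{pt}+2t\bigr)\le 2e^{-t}$; choosing $t=\log(8/\delta)$ and using $t\lesssim p$ (guaranteed by the assumed lower bound on $p$) yields $\bigl|\|b\|^2/p-1\bigr|\le \tfrac{c}{\sqrt p}\sqrt{\log(1/\delta)}$, and in particular $\|b\|^2\le 2p$ on this event. For \eqref{eq:bbeta_sqrtp} I condition on $\beta(0)$: then $b\transpose\beta(0)\sim\calN(0,\|\beta(0)\|^2)$, so the Gaussian tail bound gives $|b\transpose\beta(0)|\le\|\beta(0)\|\sqrt{2\log(8/\delta)}$ with the desired probability, and combining with $\|\beta(0)\|^2\le 2p$ and dividing by $\sqrt p$ gives the claim. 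For \eqref{eq:bW_sqrtp} I condition on $b$: each coordinate $(b\transpose W(0))_j=\sum_{r}b_r W_{rj}(0)\sim\calN(0,\|b\|^2)$, and these $d$ coordinates are independent because the columns of $W(0)$ are independent; a Gaussian tail bound with a union bound over $j\in[d]$ controls $\max_j|(b\transpose W(0))_j|\le\|b\|\sqrt{2\log(8d/\delta)}$, so that $\|b\transpose W(0)\|\le\sqrt d\,\max_j|(b\transpose W(0))_j|$, and inserting $\|b\|\le\sqrt{2p}$ produces \eqref{eq:bW_sqrtp}.

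The remaining bound \eqref{eq:ww_p} is the crux, and the only place where the stated lower bound on $p$ is genuinely required, since it is an operator-norm statement rather than a scalar or entrywise one. The cleanest route is to observe that $X\defeq W(0)\transpose/\sqrt p$ is exactly of the form treated in Proposition~\ref{prop:isom} (it is $d\times p$ with independent $\calN(0,1/p)$ entries, \ie the roles of the sample size and the dimension are interchanged), and that $XX\transpose=\tfrac1p W(0)\transpose W(0)$. Applying that proposition with its parameter $\eps$ a small multiple of $\epsilon$ shows $X$ is $(1-\eps,4\eps)$-isometric, \ie $\lambdamin(\tfrac1p W(0)\transpose W(0))\ge 1-\eps$ and $\lambdamax(\tfrac1p W(0)\transpose W(0))\le(1+4\eps)(1-\eps)\le 1+3\eps$, which is exactly $\|\tfrac1p W(0)\transpose W(0)-I_d\|\le\epsilon$ under the matching condition $p=\Omega(\tfrac1\epsilon\log\tfrac d\delta+\tfrac d\epsilon\log\tfrac1\epsilon)$. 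If one prefers a self-contained argument, the same bound follows by writing $\|\tfrac1p W(0)\transpose W(0)-I_d\|=\sup_{u\in S^{d-1}}|\tfrac1p\|W(0)u\|^2-1|$, noting that $\tfrac1p\|W(0)u\|^2\sim\tfrac1p\chi^2_p$ concentrates around $1$ for each fixed $u$, and passing from a fixed $\tfrac14$-net of $S^{d-1}$ (of cardinality at most $9^d$) to the whole sphere by the standard factor-of-two argument for quadratic forms. I expect the bookkeeping in this net/covering step---tracking how the net cardinality, the $\chi^2$ tail, and the target accuracy $\epsilon$ combine into the condition on $p$---to be the one delicate point; the first three inequalities are routine sub-Gaussian and $\chi^2$ concentration.
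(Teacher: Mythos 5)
Your proof is correct and follows essentially the same route as the paper, which simply cites its Lemma~\ref{lem:chi-squared-tail} for \eqref{eq:b_norm}, Lemma~\ref{lem:inner-product-tail} (plus a union bound over the $d$ columns) for \eqref{eq:bbeta_sqrtp}--\eqref{eq:bW_sqrtp}, and Corollary~\ref{cor:RIP} for \eqref{eq:ww_p}. The only difference is that you re-derive these packaged tail bounds from first principles---conditioning to reduce the inner products to plain Gaussian tails, and a net argument (or the dimension-swapped Proposition~\ref{prop:isom}, which is itself just Corollary~\ref{cor:RIP}) for the operator-norm bound---which changes nothing of substance.
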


\begin{proof}
\eqref{eq:bbeta_sqrtp} is derived from Lemma \ref{lem:inner-product-tail}. \eqref{eq:bW_sqrtp} is by \eqref{eq:bbeta_sqrtp} and a union bound argument. \eqref{eq:b_norm} is by Lemma \ref{lem:chi-squared-tail}. \eqref{eq:ww_p} is by Corollary \ref{cor:RIP}
\end{proof}

\begin{proof}[Proof of Theorem \ref{thm:lin_conv}]
We show \eqref{eq:reg_error_bd} by induction. Assume \eqref{eq:reg_error_bd} holds for all $t=0,1,...,t'$, we will show it hold for $t=t'+1$. For any $k\leq t'$, we apply \eqref{eq:reg_error_bd} repeatedly on the right hand side of itself to get
\begin{equation*}
    \|e(k)\|\leq \prod_{i=0}^{k-1}\Big(1-\frac{\eta\gamma}{2}-\eta\lambda(i)\Big)\|e(0)\| + \sum_{i=0}^{k-1}\eta\lambda(i)\prod_{i<j<k}\Big(1-\frac{\eta\gamma}{2}-\eta\lambda(j)\Big)\|y\|
\end{equation*}
For $t\leq t'$, we take the sum over $k=0,..,t$ on both sides of above inequality
\begin{equation*}
\begin{aligned}
    \sum_{k=0}^{t}\|e(k)\|
    &\leq \sum_{k=0}^{t}\prod_{i=0}^{k-1}\Big(1-\frac{\eta\gamma}{2}-\eta\lambda(i)\Big)\|e(0)\|  + \sum_{k=0}^{t}\sum_{i=0}^{k-1}\eta\lambda(i)\prod_{i<j<k}\Big(1-\frac{\eta\gamma}{2}-\eta\lambda(j)\Big)\|y\| \\
    &\leq \sum_{k=0}^{t}\Big(1-\frac{\eta\gamma}{2}\Big)^{k-1}\|e(0)\|+ \sum_{k=0}^{t}\sum_{i=0}^{k-1}\eta\lambda(i)\Big(1-\frac{\eta\gamma}{2}\Big)^{k-i-1}\|y\| \\
    &\leq \sum_{k=0}^{t}\Big(1-\frac{\eta\gamma}{2}\Big)^{k-1}\|e(0)\| + \eta\|y\|\sum_{k=0}^{t-1}\lambda(i)\sum_{k=i+1}^T\Big(1-\frac{\eta\gamma}{2}\Big)^{k-i-1} \\
    &\leq \frac{2}{\eta\gamma}\|e(0)\|+ \frac{2}{\gamma}S_\lambda\|y\| \\
    &\leq \frac{c\sqrt n}{\gamma}(\frac{1}{\eta}+S_\lambda)
\end{aligned}
\end{equation*}
where we use $\|e(0)\|=O(\sqrt n)$ and $\|y\|=O(\sqrt n)$. With this bound and the inequalities from Lemma \ref{lma:ineqs_2}, we can bound the norms of $J_1(t)$, $J_2(t)$ and $J_3(t)$ from Lemma \ref{lma:one_step_update}. It follows that
\begin{equation}
\label{eq:J1_bd}
\|J_1(t)\| \leq \frac{1}{p}|b\transpose\beta(0)|\|XX\transpose\|\leq c\frac{M\sqrt{\log \delta^{-1}}}{\sqrt p}\leq \frac{\gamma}{16},
\end{equation}
\begin{equation}
\label{eq:J2_bd}
\|J_2(t)\| \leq \frac{\eta}{p}\|X\| \|XX\transpose\|\|\bar{v}\|(2\|s(t-1)\|+\|\hat{s}(t)\|)\leq c\frac{\eta}{p}M^{3/2}\sqrt{d\log\frac{d}{\delta}}\frac{\sqrt n}{\gamma}(\frac{1}{\eta}+S_\lambda)\leq \frac{\gamma}{16}
\end{equation}
and
\begin{equation}
\label{eq:J3_bd}
\|J_3(t)\| \leq \frac{\eta^2}{p^2}\|b\|^2(\|XX\transpose\| |\hat{S}(t)|+\|XX\transpose\|^2 \|s(t-1)\|^2) \leq c\frac{\eta^2}{p}M^2 \frac{n}{\gamma^2}(\frac{1}{\eta}+S_\lambda)^2\leq \frac{\gamma}{16}
\end{equation}
hold for all $t\leq t'$ if $p = \Omega(\frac{Md\log(d/\delta)}{\gamma})$ and $S_\lambda = O(\frac{\gamma\sqrt{\gamma p}}{\eta\sqrt{n}M})$. Furthermore, since $\|\frac{1}{p}W(0)W(0)\transpose-I_d\|\leq \epsilon_0$ with high probability when $p=\Omega(d)$, we have
\begin{equation}
\label{eq:xwwx_bd}
\begin{aligned}
\|\frac{1}{p}XW(0)\transpose W(0)X\transpose-\gamma I_d\|
&\leq \|\frac{1}{p}XW(0)\transpose W(0)X\transpose-XX\transpose\| + \|XX\transpose-\gamma I_d\|  \\
&\leq (1+\epsilon)\epsilon_0\gamma+\epsilon\gamma \leq \frac{\gamma}{16}
\end{aligned}
\end{equation}
Therefore, combining \eqref{eq:J1_bd}, \eqref{eq:J2_bd}, \eqref{eq:J3_bd} and \eqref{eq:err_update}, we have
\begin{equation*}
\begin{aligned}
       \|e(t'+1)\|
       &\leq \Big(1-\eta\lambda(t')-\eta\gamma\Big) \|e(t')\| + \eta\Big\|\frac{\eta}{p}XW(0)\transpose W(0)X\transpose-\gamma I_d\Big\|\|e(t')\|\\
       &\quad + \eta(\|J_1(t')\|+\|J_2(t')\|+\|J_3(t')\|)\|e(t')\| + \eta\lambda(t')\|y\| \\
       &\leq \Big(1-\eta\lambda(t')-\eta\gamma\Big)\|e(t')\| + \frac{1}{16}\eta\gamma\|e(t')\|+ \frac{3}{16}\eta\gamma\|e(t')\|  + \eta\lambda(t')\|y\| \\
       &\leq\Big(1-\eta\lambda(t')-\frac{\eta\gamma}{2}\Big)\|e(t')\| +  \eta\lambda(t')\|y\|
\end{aligned}
\end{equation*}
which completes the proof.
\end{proof}

\begin{proof}[Proof of Proposition \ref{prop:isom}]
By Corollary \ref{cor:RIP}, if $d=\Omega(\frac{1}{\epsilon}\log\frac{n}{\delta}+\frac{n}{\epsilon}\log \frac{1}{\epsilon})$, we have
\begin{equation*}
\|XX\transpose - I_n\|\leq \epsilon
\end{equation*}
It follows that $\lambda_{\min}(XX\transpose)\geq 1-\epsilon$ and $\lambda_{\max}(XX\transpose)\leq 1+\epsilon \leq (1+4\epsilon)(1-\epsilon)$ for $\epsilon <1/2$.
\end{proof}

\begin{lemma}
\label{lma:suf_cond}
Recall from Lemma \ref{lma:one_step_update} that
\begin{equation*}
\beta(t)=\prod_{i=0}^{t-1}(1-\eta\lambda(i))\beta(0) -\frac{\eta}{\sqrt p}W(0)X\transpose \hat{s}(t-1) +\frac{\eta^2}{p}b \hat{S}(t-1)
\end{equation*}
with $\hat{s}(t) = \sum_{i=0}^{t}\prod_{i<k\leq t}(1-\eta\lambda(k)) e(i)$ and $\hat{S}(t) = \sum_{i=0}^{t}\prod_{i<k\leq t}(1-\eta\lambda(k))e(i)\transpose XX\transpose \sum_{j=0}^{i-1}e(j)$. Under the conditions of Theorem \ref{thm:lin_align}, if $t>C_1\frac{\log (p/\eta)}{\eta\lambda}$ and $\hat{S}(t)\geq \max(C_2\frac{\sqrt{p\gamma}}{\eta}\|\hat{s}(t)\|,1)$ for some positive constants $C_1$ and $C_2$, then $\cos\angle(b, \beta(t)) \geq c$ for some constant $c=c_\delta$.
\end{lemma}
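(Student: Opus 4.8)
The plan is to work directly from the representation of $\beta(t)$ recalled in the statement, splitting it as $\beta(t)=\beta_1+\beta_2+\beta_3$ with
\[
\beta_1 = \prod_{i=0}^{t-1}(1-\eta\lambda(i))\,\beta(0),\qquad
\beta_2 = -\frac{\eta}{\sqrt p}W(0)X\transpose\hat s(t-1),\qquad
\beta_3 = \frac{\eta^2}{p}\hat S(t-1)\,b .
\]
Only $\beta_3$ is parallel to $b$, and since $\hat S(t-1)>0$ we have $\langle b,\beta_3\rangle=\|b\|\,\|\beta_3\|=\frac{\eta^2}{p}\hat S(t-1)\|b\|^2$; call this the \emph{signal}. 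The idea is that $\beta_1$ is killed off by regularization while $\beta_2$ is nearly orthogonal to $b$ (because $b$ is independent of $W(0)$ and $\beta(0)$), so that in
\[
\cos\angle(b,\beta(t)) = \frac{\langle b,\beta_1\rangle + \langle b,\beta_2\rangle + \langle b,\beta_3\rangle}{\|b\|\,\|\beta(t)\|}
\]
both the cross terms in the numerator and the off-signal contributions to the denominator are a small fraction of the signal.

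First I would dispose of $\beta_1$. The hypothesis $t>C_1\log(p/\eta)/(\eta\lambda)$, together with the fact that under the theorem's schedule the cutoff $T$ is at least this large, forces the contraction factor $\prod_{i=0}^{t-1}(1-\eta\lambda(i))\leq(\eta/p)^{C_1}$; combined with $|\langle b,\beta(0)\rangle|\leq c\sqrt p\sqrt{\log(1/\delta)}$ from \eqref{eq:bbeta_sqrtp}, $\|b\|^2\geq p/2$ from \eqref{eq:b_norm}, and $\hat S(t-1)\geq 1$, this makes $|\langle b,\beta_1\rangle|$ and $\|b\|\,\|\beta_1\|$ negligible relative to $\langle b,\beta_3\rangle\geq\tfrac12\eta^2$. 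Next, the decisive term $\beta_2$: with $\bar v=\frac{1}{\sqrt p}W(0)\transpose b$ satisfying $\|\bar v\|\leq c\sqrt{d\log(d/\delta)}$ by \eqref{eq:bW_sqrtp}, the bound $\|X\|\leq\sqrt{(1+\eps)\gamma}$ from the $(\gamma,\eps)$-isometry, and $\|W(0)\|\leq\sqrt{(1+\eps)p}$ from \eqref{eq:ww_p}, I would estimate
\[
|\langle b,\beta_2\rangle| \leq \eta\,\|\bar v\|\,\|X\|\,\|\hat s(t-1)\| \leq c\,\eta\sqrt{\gamma d\log(d/\delta)}\,\|\hat s(t-1)\|
\]
and
\[
\|b\|\,\|\beta_2\| \leq \frac{\eta}{\sqrt p}\|b\|\,\|W(0)\|\,\|X\|\,\|\hat s(t-1)\| \leq c\,\eta\sqrt{p\gamma}\,\|\hat s(t-1)\| .
\]

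The second hypothesis $\hat S(t-1)\geq C_2\frac{\sqrt{p\gamma}}{\eta}\|\hat s(t-1)\|$ (the shift from $\hat S(t),\hat s(t)$ to $\hat S(t-1),\hat s(t-1)$ costs only constants, since past the cutoff consecutive iterates differ by a single error term) is precisely what tames these bounds. It yields $\|\hat s(t-1)\|\leq\frac{\eta}{C_2\sqrt{p\gamma}}\hat S(t-1)$, so that using $\langle b,\beta_3\rangle\geq\tfrac12\eta^2\hat S(t-1)$ I obtain
\[
\frac{|\langle b,\beta_2\rangle|}{\langle b,\beta_3\rangle} \leq \frac{c\sqrt{d\log(d/\delta)}}{C_2\sqrt p}, \qquad \frac{\|b\|\,\|\beta_2\|}{\langle b,\beta_3\rangle} \leq \frac{c}{C_2}.
\]
Under $p=\Omega(d\log(d/\delta))$ and for $C_2$ a large enough constant, both ratios fall below $1/4$. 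Hence the numerator is at least $\tfrac12\langle b,\beta_3\rangle$, while the denominator $\|b\|\,\|\beta(t)\|\leq\|b\|(\|\beta_1\|+\|\beta_2\|+\|\beta_3\|)$ is at most $\tfrac32\langle b,\beta_3\rangle$, giving $\cos\angle(b,\beta(t))\geq c$ for an absolute constant $c$ (entering $\delta$ only through the $\sqrt{\log(1/\delta)}$ factors sitting inside the negligible terms), independent of $n$, $d$, $p$ and $\eta$, as claimed.

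The main obstacle is the control of $\beta_2$. Unlike $\beta_1$, it is untouched by regularization and its magnitude is a priori comparable to the signal; taming it is possible only because the ratio hypothesis on $\hat S(t-1)$ versus $\|\hat s(t-1)\|$ is assumed. The delicate accounting is that the $(\gamma,\eps)$-isometry (so that $M=\lambda_{\max}(XX\transpose)\asymp\gamma$ and $\|X\|\asymp\sqrt\gamma$) and the over-parameterization $p=\Omega(d\log(d/\delta))$ must conspire to make the prefactors $\sqrt{d\log(d/\delta)}/\sqrt p$ and $\sqrt{p\gamma}/(\sqrt p\sqrt\gamma)$ both $O(1)$ and pushable below any fixed threshold; without the isometry $\|X\|$ could be far larger than $\sqrt\gamma$ and the bound on $\|b\|\,\|\beta_2\|/\langle b,\beta_3\rangle$ would break. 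I also expect a short separate argument to be needed to certify the index transfer from $(\hat S(t),\hat s(t))$ to $(\hat S(t-1),\hat s(t-1))$.
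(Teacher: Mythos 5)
Your proposal is correct and follows essentially the same route as the paper: the same three-term decomposition of $\beta(t)$, the same concentration bounds (\eqref{eq:bbeta_sqrtp}, \eqref{eq:bW_sqrtp}, \eqref{eq:b_norm}, \eqref{eq:ww_p}) to control the $\beta(0)$ and $W(0)X\transpose\hat{s}$ contributions against the signal $\frac{\eta^2}{p}\hat{S}(t-1)\,b$, and the same use of the two hypotheses to kill the initialization term and tame the $\hat{s}$ term. Your treatment is if anything slightly more explicit than the paper's about the $t$ versus $t-1$ index shift and about which constants must be taken large.
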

\begin{proof}
We compute the cosine of the angle between $\beta(t)$ and $b$. With probability $1-\delta$,
\begin{equation*}
\begin{aligned}
    \cos\angle(b, \beta(t))
    &= \frac{b\transpose \beta(t)}{\|b\|\|\beta(t)\|}= \frac{\frac{b}{\|b\|}\transpose \beta(t)}{\|\beta(t)\|}\\
    &\geq \frac{\frac{\eta^2}{p}\|b\|\hat{S}(t-1) - (1-\eta\lambda)^t\|\beta(0)\| - \frac{\eta}{\sqrt p}\|\frac{b}{\|b\|}\transpose W(0)\|\|X\|\|\hat{s}(t-1)\|}{\frac{\eta^2}{p}\|b\|\hat{S}(t-1)+(1-\eta\lambda)^t\|\beta(0)\| + \frac{\eta}{\sqrt p}\|W(0)\|\|X\|\|\hat{s}(t-1)\|} \\
    &\geq \frac{c'_1\frac{\eta^2}{\sqrt p}\hat{S}(t-1) - c'_2\sqrt p(1-\eta\lambda)^t- c'_3\eta\sqrt{\frac{d\gamma}{p}}\|\hat{s}(t-1)\|}{c'_1\frac{\eta^2}{\sqrt p}\hat{S}(t-1) + c'_2\sqrt p(1-\eta\lambda)^t+ c'_4\eta\sqrt{\gamma}\|\hat{s}(t-1)\|}
\end{aligned}
\end{equation*}
where we use \eqref{eq:b_norm}, \eqref{eq:ww_p} and the tail bound for standard Gaussian vectors, and $c'_i$ are constants that only depend on $\delta$. Notice that if $t=\Omega(\frac{\log (p/\eta)}{\eta\lambda})$, we have $c'_2\sqrt p(1-\eta\lambda)^t = O(\frac{\eta^2}{\sqrt p})$. It follows that $\cos\angle(b, \beta(t)) \geq c$ if $\hat{S}(t-1)=\Omega(\frac{\sqrt{p\gamma}}{\eta}\|\hat{s}(t-1)\|+1)$.
\end{proof}

\begin{lemma}
\label{lma:decomp_1}
Consider the orthogonal decomposition $e(t) = a(t)\bar{y}+\xi(t)$, where $\bar{y}=-y/\|y\|$ and $\xi(t)\perp y$. Under the conditions of Theorem \ref{thm:lin_align}, there exists a constant $C_\tau>0$ such that for any $t\in[\tau,T]$ with $\tau=\frac{C_\tau}{\eta\lambda}$, we have
\begin{equation}
\label{eq:at_lbd}
a(t)\geq \frac{\lambda - \gamma}{\lambda+\gamma}\|y\|
\end{equation}
and
\begin{equation}
\label{eq:xit_ubd}
\|\xi(t)\|\leq \frac{\gamma}{\lambda+\gamma}\|y\|.
\end{equation}
\end{lemma}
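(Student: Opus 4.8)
The plan is to track the error $e(t)$ through the exact recurrence \eqref{eq:err_update} of \cref{lma:one_step_update}, specialised to the constant regularisation phase $\lambda(t)\equiv\lambda$ on $[0,T]$, and to split it into the aligned component $a(t)=\langle e(t),\bar y\rangle$ and the transverse component $\xi(t)$. First I would collect the linear map in \eqref{eq:err_update} into $M(t)=(1-\eta\lambda)I-\frac{\eta}{p}XW(0)\transpose W(0)X\transpose-\eta\bigl(J_1(t)+J_2(t)+J_3(t)\bigr)$. Using the $(\gamma,\eps)$-isometry (so $\|XX\transpose-\gamma I\|\le\eps\gamma$), the bound $\|\frac1pW(0)\transpose W(0)-I\|\le\eps$ from \eqref{eq:ww_p}, and the bounds \eqref{eq:J1_bd}--\eqref{eq:J3_bd} on $\|J_i(t)\|$ from \cref{thm:lin_conv}, I would write $M(t)=(1-\eta\lambda-\eta\gamma)I-\eta E(t)$ with $\|E(t)\|\le\kappa\gamma$ for a small constant $\kappa$ (shrinkable by taking $\eps$ small, $p$ large and $S_\lambda$ small). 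Since $-\eta\lambda y=\eta\lambda\|y\|\bar y$, projecting $e(t+1)=(1-\eta\lambda-\eta\gamma)e(t)-\eta E(t)e(t)+\eta\lambda\|y\|\bar y$ onto $\bar y$ and onto $y^{\perp}$ gives the coupled recurrences
$$a(t+1)=(1-\eta\lambda-\eta\gamma)a(t)+\eta\lambda\|y\|-\eta\langle E(t)e(t),\bar y\rangle,$$
$$\xi(t+1)=(1-\eta\lambda-\eta\gamma)\xi(t)-\eta(I-\bar y\bar y\transpose)E(t)e(t),$$
whose only forcing, $\eta\lambda\|y\|\bar y$, lies along $\bar y$; this is exactly the ridge-regression picture in which $e(t)\to-\tfrac{\lambda}{\lambda+\gamma}y$.

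Next I would establish a uniform $O(\|y\|)$ bound on the error, which \cref{thm:lin_conv} does not directly supply (its forcing term yields only $O(\|y\|/\eta)$). Since $\|M(t)\|\le1-\eta\lambda-\tfrac34\eta\gamma$ (for $\kappa\le\tfrac14$), the scalar bound $\|e(t+1)\|\le(1-\eta\lambda-\tfrac34\eta\gamma)\|e(t)\|+\eta\lambda\|y\|$ has fixed point strictly below $\|y\|$, so $\|e(t)\|\le C_0\|y\|$ for all $t$ with $C_0=\max(1,\|e(0)\|/\|y\|)$. Here $\|e(0)\|=O(\|y\|)$ because $\tfrac1{\sqrt p}XW(0)\transpose\beta(0)$ concentrates at norm $\Theta(\sqrt n)=\Theta(\|y\|)$ while $\langle\tfrac1{\sqrt p}XW(0)\transpose\beta(0),\bar y\rangle=O(\sqrt\gamma)$, so that $a(0)\approx\|y\|$ and $\|\xi(0)\|=\Theta(\|y\|)$.

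With $\|e(t)\|\le C_0\|y\|$ in hand I would unroll the two recurrences. The $\xi$-recurrence gives $\|\xi(t)\|\le(1-\eta\lambda-\eta\gamma)^t\|\xi(0)\|+\tfrac{\kappa C_0\gamma}{\lambda+\gamma}\|y\|$; for $t\ge\tau=C_\tau/(\eta\lambda)$ the transient is at most $e^{-C_\tau}C_0\|y\|$, so choosing the constant $C_\tau$ large (and $\kappa$ small, using $\lambda=L\gamma$) forces $\|\xi(t)\|\le\tfrac{\gamma}{\lambda+\gamma}\|y\|$, which is \eqref{eq:xit_ubd}. Likewise $a(t)$ converges to the ridge fixed point $\tfrac{\lambda}{\lambda+\gamma}\|y\|$: unrolling and using $a(0)>0$ gives $a(t)\ge\tfrac{\lambda}{\lambda+\gamma}\|y\|(1-e^{-C_\tau})-\tfrac{\kappa C_0\gamma}{\lambda+\gamma}\|y\|$, and the same choice of $C_\tau$ makes the right-hand side at least $\tfrac{\lambda-\gamma}{\lambda+\gamma}\|y\|$, which is \eqref{eq:at_lbd}. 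Equivalently, one checks that the region $\{a\ge\tfrac{\lambda-\gamma}{\lambda+\gamma}\|y\|,\ \|\xi\|\le\tfrac{\gamma}{\lambda+\gamma}\|y\|\}$ is forward invariant and is entered by time $\tau$.

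The main obstacle is the interplay between the large initial transverse component $\xi(0)=\Theta(\|y\|)$ and the non-normal perturbation $E(t)$: $\xi(t)$ must contract from $\Theta(\|y\|)$ down to the $O(\gamma/\lambda)\|y\|$ level inside the window $[\tau,T]$, and one must verify that the feedback-alignment terms $J_1,J_2,J_3$ together with the finite-width and isometry defects collected in $E(t)$ feed only a term of relative size $O(\gamma/\lambda)=O(1/L)$ into the aligned component $a(t)$. This is precisely where $\lambda=L\gamma$ with $L$ a large constant is used, and it is what makes the eventual alignment constant $c$ independent of $n$, $d$, $p$ and $\eta$.
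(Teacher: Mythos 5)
Your proposal is correct and follows essentially the same route as the paper: both start from the recurrence \eqref{eq:err_update}, bound the perturbation terms via \eqref{eq:J1_bd}--\eqref{eq:J3_bd} and \eqref{eq:xwwx_bd}, project onto $\bar{y}$ and its orthogonal complement, establish a uniform $O(\|y\|)$ bound on $\|e(t)\|$, and run the resulting scalar contractions for a burn-in time of order $1/(\eta\lambda)$. The only (immaterial) differences are that you bundle the perturbations into a single operator $E(t)$ and obtain the uniform error bound for all $t$ from $\|e(0)\|=O(\|y\|)$, whereas the paper reaches the bound $\|e(t)\|\leq\frac{\lambda+\gamma/2}{\lambda-\gamma/2}\|y\|$ only after an initial burn-in $\tau_1$.
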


\begin{proof}
By Theorem \ref{thm:lin_conv}, we have for all $t\leq T$, $\|e(t)\|\leq(1-\eta\lambda-\eta\gamma/2)\|e(t)\|+\eta\lambda\|y\|$. By rearranging the terms, we have
\begin{equation*}
    \|e(t+1)\| -\frac{\lambda }{\lambda - \gamma/2}\|y\| \leq (1-\eta\lambda-\frac{\eta\gamma}{2})\Big(\|e(t)\| -\frac{\lambda }{\lambda - \gamma/2}\|y\|\Big)
\end{equation*}
or
\begin{equation*}
\|e(t)\| -\frac{\lambda }{\lambda - \gamma/2}\|y\|\leq (1-\eta\lambda-\frac{\eta\gamma}{2})^t \Big(\|e_0\|- \frac{\lambda }{\lambda - \gamma/2}\|y\|\Big) \leq (1-\eta\lambda)^t(\|e_0\|+\|y\|).
\end{equation*}
Notice that $\|y\|$ and $\|e(0)\|$ are of the same order, so when $t\in [\tau_1,T]$ with $\tau_1 = \frac{c_1}{\eta\lambda}$ and some constant $c_1$, we have
\begin{equation}
\label{eq:et_y_bd}
\|e(t)\|\leq \frac{\lambda+\gamma/2}{\lambda-\gamma/2} \|y\|.
\end{equation}
In order to get a lower bound for $a(t)$, we multiply $\bar{y}\transpose$ on both sides of \eqref{eq:err_update}. It follows that for $t\in [\tau_1,T]$
\begin{equation*}
\begin{aligned}
a(t+1)
&\geq \bar{y}\transpose\Big(1-\eta\lambda-\eta\gamma\Big)e(t) - \eta\|\frac{1}{p}XW(0)\transpose W(0)X\transpose-\gamma I_d\|\|e(t)\|\\
&\quad-\eta(\|J_1(t)\|+\|J_2(t)\|+\|J_3(t)\|)\|e(t)\| +\eta\lambda\|y\|  \\
&\geq (1-\eta\lambda-\eta\gamma)a(t)- \frac{1}{4}\eta\gamma \|e(t)\| +\eta\lambda\|y\| \\
&\geq (1-\eta\lambda-\eta\gamma)a(t)+\frac{1}{2}\eta\gamma\|y\|.
\end{aligned}
\end{equation*}
In the second inequality, we use the bounds \eqref{eq:J1_bd}, \eqref{eq:J2_bd}, \eqref{eq:J3_bd} and \eqref{eq:xwwx_bd}. The last inequality is by \eqref{eq:et_y_bd} and $\lambda\geq 3\gamma$. Following a similar derivation, we have
\begin{equation*}
a(t) -\frac{\lambda - \gamma/2}{\lambda +\gamma}\|y\|\geq (1-\eta\lambda-\eta\gamma)^{t-\tau_1} \Big(a(\tau_1)- \frac{\lambda - \gamma/2}{\lambda +\gamma}\|y\|\Big) \geq -(1-\eta\lambda)^{t-\tau_1}(\|e(\tau_1)\|+\|y\|).
\end{equation*}
The bound \eqref{eq:at_lbd} holds when $t\in [\tau_1+\tau_2,T]$ with $\tau_2 = \frac{c_2}{\eta\lambda}$ and some constant $c_2$. Then we multiply $\frac{\xi(t+1)\transpose}{\|\xi(t+1)\|}$ on both sides of \eqref{eq:err_update}. This establishes that for $t\in [\tau_1,T]$
\begin{equation*}
\begin{aligned}
\|\xi(t+1)\|
&\leq \frac{\xi(t+1)\transpose}{\|\xi(t+1)\|}\Big(1-\eta\lambda-\eta\gamma\Big)e(t) + \eta\|\frac{1}{p}XW(0)\transpose W(0)X\transpose-\gamma I_d\|\|e(t)\|\\
&\quad+\eta(\|J_1(t)\|+\|J_2(t)\|+\|J_3(t)\|)\|e(t)\| +\eta\lambda\|y\|  \\
&\leq (1-\eta\lambda-\eta\gamma)\|\xi(t)\|+ \frac{\eta\gamma}{4}\|e(t)\| \\
&\leq (1-\eta\lambda-\eta\gamma)\|\xi(t)\|+ \frac{\eta\gamma}{2}\eta\gamma\|y\|.
\end{aligned}
\end{equation*}
The first inequality is by $\xi(t+1)\transpose y =0$ and in the second inequality we use $\xi(t+1)\transpose e(t) = \xi(t+1)\transpose\xi(t) \leq \|\xi(t+1)\|\|\xi(t)\|$. It follows that
\begin{equation*}
\|\xi(t)\|-\frac{\gamma/2}{\lambda+\gamma}\|y\|\leq (1-\eta\lambda-\eta\gamma)^{t-\tau_1}\Big(\|\xi(0)\|-\frac{\gamma/2}{\lambda+\gamma}\|y\|\Big)\leq (1-\eta\lambda)^{t-\tau_1}(\|e(\tau_1)\|+\|y\|).
\end{equation*}
The bound \eqref{eq:xit_ubd} holds when $t\in [\tau_1+\tau_3,T]$ with $\tau_3 = \frac{c_3}{\eta\lambda}$ for a constant $c_3$. Finally, the bounds \eqref{eq:at_lbd} and \eqref{eq:xit_ubd} hold when $t\in[\tau,T]$ with $\tau = \tau_1 + \max(\tau_2,\tau_3)$.
\end{proof}

\begin{lemma}
\label{lma:ST_sT}
Under the conditions of Theorem \ref{thm:lin_align}, suppose $T = \lfloor \frac{S_\lambda}{\lambda} \rfloor = C_T\frac{\sqrt{p}}{\eta\sqrt{n\gamma}}$. Then we have $\hat{S}(T) \geq \tilde{c}\frac{\sqrt {p\gamma}}{\eta}\|\hat{s}(T)\|$, where $C_T$ and $\tilde{c}$ are positive constants.
\end{lemma}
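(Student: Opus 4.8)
\section*{Proof plan for Lemma~\ref{lma:ST_sT}}

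The plan is to exploit that on the regularization window $\{0,1,\dots,T\}$ the schedule is constant, $\lambda(k)=\lambda$, so all the products collapse to geometric factors $\prod_{i<k\le T}(1-\eta\lambda(k))=(1-\eta\lambda)^{T-i}$, giving
$$\hat s(T)=\sum_{i=0}^T(1-\eta\lambda)^{T-i}e(i),\qquad \hat S(T)=\sum_{i=0}^T(1-\eta\lambda)^{T-i}\,e(i)\transpose XX\transpose s(i-1),$$
with $s(i-1)=\sum_{j=0}^{i-1}e(j)$. Everything is then driven by the orthogonal decomposition $e(t)=a(t)\bar y+\xi(t)$ of \cref{lma:decomp_1}: since $\lambda=L\gamma$ for large $L$, that lemma gives $a(t)\ge\frac{L-1}{L+1}\|y\|=\Theta(\|y\|)$ and $\|\xi(t)\|\le\frac{1}{L+1}\|y\|$ for all $t\in[\tau,T]$, so in this window $e(t)$ is essentially the vector $\|y\|\bar y=-y$ up to a relative error of order $1/L$.

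First I would record the upper bound on $\|\hat s(T)\|$, which is immediate: $\|e(i)\|=O(\|y\|)$ for every $i\le T$ by \cref{thm:lin_conv}, and $\sum_{i=0}^T(1-\eta\lambda)^{T-i}\le(\eta\lambda)^{-1}$, so $\|\hat s(T)\|\le C\,\|y\|/(\eta\lambda)$. For the lower bound on $\hat S(T)$ I would write $s(i-1)=A(i-1)\bar y+\Xi(i-1)$ with $A(i-1)=\sum_{j<i}a(j)$ and $\Xi(i-1)=\sum_{j<i}\xi(j)$, and expand
$$e(i)\transpose XX\transpose s(i-1)=a(i)A(i-1)\,\bar y\transpose XX\transpose\bar y+(\text{three cross/error terms}).$$
The leading term is sign-definite and bounded below using $\bar y\transpose XX\transpose\bar y\ge\lambda_{\min}(XX\transpose)\ge\gamma$: for $i$ in the bulk of $[\tau,T]$ (say $i\ge 2\tau$) one has $a(i)=\Theta(\|y\|)$, and since all but the first $\tau$ summands of $A(i-1)$ are positive and $\Theta(\|y\|)$ while the initial $\tau$ terms contribute at most $O(\tau\|y\|)$ in absolute value, $A(i-1)\gtrsim i\|y\|$. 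Hence the leading term is $\gtrsim i\,\gamma\|y\|^2$. The cross terms are controlled with $\|XX\transpose\|\le(1+\eps)\gamma$ together with $\|\xi(i)\|\le\frac{1}{L+1}\|y\|$ and $\|\Xi(i-1)\|\le O(\tau\|y\|)+\frac{i}{L+1}\|y\|$; taking $L$ large (and $i\ge 2\tau$) makes their total a small fraction of the leading term, so that $e(i)\transpose XX\transpose s(i-1)\ge c_0\,i\,\gamma\|y\|^2$ on $[2\tau,T]$. The finitely many terms with $i<2\tau$ carry exponentially small weights $(1-\eta\lambda)^{T-i}$ and are negligible.

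Summing against the geometric weights via the substitution $k=T-i$, the dominant contributions come from small $k$ where $i\approx T$, so $\sum_{i=2\tau}^T(1-\eta\lambda)^{T-i}i\gtrsim T\sum_{k\ge0}(1-\eta\lambda)^k\gtrsim T/(\eta\lambda)$, using $T\gg1/(\eta\lambda)$ (which holds since $p\gg n/\gamma$). This yields $\hat S(T)\gtrsim\gamma\|y\|^2\,T/(\eta\lambda)$. Dividing by the $\|\hat s(T)\|$ bound cancels the common factor $1/(\eta\lambda)$ and gives $\hat S(T)/\|\hat s(T)\|\gtrsim\gamma\|y\|\,T$; substituting $\|y\|=\Theta(\sqrt n)$ and $T=C_T\sqrt p/(\eta\sqrt{n\gamma})$ collapses this to $\gamma\|y\|T=\Theta(\sqrt{p\gamma}/\eta)$, which is exactly the claimed bound $\hat S(T)\ge\tilde c\,\frac{\sqrt{p\gamma}}{\eta}\|\hat s(T)\|$.

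The main obstacle is the lower bound on $\hat S(T)$: one must show that the genuinely quadratic, sign-definite term $a(i)A(i-1)\,\bar y\transpose XX\transpose\bar y$ survives after subtracting the cross terms, in which the accumulated perturbation $\Xi(i-1)=\sum_{j<i}\xi(j)$ can grow linearly in $i$. The key is that this growth carries the small coefficient $1/(L+1)$ from \cref{lma:decomp_1}, so a sufficiently large regularization constant $L$ forces the perturbation to stay a controlled fraction of the main term; the short initial segment $i<2\tau$ must be quarantined both because $a$ and $\xi$ are not yet controlled there and because $A(i-1)$ has not accumulated enough positive mass, and it is disposed of by noting that its geometric weight is exponentially small.
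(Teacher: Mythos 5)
Your proposal is correct and follows essentially the same route as the paper's proof: both rely on the decomposition $e(t)=a(t)\bar y+\xi(t)$ from Lemma~\ref{lma:decomp_1} to isolate a sign-definite leading term of order $i\gamma\|y\|^2$ in $e(i)\transpose XX\transpose\sum_{j<i}e(j)$, control the cross terms via the $(\gamma,\eps)$-isometry and the smallness of $\xi$, quarantine the early (uncontrolled) indices using the exponentially small geometric weights, and then combine the resulting lower bound $\hat S(T)\gtrsim\gamma\|y\|^2T/(\eta\lambda)$ with the upper bound $\|\hat s(T)\|\lesssim\|y\|/(\eta\lambda)$. The only differences are cosmetic (you aggregate $s(i-1)$ into $A(i-1)\bar y+\Xi(i-1)$ where the paper bounds the pairwise terms $e(i)\transpose XX\transpose e(j)$, and you cut at $i\ge 2\tau$ where the paper cuts at $i\ge T/2$), neither of which changes the argument.
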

\begin{proof}
Notice that
\begin{equation*}
    e(i)\transpose XX\transpose e(j) \geq \gamma e(i)\transpose e(j) - \|e(i)\|\|e(j)\|\|XX\transpose-\gamma I\| \geq \gamma e(i)\transpose e(j) - \epsilon\gamma\|e(i)\|\|e(j)\|.
\end{equation*}
For $i\in[T/2,T]$ and $\tau$ defined in Lemma \ref{lma:decomp_1}, we have
\begin{equation}
\label{eq:eXXe}
\begin{aligned}
    e(i)\transpose XX\transpose \sum_{j<i}e(j)
    &= e(i)\transpose XX\transpose \sum_{\tau\leq j<i}e(j) + e(i)\transpose XX\transpose \sum_{j<\tau}e(j) \\
    &\geq \sum_{\tau\leq j<i}\Big(\gamma e(i)\transpose e(j) - \epsilon\gamma\|e(i)\|\|e(j)\|\Big) - 2\gamma \sum_{j<\tau}\|e(i)\|\|e(j)\| \\
    &\geq \sum_{\tau\leq j<i}\gamma\Big(a(i)a(j) - \|\xi(i)\|\|\xi(j)\| - \epsilon\|e(i)\|\|e(j)\|\Big) - 2c\tau\gamma \|y\|^2 \\
    &\geq (i-\tau)\gamma\Big[\Big(\frac{\lambda-\gamma}{\lambda+\gamma}\Big)^2\|y\|^2-\Big(\frac{\gamma}{\lambda+\gamma}\Big)^2\|y\|^2-\epsilon\Big(\frac{\lambda+\gamma/2}{\lambda-\gamma/2}\Big)^2\|y\|^2- \frac{2c\tau}{i-\tau}\|y\|^2\Big] \\
    &\geq \frac{T}{8}\gamma\|y\|^2 = \frac{C_T}{8}\frac{\sqrt{p}}{\eta\sqrt{n\gamma}}\gamma\|y\|^2 \\
    &\geq c\frac{\sqrt{p\gamma}}{\eta}\|y\|.
\end{aligned}
\end{equation}
The second inequality is the orthogonal decomposition of $e(i)$ and $\|e(i)\|\leq c\|y\|$ given by \eqref{eq:reg_error_bd}. The third inequality is by \eqref{eq:at_lbd}, \eqref{eq:xit_ubd} and \eqref{eq:et_y_bd} from Lemma \ref{lma:decomp_1}. The fourth inequality is by $\lambda=\Omega(\gamma)$, $i-\tau \geq T/4$ and the fact that $\tau/(i-\tau)$ is small ($p=\Omega(n)$). The last inequality is by $\|y\|=\Theta(\sqrt n)$. Therefore,
\begin{equation*}
\begin{aligned}
    \hat{S}(T) &= \sum_{i=0}^T(1-\eta\lambda)^{T-i}e(i)\transpose XX\transpose \sum_{j<i}e(j) \\
    &= \sum_{i=T/2}^T(1-\eta\lambda)^{T-i}e(i)\transpose XX\transpose \sum_{j<i}e(j) + (1-\eta\lambda)^{T/2}\sum_{i=0}^{T/2}(1-\eta\lambda)^{T/2-i}e(i)\transpose XX\transpose \sum_{j<i}e(j) \\
    &\geq \sum_{i=T/2}^T(1-\eta\lambda)^{T-i}c\frac{\sqrt{p\gamma}}{\eta}\|y\| + (1-\eta\lambda)^{T/2}\sum_{i=0}^{T/2}(1-\eta\lambda)^{T/2-i}c'T\gamma\|y\|^2 \\
    &\geq \frac{c}{2}\frac{\sqrt{p\gamma}}{\eta}\frac{\|y\|}{\eta\lambda} -(1-\eta\lambda)^{T/2}\frac{c'T\gamma\|y\|^2}{\eta\lambda} \\
    &\geq \frac{c}{4}\frac{\sqrt{p\gamma}}{\eta}\frac{\|y\|}{\eta\lambda}
\end{aligned}
\end{equation*}
where the last inequality is by $(1-\eta\lambda)^{T/2}\ll 1$ when $p=\Omega(n)$. On the other hand,
\begin{equation*}
\begin{aligned}
\|\hat{s}(T)\|
&\leq \sum_{i=0}^T(1-\eta\lambda)^{T-i}\|e(i)\|\leq \frac{c}{\eta\lambda}\|y\|.
\end{aligned}
\end{equation*}
Combining the above inequalities gives the proof.
\end{proof}

\begin{proof}[Proof of Theorem \ref{thm:lin_align}]
First, notice that $\lambda(t)=0$ when $t>T$. By Theorem \ref{thm:lin_conv} we have that the prediction error converges to zero exponentially fast, or $\|e(t+1)\|\leq (1-\eta\gamma/2)\|e(t)\|$. It follows that $\hat{S}(t)\to \hat{S}(\infty)$ and $\hat{s}(t)\to \hat{s}(\infty)$ as $t\to\infty$. By Lemma \ref{lma:suf_cond}, we know it suffices to show $\hat{S}(\infty)\geq C\frac{\sqrt{p\gamma}}{\eta}\|\hat{s}(\infty)\|$ with some constant $C$. Since
\begin{equation*}
\hat{S}(\infty) = \sum_{i=0}^\infty(1-\eta\lambda)^{(T-i)_+}e(i)\transpose XX\transpose \sum_{j<i} e(j) = \hat{S}(T)+ \sum_{i>T}e(i)\transpose XX\transpose \sum_{j<i}e(j)
\end{equation*}
and
\begin{equation*}
\hat{s}(\infty) = \sum_{i=0}^\infty(1-\eta\lambda)^{(T-i)_+}e(i)= \hat{s}(T)+ \sum_{i>T}e(i),
\end{equation*}
by Lemma \ref{lma:ST_sT}, it suffices to show
\begin{equation}
\label{eq:exxe_e_bd}
    \sum_{i>T}e(i)\transpose XX\transpose \sum_{j<i}e(j) \geq C\frac{\sqrt{p\gamma}}{\eta}\sum_{i>T}\|e(i)\|.
\end{equation}
We write $g = XX\transpose \sum_{j<T}e(j)$. Then we have
\begin{equation}
\label{eq:g_lbd}
\begin{aligned}
\|g\| &\geq \lambda_{\min}(XX\transpose) \Big[\Big\|\sum_{\tau\geq j<T}e(j)\Big\|-\sum_{j<\tau}\|e(j)\|\Big] \\
&\geq\lambda_{\min}(XX\transpose) \Big[\sum_{\tau\geq j<T}a(j)-\sum_{j<\tau}\|e(j)\|\Big] \\
&\geq \gamma\Big[(T-\tau)\Big(\frac{\lambda-\gamma}{\lambda+\gamma}\Big)\|y\| - \tau c\|y\|\Big]
\end{aligned}
\end{equation}
and
\begin{equation}
\label{eq:g_ubd}
\begin{aligned}
\|g\| &\leq \|XX\transpose\| \Big(\sum_{j<\tau}\|e(j)\| + \sum_{\tau\geq j<T}\|e(j)\|\Big) \\
&\leq (1+\epsilon)\gamma\Big[\tau c\|y\|+ (T-\tau)\Big(\frac{\lambda+\gamma/2}{\lambda-\gamma/2}\Big)\|y\|\Big]
\end{aligned}
\end{equation}
where we use the bounds \eqref{eq:at_lbd} and \eqref{eq:et_y_bd} from Lemma \ref{lma:decomp_1}. We further denote $\alpha(t) = \bar{g}\transpose e(t)$ where $\bar{g} = g/\|g\|$. Following the same calculation in \eqref{eq:eXXe}, we have
\begin{equation*}
\begin{aligned}
    g\transpose e(T)
    &= e(T)\transpose XX\transpose \sum_{j<T}e(j) \\
    &\geq (T-\tau)\gamma\Big[\Big(\frac{\lambda-\gamma}{\lambda+\gamma}\Big)^2\|y\|^2-\Big(\frac{\gamma}{\lambda+\gamma}\Big)^2\|y\|^2-\epsilon\Big(\frac{\lambda+\gamma/2}{\lambda-\gamma/2}\Big)^2\|y\|^2- \frac{2c\tau}{T-\tau}\|y\|^2\Big].
\end{aligned}
\end{equation*}
Then
\begin{equation*}
\begin{aligned}
\frac{\alpha(T)}{\|e(T)\|} &\geq \frac{g\transpose e(T)}{\|g\|\|e(T)\|} \\
&\geq \frac{(T-\tau)\gamma\Big[\Big(\frac{\lambda-\gamma}{\lambda+\gamma}\Big)^2\|y\|^2-\Big(\frac{\gamma}{\lambda+\gamma}\Big)^2\|y\|^2-\epsilon\Big(\frac{\lambda+\gamma/2}{\lambda-\gamma/2}\Big)^2\|y\|^2- \frac{2c\tau}{T-\tau}\|y\|^2\Big]}{(1+\epsilon)\gamma\Big[\tau c\|y\|+ (T-\tau)\Big(\frac{\lambda+\gamma/2}{\lambda-\gamma/2}\Big)\|y\|\Big]\times\Big(\frac{\lambda+\gamma/2}{\lambda-\gamma/2}\Big)\|y\|} \\
&\geq \frac{\Big[\Big(\frac{\lambda-\gamma}{\lambda+\gamma}\Big)^2-\Big(\frac{\gamma}{\lambda+\gamma}\Big)^2-\epsilon\Big(\frac{\lambda+\gamma/2}{\lambda-\gamma/2}\Big)^2- \frac{2c\tau}{T-\tau}\Big]}{(1+\epsilon)\Big[\frac{\tau c}{T-\tau}+ \Big(\frac{\lambda+\gamma/2}{\lambda-\gamma/2}\Big)\Big]\times\Big(\frac{\lambda+\gamma/2}{\lambda-\gamma/2}\Big)}.
\end{aligned}
\end{equation*}
Notice that $T/\tau = \Omega(\sqrt{p/n})$, so that when $p/n$, $\lambda/\gamma$ are large and $\epsilon$ is small, we have
\begin{equation}
\label{eq:alphaT_eT}
    \alpha(T)\geq \frac{3}{4}\|e(T)\|.
\end{equation}
In order to obtain the lower bound on $\alpha(t)$ for all $t\geq T$, we multiply $\bar{g}\transpose$ on both sides of \eqref{eq:err_update}. Notice $\lambda(t) = 0$ and apply the bounds \eqref{eq:J1_bd}, \eqref{eq:J2_bd}, \eqref{eq:J3_bd} and \eqref{eq:xwwx_bd}. We have that
\begin{equation*}
\begin{aligned}
    \alpha(t+1)
    &\geq (1-\eta\gamma)\bar{g}\transpose e(t) - \eta\|\frac{1}{p}XW(0)\transpose W(0)X\transpose-\gamma I_d\|\|e(t)\|\\
    &\quad-\eta(\|J_1(t)\|+\|J_2(t)\|+\|J_3(t)\|)\|e(t)\| \\
    &\geq (1-\eta\gamma)\alpha(t) - \frac{\eta\gamma}{4}\|e(t)\|
\end{aligned}
\end{equation*}
or for $t\geq T$,
\begin{equation}
    \alpha(t) \geq (1-\eta\gamma)^{t-T}\alpha(T) - \frac{\eta\gamma}{4}\sum_{i=T}^{t-1}(1-\eta\gamma)^{t-i}\|e(i)\|.
\end{equation}
Taking the sum over $t>T$, we have
\begin{equation}
\label{eq:sum_alphat}
\begin{aligned}
    \sum_{t>T}\alpha(t)
    &\geq \sum_{t>T}(1-\eta\gamma)^{t-T}\alpha(T) - \frac{\eta\gamma}{4}\sum_{t>T}\sum_{i=T}^{t-1}(1-\eta\gamma)^{t-i}\|e(i)\|  \\
    &\geq \frac{1-\eta\gamma}{\eta\gamma}\alpha(T) - \frac{\eta\gamma}{4}\sum_{i>T}\|e(i)\|\sum_{t>i}(1-\eta\gamma)^{t-i} \\
    &\geq \frac{1-\eta\gamma}{\eta\gamma} \Big(\alpha(T)-\frac{\eta\gamma}{4}\sum_{i>T}\|e(i)\|\Big) \\
    &\geq \frac{1-\eta\gamma}{\eta\gamma}(\alpha(T)-\frac{1}{2}\|e(T)\|) \\
    &\geq \frac{1-\eta\gamma}{4\eta\gamma}\|e(T)\|.
\end{aligned}
\end{equation}
The second inequality follows from switching the order of sums. The fourth inequality is by exponential convergence after $T$ steps. The last inequality is by \eqref{eq:alphaT_eT}. With the above inequalities, we are ready to bound the left hand side of \eqref{eq:exxe_e_bd}, obtaining
\begin{equation}
\label{eq:lhs_bound}
\begin{aligned}
    \sum_{i>T}e(i)\transpose XX\transpose \sum_{j<i}e(j)
    &=\sum_{i>T}e(i)\transpose XX\transpose \sum_{j<T}e(j) +\sum_{i>T}e(i)\transpose XX\transpose \sum_{j\geq T}e(j) \\
    &\geq \sum_{t>T}\alpha(t)\|g\|-2\gamma \Big(\sum_{i\geq t}\|e(i)\|\Big)^2 \\
    &\geq \frac{1-\eta\gamma}{4\eta\gamma}\|e(T)\|\gamma\Big[(T-\tau)\Big(\frac{\lambda-\gamma}{\lambda+\gamma}\Big)\|y\| - \tau c\|y\|\Big] - 2\gamma\frac{4}{\eta^2\gamma^2}\|e(T)\|^2\\
    &\geq \frac{1-\eta\gamma}{4\eta\gamma}\|e(T)\|\gamma\Big[(T-\tau)\Big(\frac{\lambda-\gamma}{\lambda+\gamma}\Big)\|y\| - \tau c\|y\| - \frac{64}{\eta\gamma(1-\eta\gamma)}\|y\|\Big] \\
    &\geq \frac{1-\eta\gamma}{4\eta\gamma}\|e(T)\|\gamma\frac{T}{2}\|y\|  = \frac{1-\eta\gamma}{4\eta\gamma} \|e(T)\|\gamma \frac{C_T}{2}\frac{\sqrt p}{\eta\sqrt{n\gamma}}\|y\|\\
    &\geq C\frac{1-\eta\gamma}{4\eta\gamma}\frac{\sqrt {p\gamma}}{\eta}\|e(T)\|.
\end{aligned}
\end{equation}
The second inequality is by \eqref{eq:sum_alphat} and \eqref{eq:g_lbd}. The third inequality is by $\|e(T)\|\leq 2\|y\|$. The last inequality is by $\|y\| = \Theta(\sqrt n)$. On the other hand,
\begin{equation}
\label{eq:rhs_bound}
\sum_{i>T}\|e(i)\| \leq \sum_{i>T}(1-\eta\gamma/2)^{i-T}\|e(T)\| =\frac{1-\eta\gamma/2}{\eta\gamma/2}\|e(T)\|
\end{equation}
Combining \eqref{eq:lhs_bound} and \eqref{eq:rhs_bound} implies \eqref{eq:exxe_e_bd}, as desired.
\end{proof}

\section{Technical Lemmas}

In this section, we list technical lemmas that are used in our proofs, 
with references. 
The first is a variant of the Restricted Isometry Property that bounds the spectral norm of a random Gaussian matrix around $1$ with high probability.

\begin{lemma}[\citealp{hand2018global}]\label{lem:RIP}
    Let $A\in \Rmn$ has \iid $\calN(0,1/m)$ entries. Fix $0<\eps<1$, $k < m$, and a subspace $T\subseteq \R^n$ of dimension $k$, then there exists universal constants $c_1$ and $\gamma_1$, such that with probability at least $1-(c_1/\varepsilon)^k e^{-\gamma_1\eps m}$,
    \begin{align*}
    (1-\eps)\|v\|^2_2 \leq \|Av\|^2_2 \leq (1+\eps)\|v\|^2_2, \quad \forall v\in T.
    \end{align*}
\end{lemma}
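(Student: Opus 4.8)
The plan is to reduce the claim to a uniform concentration bound for an $m\times k$ Gaussian matrix and then upgrade a pointwise estimate to all of $T$ via a net. First I would fix an orthonormal basis $U\in\R^{n\times k}$ of $T$, so that $U\transpose U=I_k$, and write each $v\in T$ as $v=Uw$ with $w\in\Rk$ and $\|v\|=\|w\|$. Setting $B\defeq AU\in\R^{m\times k}$, rotational invariance of the Gaussian law shows $B$ again has \iid $\calN(0,1/m)$ entries: its $(i,j)$ entry is $a_i\transpose u_j$, with $a_i\sim\calN(0,I_n/m)$ the rows of $A$ and $u_j$ the orthonormal columns of $U$, whence $\mathrm{Cov}(a_i\transpose u_j,a_i\transpose u_{j'})=u_j\transpose u_{j'}/m=\delta_{jj'}/m$. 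Since $\|Av\|=\|Bw\|$, the statement is equivalent to $\|B\transpose B-I_k\|_{\mathrm{op}}\le\eps$, and by homogeneity it suffices to bound $\bigl|\,\|Bw\|^2-1\,\bigr|$ uniformly over the unit sphere $S^{k-1}$.

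Next I would establish the pointwise estimate. For a fixed unit vector $w$, the scalars $\sqrt m\,b_i\transpose w$ (where $b_i$ are the rows of $B$) are \iid standard Gaussians, so $m\|Bw\|^2\sim\chi^2_m$. Standard chi-squared concentration (Laurent--Massart) then gives, for $t\in(0,1)$,
\begin{equation*}
\prob\bigl(\bigl|\,\|Bw\|^2-1\,\bigr|>t\bigr)\le 2e^{-\gamma m t^2}
\end{equation*}
for a universal constant $\gamma>0$. I would then take a $\delta$-net $N$ of $S^{k-1}$ with $\delta$ a small fixed multiple of $\eps$; such a net exists with $|N|\le(1+2/\delta)^k\le(c_1/\eps)^k$. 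Applying the pointwise bound with $t\asymp\eps$ and a union bound over $N$ shows that, with probability at least $1-(c_1/\eps)^k e^{-\gamma_1\eps m}$, one has $\bigl|\,\|Bw\|^2-1\,\bigr|\le\eps/2$ for every $w\in N$ (the chi-squared tail naturally yields an $\eps^2 m$ exponent; in the fixed-$\eps$ regime in which the lemma is applied this coincides with the stated $\eps m$ form up to the universal constant).

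Finally I would upgrade from the net to the whole sphere using the symmetric-matrix net inequality: for $M=B\transpose B-I_k$ and $\delta<1/2$,
\begin{equation*}
\|M\|_{\mathrm{op}}\le(1-2\delta)^{-1}\max_{w\in N}\bigl|w\transpose M w\bigr|=(1-2\delta)^{-1}\max_{w\in N}\bigl|\,\|Bw\|^2-1\,\bigr|.
\end{equation*}
With $\delta\asymp\eps$ the prefactor $(1-2\delta)^{-1}$ equals $1+O(\eps)$, so the net-level bound $\eps/2$ yields $\|M\|_{\mathrm{op}}\le\eps$ on the good event, which is exactly the desired two-sided inequality for all $v\in T$. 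The main obstacle is this last extension step: passing from finitely many net points to the entire sphere while keeping the net cardinality at $(c_1/\eps)^k$, since that cardinality is precisely what fixes the final probability. The symmetric-matrix net inequality handles this cleanly and avoids a separate operator-norm bound on $B$; the only care needed is to coordinate the net scale $\delta$, the pointwise deviation level $t$, and the factor $(1-2\delta)^{-1}$ so that all thresholds collapse to the single parameter $\eps$.
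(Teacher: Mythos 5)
The paper never proves this lemma---it is imported verbatim from \citet{hand2018global} as a black-box tool in the appendix of technical lemmas---so there is no internal proof to compare against. Your argument is the standard one for subspace embeddings, and each step is executed correctly: rotating to an orthonormal basis so that $B=AU$ is an $m\times k$ matrix with \iid $\calN(0,1/m)$ entries (the covariance computation is right), observing $m\|Bw\|^2\sim\chi^2_m$ and invoking Laurent--Massart concentration (exactly the paper's Lemma~\ref{lem:chi-squared-tail}), taking a net, and upgrading via the symmetric quadratic-form inequality $\|M\|\leq(1-2\delta)^{-1}\max_{w\in N}|w\transpose Mw|$. One stylistic remark: choosing $\delta\asymp\eps$ is wasteful, since a fixed $\delta=1/4$ net of cardinality $9^k$ already suffices once the pointwise deviation level is $\eps/2$ and $(1-2\delta)^{-1}=2$; but your choice does reproduce the $(c_1/\eps)^k$ prefactor of the stated bound, so it is not an error.

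The one genuine discrepancy is the point you relegated to a parenthesis, and it deserves more: your proof establishes the failure probability $(c_1/\eps)^k e^{-\gamma_1\eps^2 m}$, not the stated $(c_1/\eps)^k e^{-\gamma_1\eps m}$, and no sharper proof can close this gap. For a single fixed unit vector $v\in T$, $\prob\bigl(\|Av\|^2>1+\eps\bigr)$ is of order $e^{-m(\eps-\log(1+\eps))/2}=e^{-\Theta(m\eps^2)}$ for small $\eps$, and this single-vector event already implies failure of the uniform statement; consequently a bound of the form $(c_1/\eps)^k e^{-\gamma_1\eps m}$ with universal constants cannot hold for all $0<\eps<1$ (fix $\eps$ small relative to $\gamma_1$ and let $m\to\infty$). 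So the first-power exponent in the statement as transcribed is too strong, and your gloss that the two forms ``coincide up to the universal constant'' is valid only when $\eps$ is bounded below by a fixed constant. That regime does cover most of the paper's uses (e.g., the fixed $\epsilon_0$ in the convergence arguments), but not Proposition~\ref{prop:isom}, whose requirement $d=\Omega(\frac{1}{\eps}\log\frac{n}{\delta}+\frac{n}{\eps}\log\frac{1}{\eps})$ inherits the first-power dependence and would degrade to $1/\eps^2$ scaling under the correct exponent. In short: your proof is the right proof; it is the transcribed statement that needs the $\eps^2$ correction, and you should say so explicitly rather than wave it off.
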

Let us take $k = n$ in \cref{lem:RIP} to get the following corollary.
\begin{corollary}\label{cor:RIP}
    Let $A\in \Rmn$ has \iid $\calN(0,1/m)$ entries. For any $0<\eps<1$, there exists universal constants $c_2$ and $\gamma_2$, such that with probability at least $1-(c_2/\eps)^d e^{-\gamma_2\eps m}$,
    \begin{align*}
        \|A\transpose A - I_m\|\leq \eps
    \end{align*}
\end{corollary}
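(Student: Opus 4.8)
The plan is to read off the corollary as the special case $k=n$ of Lemma \ref{lem:RIP}, taking the subspace $T$ to be all of $\R^n$, which has dimension $n$. This application is valid because the corollary is understood in the regime where the number of columns of $A$ is smaller than the number of rows $m$, so the hypothesis $k<m$ of the lemma is met. On the resulting event, whose probability is at least $1-(c_1/\eps)^n e^{-\gamma_1\eps m}$, the two-sided bound $(1-\eps)\|v\|_2^2 \leq \|Av\|_2^2 \leq (1+\eps)\|v\|_2^2$ holds simultaneously for every $v\in\R^n$.

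The only real step is to convert this uniform restricted-isometry inequality into an operator-norm bound. Writing the middle term as a quadratic form, $\|Av\|_2^2 = v\transpose A\transpose A\, v$, the two-sided bound is equivalent to
\[
\bigl| v\transpose (A\transpose A - I_n) v \bigr| \leq \eps \|v\|_2^2 \qquad \text{for all } v \in \R^n .
\]
Since $A\transpose A - I_n$ is symmetric, its spectral norm admits the variational description $\|A\transpose A - I_n\| = \sup_{v\neq 0} |v\transpose (A\transpose A - I_n) v| / \|v\|_2^2$. Feeding the displayed pointwise bound into this supremum yields $\|A\transpose A - I_n\| \leq \eps$ on the same event, and one simply sets $c_2 = c_1$, $\gamma_2 = \gamma_1$ to match the claimed failure probability.

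I expect no genuine obstacle here; the content is the single identity $\|M\| = \sup_{v\neq 0} |v\transpose M v|/\|v\|_2^2$ valid for symmetric $M$, together with the specialization $k=n$. The one point requiring care is the dimensional bookkeeping: the quadratic form lives on the column space of $A$, so the matrix being controlled is the $n\times n$ Gram matrix $A\transpose A$ and the comparison is to $I_n$, while the exponent in the failure probability is the number of columns of $A$---this is what becomes the $n$ of Proposition \ref{prop:isom} and the $d$ of inequality \eqref{eq:ww_p} in the two places the corollary is invoked.
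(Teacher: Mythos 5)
Your proposal is correct and matches the paper's (very terse) argument, which simply says to take $k=n$ in Lemma \ref{lem:RIP}; you supply the one missing step---the variational identity $\|M\|=\sup_{v\neq 0}|v\transpose Mv|/\|v\|_2^2$ for symmetric $M$---exactly as intended. Your observation that the identity matrix should be $I_n$ rather than $I_m$ and that the exponent counts the number of columns of $A$ correctly flags the notational slips in the corollary's statement.
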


Then following lemma gives tail bounds for $\chi^2$ random variables.
\begin{lemma}[\citealp{laurent2000adaptive}]\label{lem:chi-squared-tail}
    Suppose $X\sim \chi^2_p$, then for all $t\geq 0$ it holds
    \begin{align*}
        \prob\{X-p \geq 2\sqrt{pt} + 2t\} \leq e^{-t}
    \end{align*}
    and
    \begin{align*}
        \prob\{X-p \leq -2\sqrt{pt}\} \leq e^{-t}.
    \end{align*}
\end{lemma}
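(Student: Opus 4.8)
The plan is to prove both tails via the Chernoff (exponential Markov) method applied to the moment generating function of $X$. Writing $X = \sum_{i=1}^p Z_i^2$ with $Z_i \overset{\iid}{\sim} \calN(0,1)$, the MGF of a single squared Gaussian is $\E e^{sZ_i^2} = (1-2s)^{-1/2}$ for $s<1/2$, so by independence $\E e^{s(X-p)} = \exp\bigl(-sp - \tfrac{p}{2}\log(1-2s)\bigr)$ whenever $s<1/2$. Setting $\phi(s) = -sp - \tfrac{p}{2}\log(1-2s)$, the exponential Markov inequality gives, for any $z\geq 0$, the two bounds $\prob(X-p\geq z) \leq \exp\bigl(\phi(s)-sz\bigr)$ for $s\in(0,1/2)$ and $\prob(X-p\leq -z) \leq \exp\bigl(\phi(s)+sz\bigr)$ for $s<0$. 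The finiteness constraint $s<1/2$ is exactly what I must respect when optimizing.

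For the upper tail I would minimize $\phi(s)-sz$ over $s\in(0,1/2)$. The stationarity condition $\tfrac{p}{1-2s}=p+z$ yields the minimizer $s^\ast = \tfrac{z}{2(p+z)}\in(0,1/2)$, and substituting back gives the optimized exponent $-\tfrac{p}{2}\bigl(\delta-\log(1+\delta)\bigr)$ with $\delta=z/p$, the Cram\'er rate function for the chi-squared distribution. It then remains to check that the choice $z=2\sqrt{pt}+2t$ makes this exponent at most $-t$. Writing $a=\sqrt{t/p}$ so that $\delta=2a+2a^2$ and $t=pa^2$, the required inequality $\tfrac{p}{2}(\delta-\log(1+\delta))\geq t$ simplifies to $\log(1+2a+2a^2)\leq 2a$, i.e.\ $1+2a+2a^2\leq e^{2a}$, which is immediate from $e^{2a}=1+2a+2a^2+\tfrac{4}{3}a^3+\cdots$.

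For the lower tail the same computation over $s<0$ (equivalently $r=-s>0$, with minimizer $r^\ast=\tfrac{z}{2(p-z)}$ valid when $z<p$) produces the analogous optimized exponent $-\tfrac{p}{2}(\delta-\log(1+\delta))$, now with $\delta=-z/p<0$. Taking $z=2\sqrt{pt}$, i.e.\ $\delta=-2a$, the target $\tfrac{p}{2}(\delta-\log(1+\delta))\geq t$ becomes $-2a-\log(1-2a)\geq 2a^2$, which follows from $-\log(1-2a)=2a+2a^2+\tfrac{8}{3}a^3+\cdots\geq 2a+2a^2$ for $a\in[0,1/2)$. I would also dispatch the boundary regime separately: if $2\sqrt{pt}>p$ (equivalently $a>1/2$) then $p-2\sqrt{pt}<0$, the event $\{X\leq p-2\sqrt{pt}\}$ is empty, and the bound holds trivially.

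The only genuine work is the pair of elementary inequalities that convert the rate function $\delta-\log(1+\delta)$ into the explicit form $2\sqrt{pt}+2t$, and each is a one-line Taylor-series comparison. Consequently I expect the main obstacle to be purely organizational rather than analytic: carefully carrying out the two $s$-optimizations, verifying that each minimizer lies in the admissible range where the MGF is finite, and keeping the upper- and lower-tail sign conventions straight. (Alternatively, if one wishes to avoid explicit optimization, one can use the crude bound $\phi(s)\leq \tfrac{ps^2}{1-2s}$, obtained by the coefficientwise comparison $\sum_{k\geq 2}(2s)^k/k \leq \sum_{k\geq 2} 2^{k-1}s^k$, and then plug in a convenient value of $s$; the optimized route above is cleaner and is the one I would write up.)
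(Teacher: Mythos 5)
Your proof is correct: the MGF computation, the two Chernoff optimizations (with minimizers $s^\ast=\tfrac{z}{2(p+z)}$ and $r^\ast=\tfrac{z}{2(p-z)}$ lying in the admissible ranges), and the two Taylor comparisons $1+2a+2a^2\le e^{2a}$ and $-\log(1-2a)\ge 2a+2a^2$ all check out, and you correctly dispatch the degenerate regime (note only that at the boundary $2\sqrt{pt}=p$ the event is $\{X\le 0\}$, which still has probability zero since $X>0$ almost surely). There is, however, no in-paper proof to compare against: the paper states this lemma purely as a cited technical result from \citet{laurent2000adaptive}, so your argument substitutes for the external one. For what it is worth, the cited source proves the bound for general weighted chi-squared sums $\sum_i a_i Z_i^2$ via the cruder per-coordinate log-MGF estimate $\log\E e^{s(X-p)}\le \tfrac{ps^2}{1-2s}$ --- precisely the alternative route you mention parenthetically --- whereas your exact Cram\'er-type optimization is a slightly sharper and equally standard derivation that recovers the same constants in the unweighted case needed here.
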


For two independent random Gaussian vectors, their inner product can be controlled with the following tail bound.
\begin{lemma}[\citealp{gao2020model}]\label{lem:inner-product-tail}
    Let $X,Y\in\Rp$ be independent random Gaussian vectors where $X_r\sim\calN(0,1)$ and $Y_r\sim\calN(0,1)$ for all $r\in[p]$, then it holds
    \begin{align*}
        \prob(|X\transpose Y| \geq \sqrt{2pt} + 2t) \leq 2e^t.
    \end{align*}
\end{lemma}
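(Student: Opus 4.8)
The plan is to treat $S \defeq X\transpose Y = \sum_{r=1}^p X_r Y_r$ as a sum of $p$ independent, mean-zero, unit-variance sub-exponential random variables (each summand $X_r Y_r$ is a product of two independent standard Gaussians, hence symmetric about $0$ with $\E[X_r^2 Y_r^2]=1$) and to run a sharp Chernoff argument. The overall symmetry of $S$ will let me reduce the two-sided statement to a one-sided tail at the end, paying only a factor of $2$ (which is where the leading $2$ in the bound comes from).

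First I would compute the exponential moment exactly. Conditioning on $Y_r$ gives $\E\bigl[e^{\theta X_r Y_r}\mid Y_r\bigr] = e^{\theta^2 Y_r^2/2}$, and integrating out $Y_r\sim\calN(0,1)$ yields $\E\bigl[e^{\theta X_r Y_r}\bigr] = (1-\theta^2)^{-1/2}$ for $|\theta|<1$. By independence across $r$ this gives $\E\bigl[e^{\theta S}\bigr] = (1-\theta^2)^{-p/2}$ on $|\theta|<1$. Equivalently, the polarization identity $S = \tfrac14\|X+Y\|^2 - \tfrac14\|X-Y\|^2 = \tfrac12(U-V)$, with $U,V\sim\chi^2_p$ independent (since $X+Y$ and $X-Y$ are uncorrelated jointly Gaussian, hence independent), exposes the very same generating function and connects the problem directly to \cref{lem:chi-squared-tail}.

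Next I would apply the Chernoff bound $\prob(S\ge s)\le \exp\bigl(-\theta s - \tfrac{p}{2}\log(1-\theta^2)\bigr)$ and optimize over $\theta\in(0,1)$ at the threshold $s=\sqrt{2pt}+2t$. The analysis splits into the two regimes encoded by the two terms of the threshold. In the sub-Gaussian regime $t\lesssim p$, the choice $\theta\approx s/p$ (small) together with $-\tfrac12\log(1-\theta^2)\approx\tfrac12\theta^2$ makes the exponent $\approx -s^2/(2p)\le -t$, which is exactly the origin of the $\sqrt{2pt}$ term. In the sub-exponential regime $t\gtrsim p$, pushing $\theta\to 1^-$ lets the linear term $\theta s\approx 2t$ dominate the slowly diverging $-\tfrac{p}{2}\log(1-\theta^2)$, so the additive $2t$ absorbs the correction and the exponent is again $\le -t$. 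This delivers $\prob(S\ge\sqrt{2pt}+2t)\le e^{-t}$, and by symmetry of $S$, $\prob(|S|\ge\sqrt{2pt}+2t)\le 2e^{-t}$.

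The main obstacle is pinning down the sharp constants, i.e. the precise interplay of the two regimes as $\theta$ approaches the boundary $\theta=1$ of the domain where the MGF is finite; this boundary is precisely what produces the linear $2t$ term, and it must be controlled with a uniform upper bound on $-\tfrac12\log(1-\theta^2)$ rather than a crude Taylor truncation. I would also flag that the naive route — bounding $U$ and $-V$ separately by \cref{lem:chi-squared-tail} and adding — is clean but loses the constant, producing a threshold of order $4\sqrt{pt}+2t$ instead of $\sqrt{2pt}+2t$, because splitting discards the joint optimization; the sharp statement therefore genuinely requires the single combined generating function $(1-\theta^2)^{-p/2}$.
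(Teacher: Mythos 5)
Your MGF computation is correct and the plan is sound; note first that the paper itself contains no proof of this lemma---it is imported verbatim from \citet{gao2020model} in the technical appendix---so your self-contained Chernoff argument is a genuinely different (and more informative) route than the paper's citation, and it is essentially the polarization/chi-squared analysis in the style of Lemma~\ref{lem:chi-squared-tail} that underlies the cited source. One simplification you should adopt: the two-regime optimization and the boundary analysis near $\theta=1$ that you flag as ``the main obstacle'' can be bypassed entirely. Writing $S = X\transpose Y$, from $\E e^{\theta S}=(1-\theta^2)^{-p/2}$ and the elementary bound $-\log(1-u)\leq u/(1-u)$ with $u=\theta^2$ you get
\begin{equation*}
\log \E e^{\theta S} \;\leq\; \frac{p\theta^2}{2(1-\theta^2)} \;\leq\; \frac{p\theta^2}{2(1-\theta)}, \qquad 0<\theta<1,
\end{equation*}
which is exactly the sub-gamma condition with variance factor $p$ and scale parameter $1$; the standard Cram\'er-transform computation (the same one that proves Lemma~\ref{lem:chi-squared-tail}) then gives $\prob(S\geq \sqrt{2pt}+t)\leq e^{-t}$ in one stroke---strictly stronger than the threshold $\sqrt{2pt}+2t$ you need---and symmetry of $S$ doubles the constant for the two-sided statement. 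Two small corrections: (i) the lemma as printed says $2e^{t}$, which is a typo for $2e^{-t}$ (the bound as stated is vacuous, and $2e^{-t}$ is what you actually prove); (ii) in your closing remark, splitting $S=\tfrac14\|X+Y\|^2-\tfrac14\|X-Y\|^2=\tfrac12(U-V)$ and applying Lemma~\ref{lem:chi-squared-tail} to each half gives a one-sided threshold of $2\sqrt{pt}+t$, not $4\sqrt{pt}+2t$ (you dropped the factor $\tfrac12$), though your qualitative point stands: the split loses the factor $\sqrt{2}$ in the Gaussian term (and an extra union-bound factor in the probability), so the joint generating function $(1-\theta^2)^{-p/2}$ is indeed what buys the sharp sub-Gaussian constant.
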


\paragraph{Broader impact and future directions.}

Understanding the mechanisms of neurocomputation serves as a vital step in understanding human cognition, and feedback alignment was proposed as such a biologically plausible model based on the highly successful backpropagation algorithm from machine learning; the theoretical guarantees that we provide in this paper also bring closer the methodology of machine intelligence to that of human intelligence, adding new possibilities to the study of neurocomutation. A possible step forward in this direction could be the integration of Hebbian learning, which is a local learning rule based on synaptic plasticity, the adaptation of brain neurons during the learning process \citep{hebb2005organization}. This simple but powerful rule has been widely studied in deep learning, meta learning, and reinforcement learning \citep{mesnard2016towards,najarro2020meta,mahmoudi2013towards}. Moreover, it is also important to explore the intersection of neurocomputation with human cognition and behavior. For instance, existing works have studied methods of risk-sensitive reinforcement learning, which are able to produce agents with different risk preferences \citep{shen2014risk,fei2020risk,fei2021risk,fei2021exponential}, and neuroscientific experiments also pointed out the similarities between the strategies of the trained agents and human behavior under a variety of risk levels \citep{niv2012neural,shen2014risk}. Current algorithms rely on explicitly crafted utility functions with hand-tuned risk parameters, and model could be more biological plausible if the risk preference could be incorporated into the learning rules of deep neural networks for applications in deep reinforcement learning.

\newpage

\end{document}